\documentclass[11pt]{article}

\usepackage[margin=1in]{geometry}
\usepackage{amsmath,amssymb,amsthm}
\usepackage{booktabs}
\usepackage{graphicx}
\usepackage{xcolor}
\usepackage[round]{natbib}
\usepackage[colorlinks=true,linkcolor=blue,citecolor=blue,urlcolor=blue]{hyperref}

\newtheorem{theorem}{Theorem}
\newtheorem{proposition}{Proposition}
\newtheorem{lemma}{Lemma}
\newtheorem{corollary}{Corollary}
\theoremstyle{remark}
\newtheorem{remark}{Remark}

\newcommand{\dmax}{d_{\max}}
\newcommand{\Ireach}{\mathcal{I}}

\title{When a Verified World Model Still Loses: Play-Adequacy vs Prediction-Accuracy in LLM-Synthesized Code World Models}

\author{Javier Aguilar Mart\'in\\ AGILabs (\href{https://javieraguilar.ai}{javieraguilar.ai})}

\date{}

\begin{document}

\maketitle

\begin{abstract}
Large language models (LLMs) can synthesize the rules of a game as executable code --- a \emph{Code World Model} (CWM) --- which a classical planner then searches over. The synthesized model is typically accepted when it reaches high \emph{transition accuracy} on sampled trajectories. We argue that this acceptance criterion is the wrong notion of adequacy for planning.

Our perfect-information existence results are on LLM-synthesized CWMs. We isolate the precise causal magnitude with a hand-instrumented agent that is budget-matched and play-equivalent to the incomplete synthesized CWM, and confirm the effect end-to-end through the actual synthesis pipeline at the same budget, with confidence intervals (the synthesized incomplete CWM passes the transition gate only when a material-at-cap terminal is absent from its sample, and then loses at play, with a play cost at least as large). Our imperfect-information results pair an LLM-synthesis pipeline validation (Kuhn poker) with hand-instrumented witnesses that isolate the belief-function failure.

We find four things. (1) An LLM-synthesized CWM can pass a sampling gate at 100\% transition accuracy and be $\geq 98\%$ state-accurate on the distribution the planner actually visits, yet lose systematically at play --- because the less than 1\% it gets wrong is exactly the pivotal dynamics; the play cost of the omitted rule, isolated with the play-equivalent instrument, is $0.091$ (95\% CIs separated; seed-clustered 95\% CI $[0.065, 0.117]$ over 20 seeds, $n=4800$). We call this the \emph{verified-vs-correct gap}. (2) The harm from \emph{sampling-gate blindness} follows a quantitative law, $\mathrm{danger} = \mathrm{play\_cost} \times (1 - \mathrm{rarity})^N$, where $\mathrm{rarity}$ is the probability a random play-through triggers the omitted rule and $N$ is the gate size (the number of sampled play-throughs the gate checks). This law isolates one channel --- the gate failing to \emph{observe} a rule; it does not model the distinct, empirical synthesis-residual channel in which the rule is present in the sample yet the LLM still fails to encode it (finding (3)). The $(1 - \mathrm{rarity})^N$ gate-miss factor is \emph{proven exact} (i.i.d.\ Bernoulli); $\mathrm{play\_cost}$ is measured empirically --- with a provable upper bound (the planner's query-hit mass of the error region), a witness-certified lower bound, and an exact, provable value ($\tfrac12$) on the Beacon witness. This law predicts when sampling verification is blind: harm is negligible while the rule is common enough for the gate to catch it, rises through a threshold as the rule becomes rare, and saturates at the full $\mathrm{play\_cost}$ once it almost always escapes.

(3) The failure is not repaired by more data. LLM CWM synthesis behaves as \emph{rule translation} rather than \emph{rule inference}: it correctly encodes rules it was given and, across the regimes we tested, did not infer the omitted rule --- under the GPT-5.x family (mini and large) and across the Section~\ref{sec:repair} data regimes (rules-given, no-rules, naive DAgger, proper DAgger, targeted on-manifold examples), independent of the quantity of on-manifold example transitions. (4) The same mechanism recurs on the \emph{inference} function of imperfect-information CWMs. We prove a coverage bound --- a size-$N$ random gate is identifying for the inference function (under a detectability hypothesis) when $N \gtrsim b^{\dmax}$ --- which explains why shallow games show no inference gap: Kuhn poker is provably covered at the deployed gate, and for Leduc poker the bound certifies the competent-relevant subset of information sets (info-sets) we sampled --- not all reachable info-sets, which the deployed gate is too small to guarantee. A companion enumeration-free bound certifies the undetected-error mass of any gate-passing inference function ($\leq \ln(1/\delta)/N$ under the gate's sampling distribution), extending the certificate to games too large to enumerate. We then hand-construct a minimal witness, Beacon (not a synthesized CWM), that escapes the coverage bound: a verified-but-wrong \texttt{infer\_states} function passes the inference gate (0/8156 mismatches) yet loses every game (0.000 vs fair baseline 0.500), with the danger law recurring on this new axis. Finally, we close this witnessed gap with the gate implied by the certificate: replacing just one of 2000 random play-throughs with a held-out play-through from the trusted reference planner changes the verdict from accept (0/8156 mismatches) to reject (4/8190), because the policy-guided check reaches the deployment-critical region with probability 1.

A policy-free bounded adversarial search supplies a second closure: it rejects the Beacon instrument after expanding only 17 states (34 belief checks), though such search can be exponential in general games.

Taken together, these results suggest that adequacy for LLM-synthesized world models used in planning should be measured on the search distribution or by play directly, not by prediction accuracy on sampled transitions, and that completing the specification is more effective than attempting repair by example.
\end{abstract}

\section{Introduction}

\subsection{The Code World Model paradigm}

A central observation in recent work on LLMs for game playing is that a small language model plus a well-specified world model plus classical search can outperform a much larger model used as a direct policy. The \emph{Code World Model} (CWM) paradigm makes this concrete: an LLM is prompted with a game's rules and some sampled trajectories and asked to synthesize a fully executable implementation of the game's transition dynamics. A classical planner --- typically Monte Carlo Tree Search (MCTS) \citep{coulom2006mcts} --- then searches over the synthesized world model, interacts with a referee (the true game), and is evaluated in an arena.

We reproduce this baseline on tic-tac-toe and Connect Four (Section~\ref{sec:known-games}): LLM-synthesized CWMs refined to transition accuracy 1.0 and paired with UCT-MCTS (Monte Carlo tree search with the UCT---upper confidence bounds applied to trees---selection rule \citep{kocsis2006bandit}) dominate the same LLM used as a direct policy by wide margins (e.g., 29--1 in Connect Four), reproducing the direction of the result in \citet{lehrach2025cwm} on known games. Synthesis is trivially cheap: total API cost across all runs in this paper is approximately \$2, with roughly \$0.001--0.005 per arena game for the LLM-as-policy baseline and synthesis a one-off cost.

\subsection{The implicit trust step}

Accepting a synthesized world model involves a gatekeeping step: the CWM is refined in a sandbox until it achieves transition accuracy 1.0 on a set of randomly sampled trajectories. This gate is computationally cheap and is the only barrier between synthesis and deployment in the planner.

Our central question is: \textbf{does passing this gate certify that the CWM is adequate for planning?}

The concern is structural --- it arises from the setup itself, not from any particular model or random draw (Figure~\ref{fig:reach}). A planner does not play randomly --- it concentrates search on states it deems strategically significant. If the random-trajectory gate and the planner's search distribution diverge systematically, there could exist a CWM that passes the gate yet is wrong precisely on the states that matter for competent play. We call this the \emph{verified-vs-correct gap}.

\begin{figure}[ht]
\centering
\includegraphics[width=0.86\textwidth]{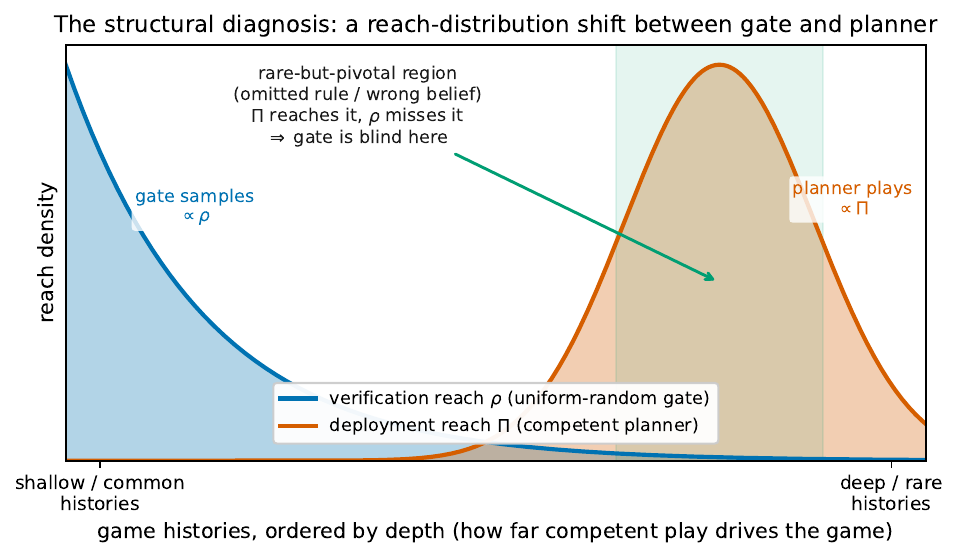}
\caption{The structural diagnosis underlying every gap in this paper, schematically. The verification policy $\rho$ (uniform-random gate) places its mass on shallow, common histories; the deployment policy $\Pi$ (competent planner) places its mass on deep, rare histories. A rare-but-pivotal region --- where an omitted transition rule (Section~\ref{sec:rare-rule}) or a wrong belief function (Section~\ref{sec:beacon}) lives --- is reached by $\Pi$ but almost never sampled by $\rho$, so the gate is structurally blind there. This is the same situation as beliefs about rarely-reached game positions being unconstrained in classical game theory (off-equilibrium-path beliefs in extensive-form games); a sampling gate has no analogous refinement to pin down that region. (Schematic, not data.)}
\label{fig:reach}
\end{figure}

\subsection{A first look: when the gate is identifying}

Our first experiments, across three game families and two knowledge regimes (rules given / rules withheld), find that the feared gap does not appear on small, fully-specified games (Section~\ref{sec:null-gap}). Whenever a CWM passes the random-trajectory gate, it is also correct on the MCTS-visited distribution; whenever it is wrong on the search distribution, it also fails the gate. For these games the random sample is \emph{identifying} --- no compact wrong hypothesis fits all training trajectories yet diverges elsewhere. This is an honest null result, and we report it as such, because understanding \emph{when} the gate is identifying is necessary to understand when it is not.

\subsection{When the gate fails: rare rules and the danger law}

The null result on small games points to the condition under which the gate can be fooled: a rule that random play almost never triggers but competent play reliably seeks out. We engineered a minimal instrument satisfying this condition (Section~\ref{sec:rare-rule}): a variant of the game army5x5a augmented with a \emph{material-at-cap} tiebreak rule whose material-terminal rarity under random play is 2.5\% (the rate at which the rule decides the game) yet which decides roughly 50\% of competent games.

A CWM that omits this rule passes the gate at transition accuracy 1.0 and is $\geq 98\%$ state-accurate on the search distribution, yet is systematically outplayed (win rate 0.404 [0.384, 0.424] vs fair baseline 0.495 [0.475, 0.515], Wilson 95\% intervals, n=4800; it loses 1.6 games for every one it wins). State accuracy is blind to the omission (\emph{dilution}: the handful of wrong states is averaged into a large pool of correct ones, so the aggregate barely moves); play is not.

We then quantify when this can happen via a law that relates harm to gate size and rule rarity (Section~\ref{sec:danger-law}), and show that the gap cannot be repaired by providing more example transitions (Section~\ref{sec:repair}).

\subsection{Extension to imperfect information}

The same mechanism appears on the \emph{inference} half of an imperfect-information CWM. Such a model must also implement an \texttt{infer\_states} function --- given a sequence of observations, reconstruct the set of possible hidden states --- and this function is gated separately. We prove (Section~\ref{sec:imperfect}) that the inference gate is identifying when the game is shallow enough, explain why poker games fall below this threshold, and construct a minimal game (Beacon) in which a verified-but-wrong \texttt{infer\_states} passes the inference gate yet loses every game. We then show experimentally that both a held-out policy-guided check and a bounded policy-free falsifier close this specific witnessed gap.

\subsection{Contributions}

This paper makes six contributions. The first three establish the perfect-information story --- the gap, the quantitative law that governs it, and why it cannot be repaired by example --- and the last three carry the same mechanism onto imperfect-information CWMs and unify the two halves of the contract:

\begin{enumerate}
\item \textbf{The verified-vs-correct gap (Section~\ref{sec:rare-rule}):} A gate-passing, $\geq 98\%$-state-accurate CWM that is systematically outplayed (win rate 0.404 [0.384, 0.424] vs fair baseline 0.495 [0.475, 0.515], Wilson 95\% intervals that do not overlap, n=4800; 1.6 losses per win). The gap arises from a rare-but-pivotal rule omitted from the specification and invisible to random-trajectory sampling. We also document the honest null --- small fully-specified games show no gap --- which clarifies the boundary conditions.

\item \textbf{A quantitative danger law (Section~\ref{sec:danger-law}):} $\mathrm{danger} = \mathrm{play\_cost} \times (1 - \mathrm{rarity})^N$. The $(1 - \mathrm{rarity})^N$ gate-miss factor is proven exact under i.i.d.\ Bernoulli sampling (Proposition~\ref{prop:gatemiss}); $\mathrm{play\_cost}$ is empirical --- but not opaquely so: we prove the upper bound $\mathrm{play\_cost} \leq \mu_{\mathrm{query}}(E)$, the planner's query-hit mass of the error region (Proposition~\ref{prop:playcostbound}), certify a lower bound by witness ($\geq 0.065$ at seed-clustered 95\%), and prove the exact value $\tfrac12$ on Beacon (Proposition~\ref{prop:beaconexact}), leaving only the exact constant at scale empirical. The law predicts a threshold below which verification is safe and above which harm saturates at the full play cost.

\item \textbf{Translation, not inference (Section~\ref{sec:repair}):} LLM CWM synthesis behaves as rule translation under the tested regimes. Across the two model sizes tested (mini, large) and every data regime we ran (naive DAgger, proper DAgger, targeted on-manifold examples), the omitted rule was not recovered from example transitions. Artificial off-manifold repair data actively corrupts synthesis.

\item \textbf{The coverage bound (Section~\ref{sec:coverage-bound}, Theorem~\ref{thm:coverage}) and Beacon (Section~\ref{sec:beacon}):} We prove that a size-$N$ random inference gate is identifying when $N \gtrsim b^{\dmax} \cdot p_{\text{chance}}^{-1} \cdot \log|\Ireach|$ --- where $b$ is the per-info-set branching factor, $\dmax$ the maximum player-action depth, $p_{\text{chance}}$ the smallest deal probability, and $|\Ireach|$ the number of reachable info-sets (all defined in Section~\ref{sec:coverage-bound}) --- which explains the absence of an inference gap in Kuhn and Leduc. We complement it with an enumeration-free certificate (Theorem~\ref{thm:errormass}): any gate-passing inference function has undetected-error hit mass $\leq \ln(1/\delta)/N$ under the gate's sampling distribution --- a bound whose constants involve no enumeration of info-sets, hence apply to games of arbitrary size, at the price of certifying bounded error mass rather than full coverage. We then construct Beacon, the minimal game that escapes the coverage bound (and realizes the reach-ratio blow-up that makes the error-mass certificate's transfer to play vacuous), obtaining 0/8156 gate mismatches alongside 0.000 play win rate. A budget-matched policy-guided gate (1999 random plus one trusted-reference play-through) rejects the same instrument with 4/8190 mismatches, and a bounded policy-free falsifier rejects after 17 expanded states (34 belief checks).

\item \textbf{The belief model is invisible to a transition gate (Section~\ref{sec:gate-blindness}):} We prove (Proposition~\ref{prop:belieftrans}) that the information partition encoded by \texttt{observation}/\texttt{infer\_states} appears in no transition tuple, so a transition-accuracy gate cannot detect a wrong belief model. We demonstrate it on masked tic-tac-toe: withholding the masking rule yields a transition-gate-perfect (1.000) but belief-wrong (\texttt{observation\_rate} 0.020, \texttt{inference\_rate} 0.180) synthesis, with a synthesized single-seed corroboration on Beacon (transition gate 1.000, \texttt{inference\_rate} 0.000). With Beacon, this gives both faces of belief-model verification --- a wrong belief can lose at play \emph{and} is invisible to a transition gate.

\item \textbf{Unification (Sections~\ref{sec:imperfect}, \ref{sec:related}):} The verified-vs-correct gap and the inference gap are one mechanism on two halves of the CWM contract. A size-$N$ random gate fails exactly on events of random-reach probability $\lesssim 1/N$ that competent play nonetheless reaches; the harm is $(\text{consequence}) \times P(\text{gate miss})$, with $P(\text{gate miss}) \approx e^{-Nr}$ for transition rules and $\approx e^{-N \cdot p_{\text{chance}} \cdot b^{-\dmax}}$ per inference info-set.
\end{enumerate}

\section{Setup and Methods}

\subsection{The CWM contract}

A Code World Model is a Python module implementing the following interface (the \emph{contract}):

\begin{itemize}
\item \texttt{initial\_state() $\to$ state} --- returns the starting game state.
\item \texttt{legal\_actions(state) $\to$ list[action]} --- returns the list of legal actions.
\item \texttt{apply\_action(state, action) $\to$ state} --- returns the next state.
\item \texttt{is\_terminal(state) $\to$ bool} --- returns True if the game is over.
\item \texttt{returns(state) $\to$ list[float]} --- returns the utility vector (only valid on terminal states).
\end{itemize}

Each state is a Python dictionary containing at minimum \texttt{board} and \texttt{current\_player} fields. For imperfect-information games (Section~\ref{sec:imperfect}) the contract is extended with:

\begin{itemize}
\item \texttt{observation(state, player) $\to$ obs} --- returns the observation available to a player at a state.
\item \texttt{initial\_states(obs, player) $\to$ list[state]} --- returns all states consistent with a first observation.
\item \texttt{infer\_states(history\_obs, player) $\to$ list[state]} --- returns all states consistent with a sequence of observations.
\end{itemize}

This contract mirrors the minimal interface required by UCT-MCTS for perfect-information games and determinized MCTS for imperfect-information games.

\subsection{Synthesis and the gate}
\label{sec:gate}

An LLM (Azure OpenAI Global Standard deployments \texttt{gpt-5.4}, \texttt{gpt-5.4-mini}, \texttt{gpt-5-nano}; snapshot \texttt{gpt-5.4-2026-03-05}) is prompted with the game's \texttt{RULES\_TEXT} and a set of random-policy trajectories. The prompt asks the model to synthesize a complete Python module implementing the contract.

The synthesized module is then refined in a sandbox: a referee evaluates the sampled trajectories through both the synthesized CWM and the ground-truth oracle, and any discrepancy produces an error message fed back to the LLM for correction. Unless otherwise stated, refinement re-uses the same trajectory sample each iteration (the synthesis-pipeline study of Section~\ref{sec:repair} is the exception --- there refinement draws a fresh batch each iteration). Refinement continues until the synthesized CWM achieves \emph{transition accuracy 1.0} on the trajectory sample, or until a refinement budget is exhausted. Passing this test we call \emph{passing the gate}.

Transition accuracy is the fraction of (state, action, next-state) tuples from randomly sampled play-throughs on which the synthesized CWM agrees with the ground truth. This is the gate's own lens; Sections~\ref{sec:gap} and~\ref{sec:danger-law} argue it is the wrong lens for play-adequacy.

For imperfect-information games, an analogous \emph{inference gate} measures accuracy on the \texttt{infer\_states} function over the observations generated by random play.

\subsection{Planning}
\label{sec:planning}

For perfect-information games, planning uses UCT-MCTS with a configurable simulation budget (200--600 per move, depending on the experiment; see individual sections). The planner operates entirely on the synthesized CWM and never queries the true game during search; only the arena referee is the true game.

For imperfect-information games, planning uses determinized MCTS: at each decision point, the planner samples a set of possible hidden-state completions from \texttt{infer\_states}, runs UCT-MCTS on each determinization, and aggregates votes \citep{cowling2012ismcts,long2010pimc}. The planner is hardened to tolerate a faulty \texttt{infer\_states} (raising exceptions, returning empty lists) via a legal-fallback mechanism, ensuring arena runs do not abort when the CWM is deliberately instrumented to be wrong.

Two alternatives deserve a word on why they are not the deployed planner. \emph{ISMCTS} \citep{cowling2012ismcts} is the natural CWM-compatible refinement of determinization-per-decision; we use the simpler variant to reuse the perfect-information harness, not because it is preferable --- a choice that is, if anything, conservative for the Beacon result (Beacon is pure disambiguation, the regime where ISMCTS most cleanly outperforms determinization, so the gap would be at least as clean under ISMCTS). \emph{CFR} \citep{zinkevich2007cfr} is the equilibrium standard; it is not the deployed planner but we do run it as an analysis tool --- \texttt{cwm.cfr} implements external-sampling MCCFR \citep{lanctot2009mccfr} and full-tree CFR+ \citep{tammelin2014cfrplus} over the same contract, used in Section~\ref{sec:coverage-bound} to measure coverage against equilibrium reach.

Throughout the paper, \textbf{competent play} denotes play under the deployed planner $\Pi$ (UCT-MCTS for perfect information, determinized MCTS for imperfect information, at the stated simulation budget). It is a heuristic proxy for skilled play, \emph{not} an equilibrium strategy: ``competent'' should be read as ``what the deployed planner does,'' never as ``optimal.'' All gap, danger, and reach statements below are made with respect to this deployed-planner distribution unless we explicitly say ``equilibrium.''

\subsection{Metrics}

We distinguish two families of metrics throughout the paper:

\textbf{The gate's lens (prediction accuracy):}
\begin{itemize}
\item \emph{Transition accuracy} --- agreement rate on (state, action, next-state) under random play.
\item \emph{State accuracy} --- fraction of states in a distribution where the CWM agrees with the ground truth on all contract outputs. Write $\mathrm{agree}(D)$ for state accuracy measured on distribution $D$; below, $D_{\text{gate}}$ is the random-trajectory (gate) distribution, $D_{\text{cwm}}$ the distribution the synthesized CWM's own planner visits, and $D_{\text{truth}}$ the distribution the ground-truth planner visits.
\item \emph{gap} (a.k.a.\ \emph{gap\_cwm}) --- $\mathrm{agree}(D_{\text{gate}}) - \mathrm{agree}(D_{\text{cwm}})$: how much more the CWM agrees with the truth where it was verified than where it actually plays. This is the quantity the verified-vs-correct gap is about.
\item \emph{gap\_truth} --- $\mathrm{agree}(D_{\text{gate}}) - \mathrm{agree}(D_{\text{truth}})$: the same difference but against the \emph{ground-truth} planner's visited distribution rather than the CWM's own.
\item \emph{gap\_max} --- the maximum of \emph{gap} across the synthesis seeds of a condition (its worst-case value, used when we report ``gap $\approx 0$, gap\_max $0.016$'').
\item \emph{observation\_rate} --- fraction of sampled (state, player) pairs on which the synthesized \texttt{observation} returns the ground-truth observation; the discriminator for a correct belief model (Section~\ref{sec:gate-blindness}).
\item \emph{inference\_rate} --- fraction of sampled observation histories on which the synthesized \texttt{infer\_states} returns exactly the ground-truth set of consistent states. For both belief-surface rates, a \emph{crashing} synthesized function is a synthesis-robustness failure, not a wrong belief: execution errors are excluded from the rate denominators and reported separately, so a rate scores only the cases that actually ran (a wrong-but-running output still counts as a mismatch).
\end{itemize}

\textbf{The decision-relevant lens (play performance):}
\begin{itemize}
\item \emph{Win rate} --- fraction of games won by the CWM+MCTS agent versus a ground-truth+MCTS opponent in an arena refereed by the true game, counting draws as $\tfrac12$ (the per-game score is $X \in \{0, \tfrac12, 1\}$), with Wilson score 95\% confidence intervals \citep{wilson1927} computed as if the score were binomial. The binomial treatment is conservative, not an approximation error: a $\{0, \tfrac12, 1\}$ score with mean $p$ has variance $p(1-p) - \tfrac14 P(\mathrm{draw}) \leq p(1-p)$, so the reported intervals are never too narrow, and every CI-separation claim below holds a fortiori.
\end{itemize}

Fairness baselines are established by running truth-vs-truth arenas (ground-truth+MCTS against itself), which should produce win rates near 0.5 for balanced games. Deviations from 0.5 in the fairness baseline indicate start-order imbalance or search-budget asymmetry; we report them alongside each play result.

\subsection{Experimental configuration}
\label{sec:config}

Table~\ref{tab:config} collects the fixed components of the pipeline so the experiments are reproducible from the paper alone; per-experiment quantities (simulation budget, gate size $N$, game) are stated in each section, and the full harness is in \texttt{docs/EXPERIMENTS.md} and the cited scripts.

\begin{table}[ht]
\centering
\small
\begin{tabular}{@{}l p{0.72\textwidth}@{}}
\toprule
Component & Setting \\
\midrule
Synthesizer LLM & Azure OpenAI \texttt{gpt-5.4} / \texttt{-mini} / \texttt{-nano}, snapshot \texttt{gpt-5.4-2026-03-05} \\
Synthesis prompt & \texttt{RULES\_TEXT} + a set of random-policy trajectories; asks for a full contract module \\
Gate sampler & uniform-random policy (\texttt{rng.choice(legal\_actions)}), seeded per run \\
Gate criterion & transition accuracy $=1.0$ on the sampled (state, action, next-state) tuples \\
Refinement & re-uses the same sample each iteration (fresh batch only in Section~\ref{sec:repair}); budget $\leq 5$ iters \\
Held-out gate & none separate --- the training trajectories \emph{are} the gate (Section~\ref{sec:danger-law}) \\
Planner (perfect info) & UCT-MCTS, exploration constant $c = 1.41 \approx \sqrt{2}$ (see note), uniform-random rollout to terminal \\
Planner (imperfect info) & determinized MCTS \citep{cowling2012ismcts,long2010pimc} with legal-fallback hardening \\
Selection tie-break & first argmax of UCT; move chosen = most-visited child (first on ties) \\
Simulation budget & 200--600 per move (stated per experiment) \\
Arena & start side alternated every game ($i \bmod 2$); distinct RNG seeds per agent and arena \\
State accuracy & agreement with ground truth on \emph{all} contract outputs at a state \\
Terminal-legal convention & \texttt{legal\_actions} on terminal states excluded (planner never queries it) \\
\bottomrule
\end{tabular}
\caption{Fixed experimental configuration. Per-experiment values (budget, gate $N$, game) are stated in context.}
\label{tab:config}
\end{table}

\paragraph{Note on the exploration constant.} We use $c = 1.41 \approx \sqrt{2}$, the canonical UCT value \citep{kocsis2006bandit}, as a fixed convention rather than a tuned hyperparameter --- search strength is set by the simulation budget, and the planner serves only as a reproducible fixed-strength instrument. The $\sqrt{2}$ constant is derived for rewards in $[0, 1]$; our \texttt{returns} lie in $\{-1, 0, +1\}$, so a strictly range-calibrated value would be $\approx 2\sqrt{2}$. We verified that this does not affect the reported quantities: across the perfect-information games, $c = 1.41$ and $c = 2\sqrt{2}$ yield identical win/draw/loss outcomes against fixed opponents (uniform-random and, on tic-tac-toe, a perfect minimax solver) at 200--600 simulations, and on Beacon --- the adversarial instrument for the coverage gap --- the planner follows an identical forced line under both constants. Trajectories on the open games diverge only by selecting among equally-optimal moves, leaving play strength and gap measurements unchanged, so no re-run with a recalibrated constant is needed.

\subsection{Games}
\label{sec:games}

We use the following games, selected to span known, novel, and partially-known regimes:

\begin{itemize}
\item \textbf{Tic-tac-toe} --- well-known; used to verify the CWM paradigm.
\item \textbf{Connect Four} --- well-known; used to verify the paradigm and as a negative control for the danger law.
\item \textbf{Generalized tic-tac-toe 6$\times$6 win-4} --- a parameterized variant ($m,n,k$ game with $m=n=6$, $k=4$); model has some prior knowledge.
\item \textbf{army5x5a} --- a generalized chess game from the DeepMind CWM paper (arXiv:2510.04542, Appendix H.5): a 5$\times$5 board with infantry, cavalry, and general pieces, win by capturing the opponent's general. The paper's public release (arXiv, 2025-10-06) post-dates the GPT-5.4 knowledge cutoff (2025-08-31), so army5x5a falls outside the training window; a declarative probe independently confirmed the model does not know the detailed movesets --- asked to state the board size, piece counts, movement offsets, win condition, and starting position or decline each explicitly, the model answered ``I do not know'' to all five parts, while a positive-control question on Kuhn poker in the same session was answered with complete, correct recall (question and full response persisted in \texttt{results/declarative\_recall\_probe.json}; \texttt{scripts/declarative\_recall\_probe.py}) --- making it a genuine translation target.
\item \textbf{army5x5a + material-at-cap} --- the above, with an added tiebreak rule: if the game reaches the ply cap (100 plies by default) with both generals alive, the player with more material wins. This is the primary instrument for the rare-rule gap.
\item \textbf{Trike} \citep{erickson2020trike} --- an abstract combinatorial game in the \emph{wrong-prior} regime: in a declarative probe that allows declining, the model declines to state the mechanics (``I do not reliably know,'' all five parts; \texttt{results/declarative\_recall\_probe.json}), yet a no-rules synthesis produces confidently wrong mechanics that fail the gate in every seed (0/5, Section~\ref{sec:null-gap}) --- the wrong prior manifests under synthesis pressure, not in declarative recall. Real rules: place a disc on an empty cell on the shared pawn's line, then move the pawn to that disc; game ends when the pawn is surrounded; score = discs adjacent to the pawn.
\item \textbf{Kuhn poker} \citep{kuhn1950poker} --- a minimal imperfect-information game; used to validate the imperfect-information pipeline.
\item \textbf{Leduc poker} \citep{southey2005leduc} --- a slightly larger poker game (6-card deck, community card, two betting rounds); used for the coverage-bound corollary.
\item \textbf{Beacon} --- a minimal game constructed specifically to instantiate the positive imperfect-information gap; described in Section~\ref{sec:imperfect}.
\end{itemize}

Non-triviality of the novel games is confirmed empirically: MCTS beats random play from both sides on generalized tic-tac-toe 6$\times$6, army5x5a, and Trike with zero losses (Section~\ref{sec:null-gap}).

\subsection{Cost}

API synthesis is trivially cheap: approximately \$0.043--\$0.135 per game family for the known-game runs; roughly \$0.81 total for the gap-grid across three games and ten seeds. The CPU bottleneck is MCTS. For the danger law (Section~\ref{sec:danger-law}) we exploit the fact that $\mathrm{play\_cost}$ is approximately constant by measuring it precisely once at scale (240 games, 600 simulations) and sweeping $\mathrm{rarity}$ cheaply (3000 random games per cap setting, no MCTS required). All results and reproduction commands are in \texttt{docs/EXPERIMENTS.md}.

\section{The Gap: Accuracy \texorpdfstring{$\neq$}{≠} Play-Adequacy}
\label{sec:gap}

\subsection{Known games reproduce the paradigm but do not stress the gate}
\label{sec:known-games}

LLM-synthesized CWMs on tic-tac-toe and Connect Four pass the transition gate in 0 refinement iterations and play at well above baseline performance. For completeness (Table~\ref{tab:knowngames}):

\begin{table}[ht]
\centering
\small
\resizebox{\textwidth}{!}{%
\begin{tabular}{llcccccr}
\toprule
Game & Synthesizer & Refine iters & Trans.\ acc. & CWM W/D/L & Base illegal & CWM illegal & Total cost \\
\midrule
Tic-tac-toe  & gpt-5.4-mini & 0 & 1.0 & 18 / 10 / 2 & 6 & 0 & \$0.043 \\
Tic-tac-toe  & gpt-5-nano   & 0 & 1.0 & 21 / 8 / 1  & 5 & 0 & \$0.043 \\
Connect Four & gpt-5.4-mini & 0 & 1.0 & 29 / 0 / 1  & 0 & 0 & \$0.135 \\
Connect Four & gpt-5-nano   & 0 & 1.0 & 30 / 0 / 0  & 2 & 0 & \$0.132 \\
\bottomrule
\end{tabular}%
}
\caption{Known-game reproduction (30 games each, seed 7; CWM agent = synthesized model + MCTS; baseline = direct LLM policy).}
\label{tab:knowngames}
\end{table}

The CWM+MCTS agents dominate the direct LLM policy, replicating the paradigm's core claim. That these models reach transition accuracy 1.0 in zero refinements is itself revealing: on well-known games, the model is almost certainly \emph{recalling} the rules rather than inferring them from trajectories. Here ``accuracy 1.0 on sampled trajectories'' likely coincides with global correctness for the right reason --- but that coincidence says nothing about whether the gate is reliable in general.

\subsection{The state-agreement gap does not appear on small complete-rules games}
\label{sec:null-gap}

To measure the gap properly, we ran a grid across three knowledge regimes and two model sizes (5 synthesis seeds each, 20 self-play games, 300 simulations, train-games 40; Table~\ref{tab:nullgap}):

\begin{table}[ht]
\centering
\small
\begin{tabular}{lllccccc}
\toprule
Game & Regime & Synth & gap mean & gap max & gate-pass & median refine iters & exec-err \\
\midrule
gen\_tictactoe & correct prior & mini & 0.000 & 0.001 & 5/5 & 0 & 0 \\
gen\_tictactoe & correct prior & nano & 0.000 & 0.000 & 5/5 & 0 & 0 \\
army5x5a       & no prior      & mini & 0.002 & 0.008 & 4/5 & 0 & 0 \\
army5x5a       & no prior      & nano & n/a   & n/a   & 0/5 & -- & 0 \\
trike          & wrong prior   & mini & 0.000 & 0.000 & 4/5 & 1 & 0 \\
trike          & wrong prior   & nano & 0.000 & 0.000 & 5/5 & 0 & 0 \\
\bottomrule
\end{tabular}
\caption{State-agreement gap across knowledge regimes and model sizes.}
\label{tab:nullgap}
\end{table}

We report this as an honest null. In every regime the outcome is binary: either the CWM is correct on every evaluated state (no observed divergence on the gate, search, and truth-search distributions), with gap $\approx 0$, or it fails the gate entirely. No CWM passes the gate yet is wrong on the MCTS-visited distribution. The same pattern holds in a \texttt{-{}-no-rules} variant (synthesis from trajectories alone, with \texttt{RULES\_TEXT} withheld): gen\_tictactoe passes in 2/5 seeds via recall, with gap 0; army5x5a and Trike fail the gate entirely (0/5).

For the one game small enough to enumerate, gate-passing certifies \emph{global} correctness on reachable states --- a \emph{proof}, not just an observation. On tic-tac-toe we synthesize a CWM, confirm it passes the random gate (accuracy 1.0), then check it against the truth over the \textbf{entire reachable state space} by breadth-first enumeration (5478 reachable states, 16167 transitions): \textbf{zero mismatches on the search-relevant relation} --- \texttt{legal\_actions} on non-terminal states, \texttt{apply\_action} on every (state, legal action), \texttt{is\_terminal}, and \texttt{returns} (\texttt{scripts/exhaustive\_verify\_tictactoe.py}). (The enumeration also surfaces 880 \texttt{legal\_actions}-on-terminal-state divergences --- the synthesized code omits the \texttt{is\_terminal} guard --- but a planner never queries \texttt{legal\_actions} on a terminal state, so these are a behaviourally-irrelevant convention artifact, excluded exactly as the gap measurement does.) This is the transition-function analogue of the inference-side \emph{coverage bound} we prove later (Section~\ref{sec:coverage-bound}): when the check covers the whole reachable relation, passing it certifies global correctness by exhaustion. For the larger games the reachable space is too large to enumerate cheaply --- Trike side-6 alone exceeds 3 million reachable states (measured), and army5x5a and generalized tic-tac-toe $6\times6$ are far larger --- so there the null is a statement about the evaluated distributions, not a proof; consistently, army5x5a (the largest) is the one game with a tiny residual (gap 0.002).

The diagnosis: for small, fully-specified games, the random-trajectory sample \emph{appears identifying on the evaluated distributions} --- no compact wrong hypothesis we observed fits 40 random trajectories yet diverges on the search distribution (proven by exhaustion for tic-tac-toe; an empirical statement on the evaluated distributions for the larger games above). The gate is not weak in these regimes; it is identifying. This makes the gap harder, not easier, to construct: we need a game where random and competent play genuinely diverge.

The null is informative in a second way: it shows that the binding constraint on small games is \emph{gate-attainability}, not gap size. nano fails army5x5a outright (0/5 gate passes) because the action encoding (a \texttt{from*25+to} integer plus a ply counter) is representationally complex; mini handles it (4/5). The knowledge regime matters less than model scale and encoding complexity --- consistent with the translation hypothesis (Section~\ref{sec:repair}).

\subsection{The rare-rule instrument: verified but wrong at play (headline result)}
\label{sec:rare-rule}

The null result on fully-specified games points to the necessary condition for a gap: a rule whose random-play incidence is near zero but whose competent-play incidence is high. We searched for such rules systematically.

\paragraph{The rarity$\leftrightarrow$consequence frontier.} We tested six rules across Connect Four and army5x5a (rarity = fraction of random games the rule decides; consequence = performance change between rule-aware and rule-blind MCTS on the true game; Table~\ref{tab:frontier}):

\begin{table}[ht]
\centering
\small
\begin{tabular}{lllc}
\toprule
Base & Rule & Rarity (random) & Consequence \\
\midrule
Connect Four & last-placer-on-full-board wins & 0\%  & none \\
Connect Four & corner 4-in-a-row is poison    & 3\%  & weak \\
Connect Four & top-centre fill wins           & 12\% & strong \\
Connect Four & vertical-3 in centre wins      & 23\% & strong \\
Connect Four & 2$\times$2 square wins         & 38\% & strong \\
army5x5a     & infantry breakthrough wins     & 75\% & strong \\
\bottomrule
\end{tabular}
\caption{The rarity$\leftrightarrow$consequence anti-correlation curve.}
\label{tab:frontier}
\end{table}

Across our six-rule probe set, the pattern is that anything a planner can force, random play also stumbles into. Connect Four admits no rule in the rare-and-consequential quadrant. A random-vs-MCTS game-length measurement (\texttt{scripts/divergence.py}) confirms the diagnosis: army5x5a stands out --- under competent self-play, 43\% of games run to the 100-ply cap versus 3\% under random play (mean length 52 vs 32 plies; the competent length distribution is strongly bimodal --- quick decisive wins or cap-length maneuvering) --- while Trike and generalized tic-tac-toe behave like Connect Four (no such divergence). The median game length is unstable under this bimodal distribution; the cap-hit rate is the robust signature, and it is the one the instrument construction relies on. A game where random and competent play visit very different parts of the state space is the necessary substrate.

\paragraph{The instrument.} We constructed a variant of army5x5a with a \emph{material-at-cap} tiebreak rule: if the game reaches the ply cap (100 plies) with both generals alive, the player with more pieces wins (rather than drawing). Two rates matter and we keep them distinct. The cap is reached in 5.2\% of random games (most are equal-material draws the rule leaves unchanged); the rule's \textbf{material-terminal rarity} --- the rate at which the material-at-cap branch produces a decisive, gate-observable result --- is $r = 0.0253$ (76 of 3000 random games; $\approx 2.5\%$). The danger law (Section~\ref{sec:danger-law}) uses this material-terminal rarity. The same rule decides approximately 50\% of competent games. Implementations: \texttt{groundtruth/gen\_chess\_material.py} with paired specifications \texttt{army5x5a\_material} (complete rules) and \texttt{army5x5a\_material\_incomplete} (base rules, omitting the material-at-cap tiebreak).

\paragraph{State accuracy is the wrong lens for play-adequacy.} A CWM that omits the material-at-cap rule passes the gate (transition accuracy 1.0), and the gap\_truth is approximately 0 across all seeds (Table~\ref{tab:gatepass}):

\begin{table}[ht]
\centering
\small
\begin{tabular}{lccp{6cm}}
\toprule
Condition (mini, 5 seeds) & gate-pass & gap\_truth & note \\
\midrule
incomplete (omits rule) & 2--3/5 & 0.000 & seeds that fail the gate do so because the rule appeared in their 40 training trajectories \\
complete (control)      & 5/5    & 0.000 & --- \\
\bottomrule
\end{tabular}
\caption{The incomplete CWM passes the gate with near-zero gap\_truth.}
\label{tab:gatepass}
\end{table}

The gate-passing-but-rule-blind seeds correspond to the event that the rule is \emph{absent} from the $N \approx 40$ training trajectories, which has probability $(1-r)^N$ (Proposition~\ref{prop:gatemiss}, Section~\ref{sec:danger-law}). Here the training sample doubles as the gate, so the proposition is instantiated at $N \approx 40$. On this event the sample is consistent with both the rule-bearing and the rule-omitting world model, so by the sample-identifiability argument (Section~\ref{sec:repair}) the rule is unidentifiable and the rule-blind CWM is admissible. When the rule \emph{does} appear in the sample --- probability $1-(1-r)^N$ --- synthesis is \emph{not} guaranteed to encode it (Section~\ref{sec:repair} shows examples do not reliably teach the rule); the observed seeds are consistent with the same Bernoulli miss mechanism. Here $N$ is the training sample rather than a separate validation gate.

The divergence region (ply-cap states with unequal material) is less than 1\% of visited states, and symmetric MCTS self-play tends toward equal material, so the states where the rule-blind CWM is wrong are barely sampled --- they are too small a fraction of visited states to move the aggregate state-accuracy number.

\paragraph{Play performance is the decision-relevant lens.} We report play in two clearly separated panels: a budget-matched, instrumented, CI'd headline (Panel A) and a budget-matched, CI'd \emph{synthesized}-pipeline replication (Panel B). We deliberately do not pool instrumented and synthesized evidence into one number.

\paragraph{Panel A --- budget-matched, instrumented, CPU-only (the headline causal claim).} Identical budget across arms ($n = 2400$ games per arm --- 20 seeds $\times$ 120 games per seed per arm, 4800 games in total --- at 600 simulations), arena refereed by the true game (army5x5a + material-at-cap), measured via \texttt{scripts/play\_cost\_ci.py} / \texttt{scripts/play\_cost\_blind3.py} (Table~\ref{tab:panelA}, visualized in Figure~\ref{fig:headline}):

\begin{table}[ht]
\centering
\small
\begin{tabular}{lc}
\toprule
Arena (true game = army5x5a + material-at-cap) & win rate [Wilson 95\%] \\
\midrule
truth-vs-truth (fair baseline)        & \textbf{0.495} [0.475, 0.515] \\
rule-blind vs truth (play cost)       & \textbf{0.404} [0.384, 0.424] \\
\bottomrule
\end{tabular}
\caption{Panel A: budget-matched, instrumented headline. The Wilson 95\% intervals do not overlap.}
\label{tab:panelA}
\end{table}

\begin{figure}[ht]
\centering
\includegraphics[width=0.62\textwidth]{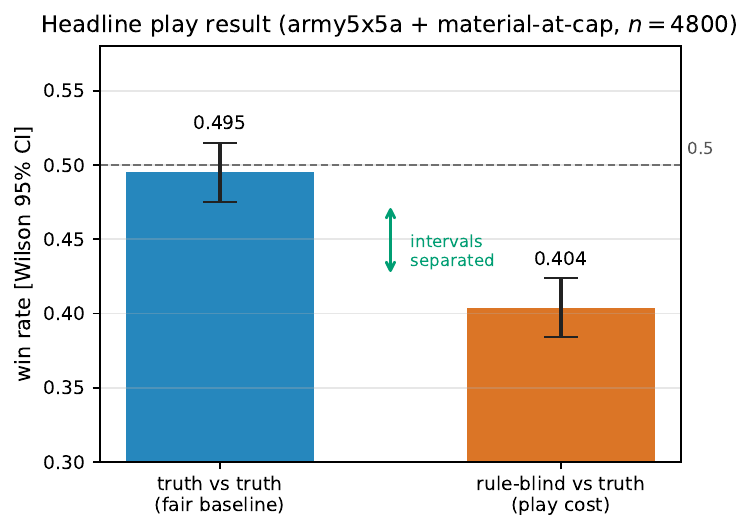}
\caption{The headline result of Table~\ref{tab:panelA}: a gate-passing, $\geq 98\%$ state-accurate, rule-blind instrument (right) loses to the fair baseline (left) at play. The Wilson 95\% intervals are visibly separated (fair lower bound $0.475 >$ rule-blind upper bound $0.424$), so the $0.091$ play cost is not a sampling artifact.}
\label{fig:headline}
\end{figure}

play\_cost = 0.091 (the fair-baseline win rate minus the rule-blind win rate). The Wilson 95\% intervals \textbf{do not overlap} (fair lower bound 0.475 $>$ rule-blind upper bound 0.424): the two intervals are \emph{CI-separated} --- they share no common value, so the gap is not a sampling artifact. We use ``CI-separated'' throughout for this non-overlap of 95\% confidence intervals.

\paragraph{Seed-clustered inference (the seed, not the game, as the unit).} The Wilson intervals above pool games and so treat each game as independent. Because games within a seed share a synthesis/instrument draw and an RNG stream, we also report a more conservative analysis that takes the \emph{seed} as the independent unit. The paired-by-seed difference (which cancels start-side and budget effects, identical across arms) has mean $0.091$ with a Student-$t$ 95\% interval of $[0.065, 0.117]$ over the twenty seeds ($\mathrm{sd} = 0.055$, $\mathrm{df} = 19$), \textbf{excluding zero} (Table~\ref{tab:perseed}). Per-seed differences range from $0.00$ (two seeds) to $0.19$, so the effect is heterogeneous across seeds --- and survives the clustering anyway. As a stability check, the estimate is robust to the seed budget: runs at 3 seeds ($n=360$: fair 0.493, blind 0.376), 5 seeds ($n=600$: 0.507, 0.376, play\_cost $0.131~[0.083, 0.179]$), and 20 seeds ($n=4800$, the numbers above) each yield a point estimate inside the preceding run's interval.

\begin{table}[ht]
\centering
\small
\begin{tabular}{lcccc}
\toprule
arm (20 seeds, 120 games/seed/arm) & mean & sd & min & max \\
\midrule
fair (truth-vs-truth)  & 0.495 & 0.035 & 0.429 & 0.542 \\
rule-blind vs truth    & 0.404 & 0.041 & 0.333 & 0.467 \\
paired difference (play cost) & \textbf{0.091} & 0.055 & 0.000 & 0.192 \\
\bottomrule
\end{tabular}
\caption{Per-seed summary underlying the seed-clustered interval (per-seed values in \texttt{results/play\_cost\_ci.json}). The paired difference has mean $0.091$, Student-$t$ 95\% CI $[0.065, 0.117]$ ($\mathrm{df}=19$), excluding zero.}
\label{tab:perseed}
\end{table}

The rule-blind agent is hand-written base army5x5a. Two distinct equivalences are at work and we keep them separate. (i) Base-vs-truth isolates the \emph{rule's} play cost \emph{by construction}: both arms are exact hand-written games that differ only in the material-at-cap rule, so the contrast is exact at any budget without invoking the LLM. (ii) The bridge to the synthesized pipeline is \emph{empirical}, not proven: base is play-equivalent to the incomplete synthesized CWM only up to the measured gap\_cwm $\approx 0$ (a tiny residual --- gap\_max $0.016$ on one seed, Section~\ref{sec:gap}), not by a proof of functional equivalence outside the rare region. This is why the headline causal claim rests on an instrumented, budget-matched, CI'd comparison rather than on the synthesized runs; Panel B then shows the \emph{synthesized} pipeline reproduces the play deficit in direction and mechanism, with its own confidence intervals, at Panel A's budget (at a larger magnitude --- see Panel B).

\paragraph{Panel B --- LLM-synthesized pipeline, at Panel A's budget, with CIs.} We ran the actual synthesis pipeline (GPT-5.4-mini) at Panel A's budget --- 20 seeds, 120 games/seed, 600 simulations, refinement capped at 5 iterations (\texttt{scripts/play\_cost\_synth\_ci.py}) --- synthesizing an incomplete-rules and a complete-rules CWM per seed against the same paired fair baseline. Unlike the hand-written instrument, a \emph{synthesized} incomplete CWM is rule-blind only when a material-at-cap terminal is absent from its gate sample; when such a terminal is present the material-region transitions are inexplicable to a program that omits the rule, so the gate cannot reach $1.0$. This is exactly what we observe, and it is the identifiability event made concrete (Table~\ref{tab:panelB}):

\begin{table}[ht]
\centering
\small
\setlength{\tabcolsep}{5pt}
\begin{tabular}{lccc}
\toprule
Synthesized CWM & gate-passing & win rate [Wilson 95\%] & play\_cost [95\% CI] \\
\midrule
incomplete-rules & 9/20 ($n=1080$) & 0.345 [0.317, 0.374] & \textbf{0.154} [0.135, 0.173] \\
complete-rules   & 20/20 ($n=2400$) & 0.471 [0.451, 0.491] & 0.024 [0.000, 0.047] \\
\bottomrule
\end{tabular}
\caption{Panel B: the synthesized pipeline replicates the play deficit end-to-end. The incomplete-rules CWM passes the gate only when no material-at-cap terminal is in its sample (all 9 gate-passing seeds; 0/10 material-terminal-present seeds reach gate 1.0, stalling at $0.83$--$0.999$), and when it does it loses --- play\_cost $0.154$, seed-clustered 95\% CI excluding zero. The complete-rules control is near parity (a small residual deficit, $0.024$, whose 95\% CI just excludes zero). Win-rate CIs are pooled Wilson over $n$ arena games; play\_cost CIs are seed-clustered.}
\label{tab:panelB}
\end{table}

The synthesized incomplete CWM, whenever it passes the gate, loses at play with $\mathrm{play\_cost} = 0.154$ ($[0.135, 0.173]$, seed-clustered over the 9 gate-passing seeds, excluding zero; every one of the 9 is positive, $0.11$--$0.19$), against the paired fair baseline over those same seeds ($0.499$). This replicates Panel A's finding in direction and mechanism --- a positive play deficit whose seed-clustered CI excludes zero --- measured end-to-end through synthesis rather than a hand-written instrument. The magnitude is \emph{larger}, not equal: Panel B's $[0.135, 0.173]$ is disjoint from Panel A's isolated rule cost ($0.091$, $[0.065, 0.117]$), as expected, since the synthesized program can carry imperfections beyond the omitted rule --- so the end-to-end harm is at least as large as the rule's isolated cost. The gate-pass structure is itself the finding: gate-passing coincides almost perfectly with material-terminal absence (9/9),\footnote{The coincidence is one-directional: every gate-passing seed is material-terminal-absent, but not every material-terminal-absent seed passes. Of the ten material-terminal-absent seeds, nine reach gate $1.0$; the tenth (seed 11) stalls at gate $0.808$ --- a base-game synthesis failure (the LLM does not reproduce even the rule-free base game to gate $1.0$), distinct from all three danger-law channels, which concern the omitted rule specifically. It is excluded from the play measurement, which conditions on gate $1.0$.} so material-terminal absence is \emph{necessary} for the gate to pass here (though not sufficient --- seed 11 fails for the synthesis reason above); the harm operates through the sampling-miss event of the danger law (Section~\ref{sec:danger-law}), and when a material-terminal \emph{is} sampled the gate rejects the rule-omitting program rather than certifying it. The complete-rules control is near parity (a small residual deficit, $\mathrm{play\_cost}\ 0.024$, whose 95\% CI just excludes zero).

\paragraph{Summary.} A world model can pass transition-accuracy verification (gate 1.0), be $\geq 98\%$ state-accurate on its own search distribution and exactly correct (gap\_truth = 0) on the truth-planner distribution, and yet lose systematically at play --- because the less than 1\% it gets wrong is exactly the pivotal tactic. Transition and state accuracy are the wrong adequacy criteria for planning; play performance is the right one.

\section{A Quantitative Law of Sampling-Verification Harm}
\label{sec:danger-law}

We now characterize \emph{when} the harm from a sampling gate is large. The key observation is that a gate of $N$ random play-throughs fails to observe a rule that fires with probability $r$ per play-through with exact probability $(1 - r)^N$. We formalize this gate-miss probability and measure the remaining empirical component.

It is worth stating up front exactly which channel of harm this law captures, since the paper documents three and only one is governed by the law. (a) The \emph{gate-miss} channel: the gate never observes the rule, with probability $(1-r)^N$ --- proven exact below. (b) The \emph{play-cost} channel: conditional on the rule being absent from the synthesized model, how much a competent opponent can exploit the omission --- measured empirically ($\mathrm{play\_cost}$). The danger law is the product of channels (a) and (b): the gate-miss probability times the play-cost. (c) A distinct \emph{synthesis-residual} channel, which the law does \emph{not} model: even when the rule \emph{does} appear in the gate sample, the LLM may still fail to encode it (Section~\ref{sec:repair} shows this is the common case under repair-by-example). The law is therefore a lower bound on total harm under an idealized synthesizer that always encodes what it observes; the synthesis-residual channel is reported separately and empirically.

\subsection{The gate-miss proposition}

\begin{proposition}[gate-miss probability]
\label{prop:gatemiss}
A sampling gate draws $N$ i.i.d.\ uniform-random play-throughs and accepts the CWM if none of them triggers the rule in question (and so reveals a discrepancy between the CWM and the true game). Since each play-through triggers the rule independently with probability $r$ (the ``rarity''), the probability the gate never observes the rule is exactly
\[
P(\mathrm{miss}) = (1 - r)^N \approx e^{-Nr}.
\]
\end{proposition}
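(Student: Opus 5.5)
The plan is to make the probabilistic model explicit and then compute directly. Let the gate draw play-throughs $\tau_1,\dots,\tau_N$ i.i.d.\ from the uniform-random policy's distribution over complete play-throughs (the gate sampler of Table~\ref{tab:config}). For each $i$, define the indicator $X_i = \mathbf{1}[\tau_i \text{ triggers the rule}]$. Here ``triggers'' means that some (state, action, next-state) tuple along $\tau_i$ lies in the region where the rule-omitting CWM and the true game disagree. Since each $X_i$ is a measurable function of $\tau_i$ alone, the $X_i$ are i.i.d.\ Bernoulli with parameter $r = P(X_1 = 1)$, and this is exactly the rarity as defined (e.g.\ the material-terminal rarity of Section~\ref{sec:rare-rule}).

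The miss event is $\{X_1 = \cdots = X_N = 0\}$. The statement stipulates that the CWM agrees with the truth off the rule's region, so the gate accepts exactly on this event. By independence, $P(\mathrm{miss}) = \prod_{i=1}^N P(X_i = 0) = (1-r)^N$, which gives exactness with no asymptotics involved.

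For the approximation, I would write $(1-r)^N = \exp(N\ln(1-r))$ and use $-r - r^2/(1-r) \le \ln(1-r) \le -r$ for $r \in [0,1)$. This yields the bounds $e^{-Nr}\,e^{-Nr^2/(1-r)} \le (1-r)^N \le e^{-Nr}$. Hence the relative error is $O(Nr^2)$, and the approximation is tight precisely in the regime of interest, $r \ll 1$ with $Nr = O(1)$. As a sanity check on the instrument, $r = 0.0253$ and $N = 40$ give $(1-r)^{40} \approx 0.36$ against $e^{-1.01} \approx 0.36$.

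The only real obstacle is definitional rather than computational: the proof must pin down that ``triggers'' is a property of the single play-through, and that the gate's acceptance coincides with the miss event. For the first point, I would define triggering as hitting the disagreement set, so that independence transfers from the play-throughs to the $X_i$. For the second, I need the modeling assumption that the CWM is otherwise correct on everything the gate checks; otherwise acceptance is only a sub-event of the miss, and the equality becomes $P(\mathrm{accept}) \le (1-r)^N$. I would state this caveat explicitly. I would also note that the refinement loop, which re-uses the same sample, does not change the event, because the sample is drawn once before any refinement.
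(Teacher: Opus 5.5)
Your proof is correct and matches the paper's argument: the per-play-through triggering indicators are i.i.d.\ Bernoulli($r$), so the miss event has probability $\prod_{i=1}^N (1-r) = (1-r)^N$. Beyond what the paper states, you add the bounds $e^{-Nr}e^{-Nr^2/(1-r)} \le (1-r)^N \le e^{-Nr}$, which justify the approximation quantitatively where the paper only invokes it for intuition. You also note that acceptance coincides with the miss event only when the CWM is otherwise correct, and that otherwise $P(\mathrm{accept}) \le (1-r)^N$; both additions are sound and consistent with the paper's modeling assumptions.
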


\begin{proof}
Each play-through is a Bernoulli($r$) event (rule fires / does not fire), and the $N$ plays are i.i.d.\ (uniform-random policy is memoryless). The probability all $N$ draws are non-firing is $\prod_{i=1}^{N}(1-r) = (1-r)^N$.
\end{proof}

The approximation $e^{-Nr}$ is useful for intuition but the exact expression $(1-r)^N$ is what the table below uses.

\paragraph{What $N$ counts.} $N$ is the number of \emph{distinct, independent} i.i.d.\ uniform-random play-throughs the verification pipeline actually draws that could reveal the rule. The proposition is agnostic to which pipeline stage a draw belongs to, but only \emph{fresh} draws count --- re-using the same trajectories does not increase $N$. In our synthesis pipeline the draw is a single set of training trajectories that doubles as the gate: refinement re-checks the \emph{same} trajectories (it collects no new games and so contributes nothing to $N$), and we run no separate held-out validation gate. We verified this against the implementation. The numerical instances therefore are: in the Section~\ref{sec:gap}/Section~\ref{sec:rare-rule} gap grid $N \approx 40$ (the training trajectories \emph{are} the gate, refinement reuses them); Beacon (Section~\ref{sec:beacon}) draws a separate gate of $N = 2000$; the danger-curve sweep below reports $N \in \{20, 40, 80\}$.

\begin{corollary}[danger law]
\label{cor:dangerlaw}
Let $\kappa = \mathrm{play\_cost}$ be the expected play deficit of a planner whose CWM omits the rule, conditional on the omission surviving the gate. The expected harm from a gate of size $N$ is
\[
\mathrm{danger}(N) = \kappa \cdot (1 - r)^N.
\]
\end{corollary}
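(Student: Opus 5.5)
The proof is a law-of-total-expectation argument built on Proposition~\ref{prop:gatemiss}. Let $M$ be the gate-miss event: none of the $N$ i.i.d.\ uniform-random play-throughs triggers the rule. Define the deployment harm $H$ as the play deficit incurred once the planner searches over whatever CWM the gate accepted. I will decompose $\mathbb{E}[H] = \mathbb{E}[H \mid M]\,P(M) + \mathbb{E}[H \mid M^c]\,P(M^c)$ and evaluate the two branches separately.

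\textbf{Miss branch.} By Proposition~\ref{prop:gatemiss}, $P(M) = (1-r)^N$ exactly. On $M$ the sample is consistent with the rule-omitting model, so the accepted CWM omits the rule. The conditional expected deficit is then $\kappa$ by the very definition of $\mathrm{play\_cost}$ in the statement.

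\textbf{Catch branch.} On $M^c$ the rule appears in the sample. Here I invoke the idealized-synthesizer convention stated just before the proposition: channel (c), the synthesis residual, is excluded. So on $M^c$ the gate either rejects the rule-omitting program, as Panel B shows empirically when material terminals are present, or the synthesizer encodes the rule. Either way, the gate-miss channel contributes zero harm, i.e.\ $\mathbb{E}[H \mid M^c]=0$ within the modelled channels. Combining the branches gives $\mathrm{danger}(N) = \kappa(1-r)^N$.

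\textbf{Main obstacle.} The delicate step is justifying that $\kappa$ does not depend on $N$ or on the conditioning event $M$. Conditioning on the sample avoiding the rule could in principle bias which rule-blind CWM is produced, and thus its play cost. I would handle this by noting that the harm caused by the missing rule depends only on the program deployed, which on $M$ is the rule-blind CWM. The play cost is a property of that program against the true game, measured by arena play that is independent of the gate sample. Therefore $\mathbb{E}[H\mid M]=\kappa$ holds by definition once $\kappa$ is defined conditionally on $M$, exactly as the statement does. I would add a remark that without the idealized-synthesizer assumption the identity becomes a lower bound, $\mathrm{danger}(N)\ge\kappa(1-r)^N$, since $\mathbb{E}[H\mid M^c]\ge 0$.
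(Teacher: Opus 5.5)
Your proposal is correct and is essentially the paper's argument made explicit. The paper obtains the law as the product of the exact gate-miss factor $(1-r)^N$ from Proposition~\ref{prop:gatemiss} and the conditionally defined $\kappa$, with the synthesis-residual channel (c) excluded by the idealized-synthesizer reading; that is your total-expectation split with $\mathbb{E}[H\mid M^c]=0$, and your closing lower-bound remark matches the paper's own (one small wording fix: on $M$ the deployed model omits the rule because that is the premise of the setup, since Proposition~\ref{prop:gatemiss} fixes a rule-omitting CWM, and not because the sample is consistent with it, as consistency alone does not force omission).
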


The $(1-r)^N$ factor is exact (Proposition~\ref{prop:gatemiss}); $\kappa$ is the empirically-measured, game- and planner-specific consequence magnitude.

\begin{remark}[what stays empirical]
We treat $\kappa = \mathrm{play\_cost}$ as roughly constant across rarity values. This is an empirical regularity, not something the math forces. It holds here because competent MCTS reaches the ply-cap region regardless of how the rarity knob tunes $r$, so the consequence of omitting the rule is about the same whether the rule is common or rare --- as long as it escapes the gate. The invariance therefore depends on the planner consistently reaching the rule region (a property of the game and the search budget), not on anything about the sampling model. ``Empirical'' is not all-or-nothing, however: an upper bound on $\mathrm{play\_cost}$ is provable in general, its exact value is provable on solvable witnesses, and lower bounds are certifiable by witness --- see Proposition~\ref{prop:playcostbound} and Remark~\ref{rem:playcost-provable} below.
\end{remark}

\begin{remark}[why play\_cost can be measured once: the cheap/expensive split]
The rarity-invariance of $\kappa$ is what lets us measure it just \emph{once} --- expensively, with MCTS --- and reuse that single value at every rarity, while sweeping rarity \emph{cheaply} with random games (no MCTS). We call this cost-saving the \emph{cheap/expensive split}: rarity is cheap to sweep, $\mathrm{play\_cost}$ is expensive but need only be paid once. A direct measurement shows \emph{why} the two factors separate this way (\texttt{scripts/play\_cost\_reach.py}, MCTS 300 simulations, 120 games per cap; Figure~\ref{fig:mechanism}). As the cap knob is varied, the probability that \emph{competent} play reaches the cap region stays roughly flat (0.183, 0.242, 0.275 at caps 30 / 60 / 100 --- the Wilson 95\% intervals overlap pairwise, so no trend is resolved at this sample size), whereas the probability that \emph{random} play reaches it falls sharply (0.442, 0.133, 0.067 --- a $6.6\times$ drop, CI-separated end to end). So $\mathrm{play\_cost}$ rides the knob-insensitive \emph{competent} reach --- which is exactly why it is approximately rarity-invariant --- while rarity rides the knob-dependent \emph{random} reach. (The shape is robust to sample size: at 40 games per point the same pattern appears --- competent 0.200/0.200/0.225, random 0.375/0.200/0.075 --- and tripling the sample sharpens the random-reach fall while leaving the competent-reach flatness intact.) This is a mechanistic correlate, not a proof; $\mathrm{play\_cost}$-constancy remains an empirical regularity.
\end{remark}

\begin{figure}[ht]
\centering
\includegraphics[width=0.55\textwidth]{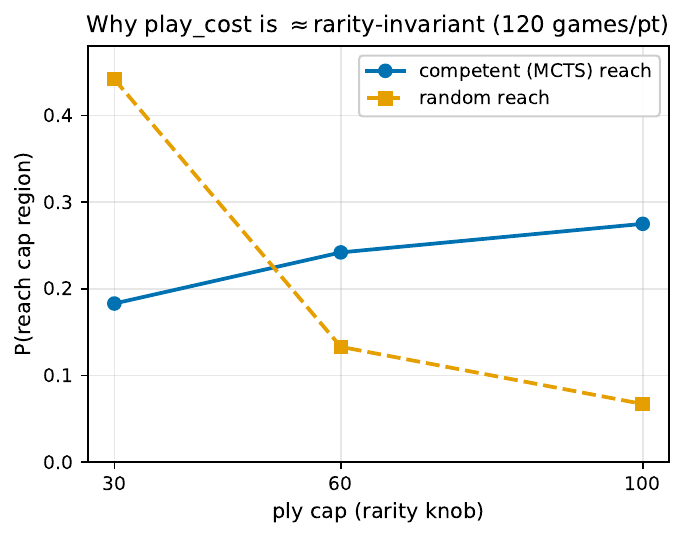}
\caption{A measured mechanistic correlate of the cheap/expensive split (\texttt{scripts/play\_cost\_reach.py}, 120 games/point, 300 simulations). Competent (MCTS) reach of the cap region is roughly flat as the cap knob varies (overlapping Wilson 95\% intervals), so $\mathrm{play\_cost}$ is approximately rarity-invariant; random reach falls steeply ($6.6\times$, CI-separated), which is what drives the rarity knob. The two factors of the danger law thus ride two different reach distributions.}
\label{fig:mechanism}
\end{figure}

\begin{remark}[reading the two factors, and which reference distribution]
\label{rem:reference-dist}
The danger law factorizes cleanly into a \emph{verification-distribution} term and a \emph{play-distribution} term: $(1-r)^N$ is the probability the verification distribution ($N$ i.i.d.\ uniform-random plays) misses the rule --- and $r$ here is a \emph{random}-play rate, measured on the gate's own uniform-random sampling rather than on the planner's distribution --- while $\kappa = \mathrm{play\_cost}$ is the consequence measured on the \emph{deployment} distribution, the reach of the deployed planner $\Pi$. The normatively correct reference for play-adequacy is arguably neither of these but \emph{equilibrium / best-response reach}, and under a true equilibrium opponent the play-cost factor could differ: an equilibrium opponent might exploit the rule-blind agent more aggressively, or the omitted rule might sit off the equilibrium path entirely. We therefore keep the hedge ``on the distribution the planner actually visits'' throughout rather than upgrading to a distribution-free claim, and scope the ``$\approx$ constant play\_cost'' claim to the deployed self-play planner; substituting equilibrium reach for MCTS reach would shift the \emph{numbers} (a rarity- and a play-cost-under-equilibrium, measured against a solver we did not run) but not the \emph{mechanism} --- a verified model can be wrong precisely where the reference distribution concentrates and the verification distribution does not. The \emph{inference}-side coverage bound, by contrast, is \emph{equilibrium-robust} (Section~\ref{sec:coverage-bound}): there the same full-support property makes the reference-distribution question a strength rather than a caveat.
\end{remark}

Although the \emph{exact} value of $\mathrm{play\_cost}$ stays empirical, more of it is provable than the factorization above suggests. We record the provable upper bound here and collect the full picture in Remark~\ref{rem:playcost-provable}.

\begin{proposition}[play-cost upper bound via query-hit mass]
\label{prop:playcostbound}
Let the true game be $M$ and let $\hat M$ be a deployed model whose contract functions agree with $M$ everywhere except on an error region $E$ (transition queries, or info-sets for the inference function). Fix any planner that is a deterministic function of its model's responses and a random seed, playing against a fixed (possibly stochastic) opponent and referee, and let $W(\cdot)$ be the expected game score of the agent deploying a model (win $=1$, draw $=\tfrac12$, loss $=0$). Let $\mu_{\mathrm{query}}(E)$ be the probability that the agent's search queries its model on $E$ at least once during a game. Then
\[
|W(M) - W(\hat M)| \;\leq\; \mu_{\mathrm{query}}(E);
\]
in particular $\mathrm{play\_cost} = W(M) - W(\hat M) \leq \mu_{\mathrm{query}}(E)$.
\end{proposition}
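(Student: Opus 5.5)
The plan is a coupling argument. I would run the two agents, one deploying $M$ and one deploying $\hat M$, on a common probability space. Both share the planner's random seed, the opponent's randomness and the referee's randomness, so every source of stochasticity other than the model's responses is identical across the two runs. A single game is then a sequence of events: the agent's search issues model queries, the planner emits a move, the referee applies it in the true game, and the opponent responds. The planner is a deterministic function of its seed and the responses it receives. So, as long as every query returns the same answer under $M$ and $\hat M$, the two runs issue the same next query, choose the same move, and induce the same referee and opponent behaviour.

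The key step is an induction over the global sequence of query events within a game. The claim is that on the event $G$ that the $\hat M$-run never queries $E$, the two coupled runs are identical: the same query transcript, the same move sequence, and the same final score. The base case is trivial. For the inductive step, suppose the transcripts agree up to query $k$. Then the $(k{+}1)$st query is the same in both runs. On $G$ it lies outside $E$, so $M$ and $\hat M$ answer it identically. The same reasoning covers move choices and the referee/opponent transitions, because those depend only on shared randomness and on the identical history.

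I define $\mu_{\mathrm{query}}(E)$ using the $\hat M$-deploying agent, since that is the deployment distribution. The symmetric choice also works: the first query to land in $E$ is the same in both runs, so the event ``some query hits $E$'' coincides in the two coupled runs. I would state this explicitly so the bound does not depend on which agent we take.

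Let $S$ and $\hat S$ be the coupled scores, both in $[0,1]$. Then
\[
|W(M)-W(\hat M)| = |\mathbb{E}[(S-\hat S)\mathbf{1}_{G^c}]| \le \mathbb{E}[|S-\hat S|\,\mathbf{1}_{G^c}] \le P(G^c) = \mu_{\mathrm{query}}(E),
\]
because $S=\hat S$ on $G$ and $|S-\hat S|\le 1$. The one-sided statement $\mathrm{play\_cost}\le\mu_{\mathrm{query}}(E)$ follows immediately.

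The main obstacle is making the coupling legitimate. It requires the randomness to be exposed as seeds that are fixed before play, not drawn in a way that depends on the responses. For the planner this is given by hypothesis. For the opponent and referee, I would represent any stochastic policy as a deterministic function of the history and an independent random tape, which is always possible. The same tapes are then used in both runs.

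A second subtlety is where to count hits. Queries made during search, as opposed to only along the realized line, must be included. Including them is exactly what the definition ``queries its model on $E$ at least once'' provides. For the inference-function case, I would treat each \texttt{infer\_states} call as a query and $E$ as the set of info-sets on which the model errs; the same argument then goes through verbatim.
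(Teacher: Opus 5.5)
Your proposal is correct and follows the paper's proof. Both couple the two deployments on shared planner, opponent and referee seeds, note that the runs coincide until the first query into $E$ (so the hit event is identical in both and has probability $\mu_{\mathrm{query}}(E)$), and bound the score difference by $1$ on that event; your explicit induction over the interleaved query/move sequence and the random-tape representation of the opponent and referee spell out what the paper compresses into one line.
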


\begin{proof}
Couple the two deployments on identical seeds (planner, opponent, referee). Until the first time the search queries the model on $E$, every model response is identical in the two runs, hence so is every planner action, every referee transition, and every opponent response --- the runs are indistinguishable, so the event ``the search queries $E$'' has the same probability under both and $\mu_{\mathrm{query}}(E)$ is well defined on the coupled process. On sample paths where the search never queries $E$ the two games are identical and contribute zero to the difference; on the complementary event the score difference is at most $1$. Taking expectations gives the bound. Note the relevant distribution is the planner's \emph{query} distribution --- MCTS consults its model on imagined states --- which dominates the played-trajectory hit distribution: this is the formal counterpart of measuring adequacy on the search distribution.
\end{proof}

\begin{remark}[how much of play\_cost is provable]
\label{rem:playcost-provable}
Three fronts, of which only the last has a fundamental wall. \emph{(i) Upper bounds are theorems.} Proposition~\ref{prop:playcostbound} makes the danger law an end-to-end \emph{upper} bound: $\mathrm{danger} \leq \mu_{\mathrm{query}}(E) \times (1-r)^N$, an inequality with no fitted constant --- $(1-r)^N$ is exact and $\mu_{\mathrm{query}}$ is a property of the deployment distribution (measurable, or boundable by the reach-ratio machinery of Section~\ref{sec:coverage-bound}). Consistency check on the deployed instrument: competent trajectory reach of the cap region at the deployed cap (100) is $0.275$ (Figure~\ref{fig:mechanism}), a lower bound on $\mu_{\mathrm{query}}$, and the measured $\mathrm{play\_cost} = 0.091~[0.065, 0.117]$ respects it; if queries roughly track trajectory reach, the omitted rule flips the outcome in about a third of the games that reach the region --- pivotality, which no upper bound can supply (on Beacon $\mu_{\mathrm{query}}(D) = 1$ and the bound is vacuous while the true cost is $\tfrac12$). \emph{(ii) On solvable witnesses, play\_cost is exact.} Beacon's $\mathrm{play\_cost} = \tfrac12$ \emph{exactly} (Proposition~\ref{prop:beaconexact}), by exhaustion over its four deals --- so on the inference axis the danger law's predictions are fully analytic. \emph{(iii) Lower bounds are certifiable by witness.} A lower bound needs no game-solving: exhibiting an explicit opponent strategy and measuring the deficit is a statistical certificate. Our $n=4800$ measurement is exactly such a witness certificate: $\mathrm{play\_cost} \geq 0.065$ at seed-clustered $95\%$ confidence against the truth-planner opponent. What remains empirical at scale is only the \emph{exact} constant, which equals a game-value difference --- i.e., solving the game --- the play-value analogue of the enumeration/sampling wall of Remark~\ref{rem:threelevels}: exact where the game is solvable (Beacon, tic-tac-toe), bounded from above and below everywhere.
\end{remark}

\subsection{Measured danger curve}

We measure $\mathrm{play\_cost}$ precisely once ($\mathrm{play\_cost} \approx 0.09$--$0.13$ depending on sample size; the headline \texttt{play\_cost\_ci.py} run returns 0.091 at cap=100, n=4800 (20 seeds), 600 sims, with independent runs at smaller samples bracketing it: 0.131 at n=600, \texttt{play\_cost.py} 0.117--0.121 at n=240, \texttt{law\_sweep} 0.112 at cap=30) and sweep $\mathrm{rarity}$ cheaply by varying the ply cap (a lower cap makes the cap-and-equal-material event more common, hence a larger firing rate $r$; a higher cap makes it rarer). Rarity per cap is measured over 3000 random games. Results (Table~\ref{tab:dangercurve}, plotted in Figure~\ref{fig:dangerlaw}a):

\begin{table}[ht]
\centering
\small
\begin{tabular}{rrrrrr}
\toprule
cap & rarity & $(1-r)^{40}$ & danger@$N$=20 & danger@$N$=40 & danger@$N$=80 \\
\midrule
25  & 0.3367 & 0.0000 & 0.000 & 0.000 & 0.000 \\
40  & 0.2080 & 0.0001 & 0.001 & 0.000 & 0.000 \\
60  & 0.1073 & 0.0107 & 0.012 & 0.001 & 0.000 \\
80  & 0.0560 & 0.0997 & 0.038 & 0.012 & 0.001 \\
100 & 0.0253 & 0.3583 & 0.072 & 0.043 & 0.015 \\
120 & 0.0113 & 0.6339 & 0.096 & 0.076 & 0.048 \\
140 & 0.0067 & 0.7652 & 0.105 & 0.092 & 0.070 \\
\bottomrule
\end{tabular}
\caption{Measured danger curve: a threshold law in rarity. Computed with a round constant $\mathrm{play\_cost} = 0.12$ (headline measurements span $0.091$--$0.131$ across sample sizes); the threshold shape is insensitive to this choice.}
\label{tab:dangercurve}
\end{table}

\begin{figure}[ht]
\centering
\includegraphics[width=\textwidth]{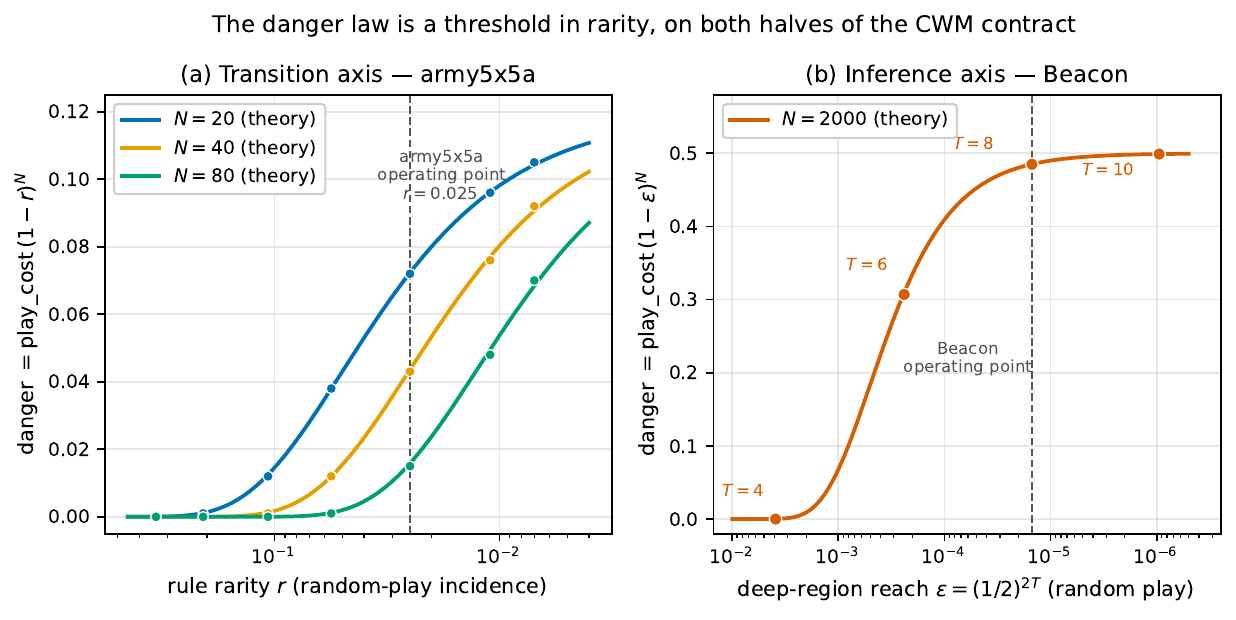}
\caption{The danger law $\mathrm{danger} = \mathrm{play\_cost}\,(1-r)^N$ is a \emph{threshold} in rarity, and the same law recurs on both halves of the CWM contract. Solid curves are the analytic law; markers are the measured operating points (Table~\ref{tab:dangercurve} for panel (a), Table~\ref{tab:inferenceaxis} for panel (b)). \textbf{(a)} Transition axis (army5x5a + material-at-cap), one curve per gate size $N\in\{20,40,80\}$: larger $N$ pushes the threshold toward rarer rules. The deployed army5x5a instrument ($r=0.0253$) sits in the danger zone. \textbf{(b)} Inference axis (Beacon), fixed gate $N=2000$, swept over the deep-region reach $\varepsilon=(1/2)^{2T}$: harm climbs from $\approx 0$ at $T=4$ to the full play cost by $T\geq 8$. The $x$-axes are reversed (rarer to the right) so danger rises rightward in both panels.}
\label{fig:dangerlaw}
\end{figure}

The result is a threshold law in rarity. Danger is approximately zero while the rule is common enough for a size-$N$ gate to catch it (cap $\leq 50$), rises through a threshold as the rule becomes rare (cap 60--100), and plateaus at approximately the full play\_cost once the rule almost always escapes the gate (cap $\geq 120$). The gate size $N$ shifts the threshold: larger $N$ pushes it toward rarer rules.

\subsection{Why Connect Four lies safely below the threshold}

Recall the six rules tested across Connect Four and army5x5a (Section~\ref{sec:rare-rule}). All of Connect Four's consequential rules have rarity 0.12--0.38, giving $(1-r)^{40} \approx 0$ --- they are caught by even a modest gate. army5x5a's material-at-cap rule at cap=100 has material-terminal rarity 0.0253, giving $(1-r)^{40} \approx 0.36$ --- deep in the danger zone. The structural reason is the same as before: in Connect Four, any rule a planner can force also appears regularly under random play (the rarity$\leftrightarrow$consequence tension); in army5x5a, competent play drives the game to the deep-ply-cap region that random play almost never reaches.

\section{Repairing the Gap: Translation, Not Inference}
\label{sec:repair}

If the gap is caused by a missing rule, can it be repaired by providing example transitions that demonstrate the rule? We ran a systematic set of repair attempts on army5x5a + material-at-cap, synthesized from incomplete rules.

\subsection{Repair experiments}

Table~\ref{tab:repair} reports every repair condition. All conditions use the mini synthesizer unless noted; play winrate is vs the true game, 40 arena games at 400 simulations (30 for the DAgger rows), one synthesis seed per condition; baseline 0.28, fair truth-vs-truth 0.50. Discriminating examples are transitions that involve the material-at-cap rule. The proper-DAgger condition follows Ross et al.'s dataset-aggregation scheme: each round self-plays the current (flawed) CWM, labels its game-path states with the truth, and adds them to a growing dataset (0, 3, 5, 11 discriminating transitions accumulated over three rounds); the rule is never learned despite the growing on-distribution evidence. \textsuperscript{$\dagger$}One of the three rounds' synthesis collapsed at the gate (accuracy $0.00$); the other two reached only $0.994$--$0.997$, below $1.0$ precisely because the aggregated dataset now contains rule-bearing transitions the rule-blind CWM cannot match --- the same signal as the sweep of Section~\ref{sec:repair}. (In the artificial-states condition, constructed states were validated pre-move, so a random action can equalize material or capture the general: of the 120 constructed transitions, 115 demonstrate the rule decisively --- 4 end in general capture and 1 in an equal-material draw. The harvested-real condition validates post-hoc and its count is exact.) ``Rule learned'' is judged by code inspection (does the synthesized module contain the material-at-cap branch?) cross-checked against play parity with the complete-rules control; it is a binary structural fact about the synthesized code, not a winrate threshold. With one seed and 40 games per condition, each individual winrate carries a wide interval (e.g.\ $0.28$ is Wilson 95\% $[0.16, 0.43]$); the load-bearing signal is therefore not any single cell but the \emph{consistent} no-learn outcome across every repair form together with the clean parity of the complete-rules control --- not a precise per-condition effect size.

\begin{table}[ht]
\centering
\small
\resizebox{\textwidth}{!}{%
\begin{tabular}{lcccc}
\toprule
Repair attempt & discr.\ examples & gate acc & rule learned & winrate \\
\midrule
none (random trajectories) & 0 & 1.000 (false security) & no & 0.28 \\
naive DAgger (dump competent trajectories) & ${\sim}2$ & 0.9996 & no & 0.28 \\
proper DAgger (iterated, dataset aggregated across rounds) & 3--11 aggregated & 0.994--0.997\textsuperscript{$\dagger$} & no & 0.28--0.42 \\
targeted, artificial states & 120 (115 decisive) & mini 0.916 / large 0.004 & no & mini 0.35 / large 0.05 \\
targeted, real (harvested on-manifold) & 54 & mini 0.959 / large 0.959 & no & mini 0.35 / large 0.42 \\
COMPLETE rules + targeted (control) & 120 & 1.000 (0 iters) & yes & 0.53 \\
\bottomrule
\end{tabular}%
}
\caption{Repair attempts on the incomplete-rules CWM. Detection works; repair does not.}
\label{tab:repair}
\end{table}

\subsection{Findings}

\paragraph{Detection works, repair does not.} Verifying on the play/search distribution rather than on random trajectories drops the gate below 1.0, detecting the inadequacy that random-trajectory verification missed. But neither mini nor large can \emph{infer} the missing rule from examples. Even 54 real, on-manifold discriminating transitions with 12 refinement iterations leave the gate at 0.959 and the rule unlearned.

\paragraph{Spec completeness, not code-writing ability, is what binds.} Given the rule in \texttt{RULES\_TEXT}, the model encodes it correctly in 0 refinement iterations and plays at parity with the baseline (0.53 $\approx$ 0.50). The complete-rules control isolates the cause: the limitation is not the synthesizer's code-writing ability but the absence of the rule from the specification.

\paragraph{Scale helps marginally but not sufficiently --- and only where there is signal to help.} The large model (0.42) exceeds mini (0.35) on real on-manifold data, but both remain far below the complete-rules baseline (0.53). The inference ceiling is not a mini-specific artifact: the marginal gain lives in the on-manifold regime, where the discriminating transitions are actually sampled and the task is \emph{generalizing} observed rule-firings into code. In the unsampled regime the scale effect does not merely shrink --- it is provably absent (Section~\ref{sec:sample-identifiability}): no model, at any scale, can prefer the correct rule from a sample that carries no evidence for it.

\paragraph{Off-manifold repair data corrupts synthesis.} Artificial (unreachable) discriminating states cause catastrophic failure in the large model (accuracy collapses to 0.004, win rate 0.05). The synthesizer attempts to fit transitions that cannot arise in real play and damages the parts of the CWM it had already learned correctly.

\subsection{Sample-identifiability: the provable core of ``translation, not inference''}
\label{sec:sample-identifiability}

The repair experiments above are empirical and scoped to the GPT-5.x family. They are, however, sitting on top of a result that is \emph{provable and universal} --- and stating it does not soften ``translation, not inference'' but strengthens it, by separating a theorem from a conjecture.

Let a rule $R$ be omitted from the specification (the incomplete-rules / no-rules regime), so the only possible source for it is the trajectory sample. Let $R$ fire on a set of play-throughs whose probability is $r$ under the verification (uniform-random) distribution $D$. Write $M_R$ for the world model \emph{with} $R$ and $M_\emptyset$ for the otherwise-identical model \emph{without} $R$; they agree on every transition except those in the rule region. The pipeline draws $N$ i.i.d.\ play-throughs from $D$ (the training sample, which is also the gate --- see ``What $N$ counts,'' Section~\ref{sec:danger-law}).

\begin{proposition}[sample-identifiability]
\label{prop:identifiability}
Condition on the event $\neg E$ that no sampled play-through hits the rule region; $P(\neg E) = (1-r)^N$. On $\neg E$, $M_R$ and $M_\emptyset$ produce identical outputs on every sampled transition, so the sample is observationally equivalent under the two models: any score that depends on a candidate model only through its outputs on the sampled transitions (the gate's transition accuracy, the likelihood of the sample, the data-fit term of a program-search objective) is identical for $M_R$ and $M_\emptyset$. Hence the sample carries no evidence favouring the correct model --- the gate cannot distinguish or reject $M_\emptyset$, and any preference for $M_R$ must come from a prior or from the specification, not from the sampled transitions. The omitted rule is in this sense unidentifiable from the sample, and a sample-passing $M_\emptyset$ is admissible.
\end{proposition}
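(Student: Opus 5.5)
The proof has three steps, taken in order: compute $P(\neg E)$, establish pointwise agreement of $M_R$ and $M_\emptyset$ on the sample, and deduce that every output-only score is the same for both models.

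\textbf{Step 1: the probability of $\neg E$.} Define $E$ as the union over the $N$ draws of the events ``draw $i$ hits the rule region.'' Each draw does so with probability $r$, and the draws are i.i.d. from $D$. Proposition~\ref{prop:gatemiss} then gives $P(\neg E) = (1-r)^N$ directly.

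\textbf{Step 2: pointwise agreement on the sample.} I would make precise what ``hits the rule region'' means: a play-through hits it if at least one of its (state, action) queries lies in the set $S_R$ where $M_R$ and $M_\emptyset$ differ. Here ``query'' covers every contract output the gate checks on that transition: \texttt{apply\_action}, \texttt{is\_terminal}, \texttt{returns} and \texttt{legal\_actions}. By hypothesis the two models agree off $S_R$.

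One subtlety needs care. The sampled trajectory is generated by the true game, which may be $M_R$. Since the models agree off $S_R$, an induction along the trajectory shows that until the first query in $S_R$, both models would generate the same successor states. So on $\neg E$ the sampled play-through is equally a valid $M_\emptyset$ play-through, and every transition tuple in it receives identical outputs from both models. This also shows the likelihood of the sample is the same under $D$ run on either model, because the random policy's choices depend only on \texttt{legal\_actions}, which agree.

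\textbf{Step 3: output-only scores coincide.} Formalize a score as any functional $\Phi(\hat M, \text{sample})$ that factors through the tuple of $\hat M$'s outputs on the sampled transitions. Step 2 makes these tuples equal for $M_R$ and $M_\emptyset$, so $\Phi(M_R,\cdot) = \Phi(M_\emptyset,\cdot)$. This covers transition accuracy, likelihood and data-fit terms.

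It follows that:
\begin{itemize}
\item the gate's accept/reject decision is the same for both models;
\item any posterior ratio between them equals the prior ratio, by Bayes' rule with equal likelihoods;
\item since $M_R$ passes (it is the truth), $M_\emptyset$ passes too, which is the admissibility claim.
\end{itemize}

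\textbf{Main obstacle.} The hard part is the induction in Step 2: making sure ``no sampled play-through hits the rule region'' really implies the trajectory distribution is model-independent. It must include terminal and returns queries, since the material-at-cap rule acts through \texttt{is\_terminal}/\texttt{returns} at the cap. I would handle this by defining $S_R$ to include every query type and coupling the two rollouts on the same random-policy seed, as in the proof of Proposition~\ref{prop:playcostbound}.
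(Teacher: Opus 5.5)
Your proposal is correct and follows the paper's proof: on $\neg E$, every sampled query lies outside the region where $M_R$ and $M_\emptyset$ differ, so any score that reads a model only through its outputs on the sample takes the same value on both, and any preference between them must come from a prior or the specification. The paper treats this pointwise agreement as immediate from the definition of $\neg E$, so the coupling induction you flag as the main obstacle is extra rigor rather than a needed step: the likelihood and the gate verdict already follow from pointwise agreement, and the coupling only adds that $r$ and the trajectory law do not depend on which model generates the play-throughs.
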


\begin{proof}
On $\neg E$ the two models are pointwise equal on every transition the sample contains, so any score that reads a model only through its outputs on the sampled transitions takes the same value on both. Such a score therefore cannot order $M_R$ and $M_\emptyset$; any ordering a procedure imposes is a prior or a tie-break (e.g.\ a description-length term), not evidence carried by the sample.
\end{proof}

\begin{corollary}[unidentifiability probability = the danger-law gate-miss factor]
The probability that the omitted rule is unidentifiable from the $N$ samples is exactly $(1-r)^N$ --- the same factor as the danger law (Section~\ref{sec:danger-law}). The danger law is thus an identifiability statement: $\mathrm{danger} = \mathrm{play\_cost} \times P(\text{rule unidentifiable from the } N \text{ samples})$. The gate-passing-but-rule-blind seeds of Section~\ref{sec:rare-rule} are \emph{consistent with} this event ($\neg E$) --- a property of the data rather than necessarily an LLM failure. We did not log per seed whether the rule region appeared. By contrast, in the resampling runs of Section~\ref{sec:repair} below, the gate accuracy shows the rule often \emph{was} present and was still not learned.
\end{corollary}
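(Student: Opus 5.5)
The corollary has two parts. The first is the identity $P(\text{rule unidentifiable from the } N \text{ samples}) = (1-r)^N$. The second is the rewriting $\mathrm{danger} = \mathrm{play\_cost} \times P(\text{rule unidentifiable})$. I would derive both by chaining Proposition~\ref{prop:identifiability} with Proposition~\ref{prop:gatemiss} and Corollary~\ref{cor:dangerlaw}, so no new probabilistic argument is needed.

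First, I would fix what ``unidentifiable'' means. I take it to be the event on which every sample-only score (gate accuracy, likelihood, data-fit term) assigns $M_R$ and $M_\emptyset$ the same value. Proposition~\ref{prop:identifiability} shows that $\neg E$ implies this event. For the exact equality I also need the converse: on $E$, some sampled play-through hits the rule region, and there $M_R$ and $M_\emptyset$ disagree on at least one sampled transition. So the gate's transition accuracy separates them, and the rule is identifiable.

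This converse is the main obstacle. It requires that every hit of the rule region produces an observable discrepancy. That matches how the paper defines $r$: it is the \emph{material-terminal} rarity, counting only decisive, gate-observable firings, and not the raw cap-hit rate. So I would state explicitly that $r$ is the probability of an observable firing, and then the unidentifiability event equals $\neg E$ exactly.

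Given that, $P(\text{unidentifiable}) = P(\neg E) = \prod_{i=1}^N (1-r) = (1-r)^N$. This is exactly Proposition~\ref{prop:gatemiss}, applied to the same $N$ i.i.d.\ fresh draws, with the same ``What $N$ counts'' convention that refinement reuse adds nothing to $N$. Substituting into Corollary~\ref{cor:dangerlaw}, where $\kappa$ is conditional on the omission surviving the gate, gives the danger identity.

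I would then add a short qualifier. The identity holds under the idealized synthesizer of Section~\ref{sec:danger-law}, which outputs $M_\emptyset$ on $\neg E$ and encodes the rule on $E$. Failure to encode the rule on $E$ is channel (c), the synthesis residual, and the equality deliberately excludes it.

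The remaining sentences of the corollary are interpretive and need no proof. The gate-passing seeds of Section~\ref{sec:rare-rule} are consistent with $\neg E$ because gate passing is compatible with that event, and per-seed logging of rule hits is absent, so this is only consistency, not confirmation. For the contrast with Section~\ref{sec:repair}, I would note that gate accuracy below $1.0$ certifies $E$ occurred, and on $E$ the rule is identifiable. So the failures observed there are instances of the synthesis-residual channel, not of unidentifiability.
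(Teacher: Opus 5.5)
Your proposal is correct and follows the paper's own route: Proposition~\ref{prop:identifiability} gives unidentifiability on $\neg E$, Proposition~\ref{prop:gatemiss} gives $P(\neg E)=(1-r)^N$, and substituting into Corollary~\ref{cor:dangerlaw} yields the danger identity; your explicit converse (on $E$ the two models disagree on a sampled transition, because the rule region is by definition their disagreement set and $r$ is the observable material-terminal rate) makes the word ``exactly'' rigorous, where the paper simply identifies unidentifiability with $\neg E$. One caution on the interpretive remark: gate accuracy below $1.0$ certifies $E$ only if the candidate is otherwise equal to $M_\emptyset$, since a synthesized CWM can also fail the gate for base-game reasons (seed~11 in Panel~B), so the contrast with Section~\ref{sec:repair} is better grounded, as the paper grounds it, in the exposure floor $(1-r)^{N(1+k)}\approx 0$ and in the residual mismatches being exactly the material transitions.
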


This splits ``translation, not inference'' cleanly into a provable core and an empirical residual.

\begin{itemize}
\item \textbf{(a) Provable, universal.} When the rule is absent from the sample (probability $(1-r)^N$), the sample provides no evidence to infer it --- for \emph{any} learner; this is a property of the data, not a limitation of LLMs (Proposition~\ref{prop:identifiability}). The gap-exhibiting seeds in Section~\ref{sec:rare-rule} are consistent with this event. This part is a theorem.
\item \textbf{(b) Empirical, LLM-specific.} Even when the rule \emph{is} present in the sample (probability $1-(1-r)^N$), the LLM synthesizer does not reliably encode it: the Section~\ref{sec:repair} repair battery (proper DAgger, targeted on-manifold examples) supplies discriminating transitions and the rule is still omitted. This is the genuinely LLM-specific, scoped finding --- established under the GPT-5.x family (mini, large) across the Section~\ref{sec:repair} regimes. Its universal-for-all-models form remains a conjecture.
\end{itemize}

We measure (b) directly on the synthesis pipeline (\texttt{scripts/danger\_synthesis\_sweep.py}): synthesizing from the incomplete rules and $N$ true-game trajectories, with refinement drawing \textbf{fresh} trajectories each iteration, the fraction of seeds whose CWM is rule-blind is essentially $1$ regardless of $N$ for both model sizes (Table~\ref{tab:synthcurve}):

\begin{table}[ht]
\centering
\small
\begin{tabular}{cccc}
\toprule
$N$ (games, initial batch) & mini rule-blind & large rule-blind & initial-batch floor $(1-r)^N$ \\
\midrule
40  & 20/20 = 1.00 & 20/20 = 1.00 & 0.358 \\
120 & 20/20 = 1.00 & 20/20 = 1.00 & 0.046 \\
200 & 20/20 = 1.00 & 20/20 = 1.00 & 0.006 \\
\bottomrule
\end{tabular}
\caption{Synthesis-pipeline danger curve: both model sizes remain rule-blind far above the information-theoretic floor. 20 seeds per cell (Wilson 95\% lower bound on the rule-blind rate: 0.839); no synthesis crashed in any of the 120 runs, so every denominator counts CWMs that ran and were scored.}
\label{tab:synthcurve}
\end{table}

\textbf{On the floor.} Two identifiability floors apply, and they bracket the result. The $(1-r)^N$ column is the \emph{initial-batch} floor --- the chance the rule is absent from the first $N$ games. But because refinement draws \textbf{fresh} batches each iteration, a run that refines for $k$ iterations is exposed to $\approx N(1+k)$ distinct games, so the \emph{full-pipeline} exposure floor is $\approx (1-r)^{N(1+k)}$, which for the refined seeds (up to $\sim$1400 games at $N=200$, $k=6$) is effectively $0$ --- i.e.\ the rule is present in the pipeline with probability $\approx 1$. The gate accuracy makes this concrete: at $N=200$, \textbf{no} large seed reaches gate $1.0$ (0/20), meaning the rule-bearing transitions \emph{are} in every sample (the rule-blind CWM mismatches them, so the gate cannot reach $1.0$), yet after six refinement iterations all 20 CWMs are still rule-blind. So the result is not an instance of the unidentifiability event $\neg E$: the rule is present ($E$ holds) and simply not learned --- the pure (b) residual. (Some seeds do reach gate $1.0$ while blind --- 6/20 mini and 3/20 large at $N=120$ --- which is the batch-miss event: refinement stops as soon as accuracy hits $1.0$ on the \emph{current} batch, and with a per-batch miss rate of $(1-r)^{120} = 0.046$ compounded over up to seven fresh batches, stopping on a rule-free batch has probability $\approx 0.28$ --- matching the observed 6/20. The $N=200$ large column, where no seed reaches the gate, rules that explanation out for the headline cell.) Note the contrast with the main pipeline, where refinement re-checks a fixed trajectory set (Section~\ref{sec:gate}): under fresh resampling the rule had every opportunity to appear and still was not learned.

\textbf{A feedback-channel confound, found and closed.} Auditing the harness surfaced a confound in the statement above: the refinement feedback truncated every failure line at 200 characters and never included the expected values --- the header said ``expected vs got'' while the body carried neither --- and the synthesis prompt shows only the \emph{first} 30 transitions of the trajectory stream --- a deterministic prefix that can \emph{never} contain a material-at-cap terminal, whose global index is $\geq 99$ by construction. So although the rule was present in the \emph{gate's} trajectories, it almost never reached the \emph{model} in legible form: the sweep above measures the pipeline as deployed, but by itself cannot attribute the blindness to the model rather than to the harness's feedback channel. We fixed the channel (failure lines now show \texttt{expected=} and \texttt{got=} for each mismatched field) and reran the headline cell as a discriminant experiment (\texttt{scripts/refine\_feedback\_cell.py}): with the rule's expected returns \emph{printed in the feedback} --- e.g.\ \texttt{returns: expected=\{1: 1.0, 2: -1.0\} got=\{1: 0.0, 2: 0.0\}} on cap states with unequal material --- the result is \textbf{12/12 rule-blind} (6 mini + 6 large seeds, zero crashes). The signature is sharper than in the deployed-pipeline sweep: in 9 of the 12 seeds the model repairs everything \emph{except} the rule, stalling at gate accuracy ${\approx}0.999$ with exactly the material transitions unfixed (the remaining three end at low gate accuracy, 0.11--0.59 --- refinement collapse, a different failure mode --- and are rule-blind too). The (b) residual is therefore not an artifact of the feedback channel: shown the discrepant expected values directly, the model corrects the dynamics around them and still does not posit the rule.

So ``translation, not inference'' should be read as the conjunction: \textbf{(a)} an omitted rule is unidentifiable from a sample that misses it (provable, universal, exactly the danger-law gate-miss event), and \textbf{(b)} even a sample that \emph{contains} it does not reliably teach the LLM (empirical, tested on GPT-5.x mini/large across the Section~\ref{sec:repair} regimes). Stating (a) as a theorem strengthens the claim: the part of the failure that looks like an LLM weakness is in fact information-theoretic and binds every possible learner; only the residual (b) is a model-specific finding, and we are explicit that its universal form is conjectural.

\subsection{Conclusion: translation, not inference}

Under the tested regimes, LLM CWM synthesis behaves as \emph{rule translation}: it correctly encodes rules it was given and, across the two model sizes tested (mini, large) and every data regime we ran, did not infer the omitted rule, even when that rule was demonstrated by example transitions. Beneath this empirical finding sits a universal one (Section~\ref{sec:sample-identifiability}, Proposition~\ref{prop:identifiability}): when the omitted rule is absent from the sample --- probability $(1-r)^N$, the danger-law gate-miss event --- no learner can recover it \emph{from those sampled transitions alone}; any recovery must come from a prior or the specification, not from the sample. The actionable implication is that the specification must be complete before synthesis, and that verification on the play distribution detects incompleteness but does not repair it. Feeding example transitions is not a reliable substitute for a complete specification.

\section{Imperfect Information: The Inference Function as a New Failure Surface}
\label{sec:imperfect}

The danger law applies not just to the transition function (the CWM's model of how states evolve) but also to the inference function (the CWM's model of how to reconstruct hidden state from observations). We extend the contract, prove a coverage bound that explains when the inference gate is provably safe, and construct a minimal game where it is not.

\subsection{Pipeline validation: Kuhn poker}
\label{sec:kuhn-validation}

Before constructing a gap, we validate the imperfect-information pipeline on Kuhn poker, a well-understood minimal game (3-card deck, 1 betting round per player, net-chip payoff; Table~\ref{tab:kuhn}).

\begin{table}[ht]
\centering
\small
\resizebox{\textwidth}{!}{%
\begin{tabular}{lllll}
\toprule
Synth & transition gate & inference gate (obs / infer) & CWM-vs-truth play & fair baseline \\
\midrule
large & 1.000 (0 iters) & 1.000 / 1.000 & 0.470 [0.422, 0.519] & 0.470 [0.422, 0.519] \\
mini  & 1.000 (0 iters) & 0.500 / 1.000 & 0.470 [0.422, 0.519] & 0.470 [0.422, 0.519] \\
\bottomrule
\end{tabular}%
}
\caption{Kuhn poker pipeline validation.}
\label{tab:kuhn}
\end{table}

Both model sizes pass the transition gate and synthesize an exact \texttt{infer\_states} (inference rate 1.000), and both play at parity with the truth-vs-truth baseline --- consistent with a near-zero gap, as expected when the model has the game (large recalls Kuhn; mini reconstructs it from the specification). The mini synthesis differs from the ground truth only in an \emph{observation convention}: it places player 2's own hidden card at a different index of the observation vector (observation rate 0.500 --- every player-2 observation mismatches by this relabeling), a wrong-but-running convention divergence that leaves play unaffected.

\subsection{When the inference gate is provably sufficient: a coverage bound}
\label{sec:coverage-bound}

We now formalize when the inference gate, sampled on random play, is \emph{identifying} --- that is, when every competent-play-relevant inference error would be caught by the random-trajectory gate.

\paragraph{Setup.} Consider a finite two-player extensive-form game \textbf{with perfect recall}, chance (the deal) and imperfect information. We use the following notation:
\begin{description}
\item[$b$:] maximum, over \textbf{player} information sets $I$, of $|A(I)|$ (the number of actions available at $I$), with chance handled separately through $p_{\text{chance}}$;
\item[$d(I)$:] number of player-action edges on a shortest history reaching information set $I$;
\item[$\dmax$:] $\max_I d(I)$, the maximum such depth;
\item[$p_{\text{chance}}$:] minimum probability of a deal consistent with any reachable info-set;
\item[$\Ireach$:] set of reachable info-sets;
\item[$\pi^\sigma(\cdot)$:] realization (reach) probability of a history or info-set under strategy profile $\sigma$;
\item[$\mathrm{supp}(\cdot)$, $\mathrm{reach}(\cdot)$:] the histories, resp.\ info-sets, given positive probability.
\end{description}
The uniform-random policy $\rho$ plays every legal action with probability $1/|A(I)| \geq 1/b$, and therefore assigns positive probability to every legal action (full support).

\begin{lemma}[full-support inclusion]
\label{lem:fullsupport}
Because $\rho$ assigns positive probability to every legal action, $\mathrm{supp}(\pi^\sigma) \subseteq \mathrm{supp}(\pi^\rho)$ for every profile $\sigma$; equivalently $\mathrm{reach}(\sigma) \subseteq \mathrm{reach}(\rho)$.
\end{lemma}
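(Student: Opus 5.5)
The plan is to factor the realization probability of a history into its chance and player contributions. Then we compare those factors between an arbitrary profile $\sigma$ and the uniform policy $\rho$. Concretely, for any terminal or non-terminal history $h = (a_1,\dots,a_k)$ we write
\[
\pi^\sigma(h) \;=\; \prod_{j:\,\text{chance at } h_{<j}} p_c(a_j \mid h_{<j}) \;\cdot \prod_{j:\,\text{player at } h_{<j}} \sigma_{P(h_{<j})}\bigl(I(h_{<j})\bigr)(a_j).
\]
The chance factor is the same under every profile, since chance (the deal) is not controlled by the players. The player factor is where $\sigma$ and $\rho$ differ.

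Suppose $h \in \mathrm{supp}(\pi^\sigma)$, so $\pi^\sigma(h) > 0$. Every factor in the product is then positive. In particular each chance factor is positive, and each player action $a_j$ is legal at its information set $I(h_{<j})$, because $\sigma$ only puts mass on legal actions. Under $\rho$ the chance factors are unchanged and hence still positive. Each player factor equals $1/|A(I(h_{<j}))| \geq 1/b > 0$. So $\pi^\rho(h) \geq \pi^\sigma_{\text{chance}}(h)\, b^{-d(h)} > 0$, where $d(h)$ is the number of player-action edges in $h$. This gives $h \in \mathrm{supp}(\pi^\rho)$, which is the history inclusion. It also yields a quantitative lower bound that I expect the subsequent coverage theorem to use.

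For the information-set form, I will use the standard definition $\pi^\sigma(I) = \sum_{h \in I} \pi^\sigma(h)$. If $I \in \mathrm{reach}(\sigma)$, some $h \in I$ has $\pi^\sigma(h) > 0$. By the history inclusion, $\pi^\rho(h) > 0$, and since all terms in the sum are nonnegative, $\pi^\rho(I) \geq \pi^\rho(h) > 0$. The reverse direction of the claimed equivalence needs a little care. The history-level inclusion implies the info-set inclusion directly. Conversely, info-set inclusion for every $\sigma$ gives back history inclusion, because one can take a pure $\sigma$ that steers toward a single history. However, I will present only the implication that is actually needed, and note that the two formulations express the same fact.

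The only genuine subtlety is to make sure the argument does not quietly assume that $\rho$ reaches histories through the same action at a shared information set. Perfect recall together with the per-information-set definition of $\rho$ makes this automatic: $\rho$'s probability depends only on the information set and the action, never on the particular history within the information set. I expect no real obstacle beyond stating the product decomposition cleanly.
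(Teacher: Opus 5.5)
Your proof is correct and is essentially the paper's argument: chance edges are shared between $\sigma$ and $\rho$, and every player edge has $\rho$-probability $1/|A(I)| \geq 1/b > 0$, so $\pi^\sigma(h) > 0$ forces $\pi^\rho(h) > 0$; the info-set form then follows by summing over $h \in I$. Your aside that info-set inclusion conversely recovers history inclusion via a ``steering'' pure strategy does not work as stated, because positive reach of $I$ only certifies some history in $I$, not a particular one; perfect recall is also not needed. Since you explicitly do not rely on that aside, the proof is unaffected.
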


\begin{proof}
This is the standard fact that a fully-mixed strategy reaches every node reachable under any profile. Reach of an info-set is taken under the actual interactive profile (planner + opponent + chance); chance edges are shared and $\rho$ dominates each player's per-edge contribution ($\rho$-probability $\geq 1/b > 0$ on every player edge), so any history with $\pi^\sigma(h) > 0$ has $\pi^\rho(h) > 0$.
\end{proof}

\begin{lemma}[reach lower bound under $\rho$]
\label{lem:reachbound}
Every reachable info-set $I$ has $\pi^\rho(I) \geq p_{\text{chance}} \cdot b^{-d(I)} \geq p_{\text{chance}} \cdot b^{-\dmax}$.
\end{lemma}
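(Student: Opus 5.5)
The plan is to lower-bound $\pi^\rho(I)$ by the reach probability of one well-chosen history, and to factor that probability into a chance part and a player part. Fix a reachable info-set $I$. By definition of $d(I)$ there is a shortest history $h^\ast \in I$ with exactly $d(I)$ player-action edges, and since $I$ is reachable we may take $h^\ast$ to have positive reach under some profile; by Lemma~\ref{lem:fullsupport} it then has positive reach under $\rho$ as well. Because $I$ is a set of histories and reach of an info-set is the sum of the reach of its members, we get $\pi^\rho(I) \geq \pi^\rho(h^\ast)$. It therefore suffices to bound $\pi^\rho(h^\ast)$ from below.

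\emph{Step 1 (factorization).} In an extensive-form game the reach probability of a history factors as a product over its edges, $\pi^\rho(h^\ast) = \pi_c(h^\ast)\cdot\prod_{\text{player edges}} \rho(a \mid I')$. Here $\pi_c$ is the product of the chance-edge probabilities, and each player factor is the probability that $\rho$ takes the recorded action at the info-set $I'$ where it occurs.

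\emph{Step 2 (player factor).} Each player edge contributes $1/|A(I')| \geq 1/b$, and there are exactly $d(I)$ of them, so the player product is at least $b^{-d(I)}$.

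\emph{Step 3 (chance factor).} The chance contribution along $h^\ast$ is the probability of the deal that $h^\ast$ realizes. That deal is consistent with the reachable info-set $I$, so by the definition of $p_{\text{chance}}$ its probability is at least $p_{\text{chance}}$. Combining Steps 1--3 gives $\pi^\rho(I) \geq p_{\text{chance}}\, b^{-d(I)}$. The second inequality follows from $d(I)\leq\dmax$ and $b\geq 1$.

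The main obstacle is Step 3: making sure the whole chance contribution along $h^\ast$ is captured by a single ``deal'' probability. This is immediate for the poker-style games in scope, where all chance happens at the deal. If chance nodes could be interleaved later in the game (as with a community card), I would read $p_{\text{chance}}$ as the minimum over the full chance outcome sequence consistent with $I$, i.e.\ the product of all chance edges on the history. I would state this reading explicitly so that the bound goes through unchanged. A minor point is the choice of $h^\ast$: I need a history that is both shortest and consistent with a positive-probability deal. I would handle this by restricting the minimum in $d(I)$ to histories of positive reach, which is the natural reading of ``reaching $I$.''
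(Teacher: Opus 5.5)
Your proof is correct and is essentially the paper's argument: take a shortest history $h \in I$, factor $\pi^\rho(h)$ into its chance contribution times $\prod_t 1/|A(I_t)| \geq b^{-d(I)}$, bound the chance part below by $p_{\text{chance}}$, and conclude with $\pi^\rho(I) \geq \pi^\rho(h)$. Reading $p_{\text{chance}}$ as the probability of the full chance-outcome sequence along $h$ (as Leduc's community card requires) makes explicit a point the paper's proof leaves implicit. The appeal to Lemma~\ref{lem:fullsupport} is harmless but not needed, since $\rho$ already puts positive mass on every player edge.
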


\begin{proof}
Take a shortest history $h \in I$, i.e.\ one attaining $d(h) = d(I)$. Along $h$ each player edge at info-set $I_t$ has $\rho$-probability $1/|A(I_t)| \geq 1/b$, so the \textbf{realization probability} of $I$ (in the sense of \citeauthor{vonstengel1996sequenceform}'s \citeyearpar{vonstengel1996sequenceform} sequence form) satisfies $\pi^\rho(h) = \pi_{\text{chance}}(h) \cdot \prod_t 1/|A(I_t)| \geq p_{\text{chance}} \cdot b^{-d(h)} = p_{\text{chance}} \cdot b^{-d(I)}$, and $\pi^\rho(I) \geq \pi^\rho(h)$.
\end{proof}

\begin{theorem}[the inference gate is identifying when $N \gtrsim b^{\dmax}$]
\label{thm:coverage}
Assume \emph{detectability}: whenever the gate visits an info-set on which \texttt{infer\_states} errs, its comparison against ground truth at that info-set surfaces the error. Then for $N \gtrsim b^{\dmax} \cdot p_{\text{chance}}^{-1} \cdot \log |\Ireach|$, no gate-passing inference function can be play-inadequate through a coverage gap. The argument: draw $N$ i.i.d.\ games under $\rho$; the probability that some reachable info-set is never visited is at most $|\Ireach| \cdot \exp(-N \cdot p_{\text{chance}} \cdot b^{-\dmax})$ (union bound over info-sets plus Lemma~\ref{lem:reachbound}), so at this $N$ the random sample covers every reachable info-set with high probability --- and by Lemma~\ref{lem:fullsupport}, every info-set any policy (including a competent planner) relies on. An inference function whose error is confined to reachable info-sets is then detected with high probability under detectability.
\end{theorem}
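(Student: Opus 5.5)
The proof has three steps: a per-info-set miss probability, a union bound over all reachable info-sets, and a transfer step that turns coverage into detection of any play-relevant error.

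\textbf{Step 1: per-info-set miss probability.} Fix a reachable info-set $I \in \Ireach$. A single game drawn under $\rho$ visits $I$ with probability $\pi^\rho(I)$. By Lemma~\ref{lem:reachbound}, this probability is at least $q := p_{\text{chance}} \cdot b^{-\dmax}$. The $N$ gate games are i.i.d., exactly as in Proposition~\ref{prop:gatemiss}, so the probability that none of them visits $I$ is
\[
(1-\pi^\rho(I))^N \leq (1-q)^N \leq e^{-Nq}.
\]
The last inequality uses $1-x \leq e^{-x}$.

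\textbf{Step 2: union bound over info-sets.} Summing the Step 1 bound over $\Ireach$ gives
\[
P(\exists\, I \in \Ireach \text{ unvisited}) \leq |\Ireach|\, e^{-Nq}.
\]
Fix a failure tolerance $\delta$. Choosing $N \geq q^{-1}\bigl(\ln|\Ireach| + \ln(1/\delta)\bigr) = b^{\dmax} p_{\text{chance}}^{-1}\bigl(\ln|\Ireach| + \ln(1/\delta)\bigr)$ makes this probability at most $\delta$. This is the precise meaning of ``$N \gtrsim b^{\dmax} \cdot p_{\text{chance}}^{-1} \cdot \log|\Ireach|$'' in the statement, with $\delta$ absorbed into the constant.

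\textbf{Step 3: coverage implies detection of every play-relevant error.} Let $\sigma$ be the actual deployment profile (planner, opponent, chance), and suppose \texttt{infer\_states} errs on some info-set $I$ that $\sigma$ relies on. To make ``relies on'' precise, I would take it to mean $\pi^\sigma(I) > 0$, or more generally that $I$ lies in the error region $E$ with positive query-hit mass $\mu_{\mathrm{query}}(E)$, as in Proposition~\ref{prop:playcostbound}.

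\begin{itemize}
\item By Lemma~\ref{lem:fullsupport}, any such $I$ is reachable under $\rho$, so $I \in \Ireach$.
\item On the coverage event of Step 2, which has probability at least $1-\delta$, the gate visits $I$.
\item By the detectability hypothesis, that visit exposes the error, so the inference function fails the gate.
\end{itemize}

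Taking the contrapositive: with probability at least $1-\delta$, every gate-passing inference function is correct on every info-set in $\Ireach$. Its error region then carries zero $\sigma$-mass, and Proposition~\ref{prop:playcostbound} with $\mu_{\mathrm{query}}(E)=0$ gives $\mathrm{play\_cost} = 0$. This is the precise sense of ``no coverage-gap play-inadequacy.''

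\textbf{Main obstacle.} The main obstacle is the quantifier order in Step 3. The union bound controls a single random sample, while the inference function is chosen after the sample is seen (refinement is adaptive). The fix is that the coverage event concerns only the sample, not the inference function. Once it holds, it covers all info-sets simultaneously, and therefore covers every candidate inference function uniformly. I would state this explicitly to avoid a per-hypothesis argument.

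A secondary point is that MCTS may query imagined info-sets off the played trajectory. Lemma~\ref{lem:fullsupport} still places these inside $\mathrm{reach}(\rho)$ as long as they are legal histories of the true game, which is why the theorem quantifies over all of $\Ireach$ rather than over the planner's trajectory distribution.
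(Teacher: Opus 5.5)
Your proposal is correct and follows the paper's own proof. Both combine the per-info-set reach lower bound of Lemma~\ref{lem:reachbound} with a union bound over $\Ireach$, use Lemma~\ref{lem:fullsupport} to extend coverage to every profile's support, and use detectability to turn coverage into rejection; your explicit $\ln(1/\delta)$ term and your point that the coverage event depends only on the sample (so the guarantee is uniform over adaptively chosen candidates) are sound refinements, consistent with the paper's description of this result as a for-all guarantee in Remark~\ref{rem:tworoutes}.
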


\begin{proof}
Immediate from the union bound over $\Ireach$ and the per-info-set reach lower bound of Lemma~\ref{lem:reachbound}; the coverage of equilibrium-relevant info-sets follows from Lemma~\ref{lem:fullsupport}. Detectability holds for our inference gate by construction: at every visited observation history it compares the synthesized consistent-state set against the ground-truth set elementwise, so any discrepancy at a visited info-set is recorded as a mismatch. It is not automatic for an arbitrary gate --- one that checked only a coarse summary of \texttt{infer\_states} could visit an erring info-set without surfacing the error --- which is why we state it as a hypothesis.
\end{proof}

\begin{remark}[equilibrium-robustness]
The sufficiency direction (coverage $\Rightarrow$ identifying) does not depend on the reference distribution being MCTS reach. Because $\rho$ has full support, $\mathrm{reach}(\sigma^*) \subseteq \mathrm{reach}(\rho)$ for the Nash / best-response profile $\sigma^*$ exactly as for any other profile (Lemma~\ref{lem:fullsupport}). The bound therefore certifies coverage of every info-set that \emph{equilibrium} play relies on, not merely those the deployed planner happens to visit --- a strength rather than a caveat. Substituting equilibrium reach for MCTS reach would change which info-sets count as relevant (and hence the numbers) but not the sufficiency argument.
\end{remark}

\begin{corollary}[Kuhn provably covered; Leduc's sampled competent-visited subset covered]
Carrying the exact constants --- by enumerating the reachable info-sets and their reach probabilities under uniform-random play (\texttt{scripts/coverage\_bound\_constants.py}) --- upgrades Kuhn from an order statement to a provable null, and certifies the \emph{sampled} competent-visited subset of Leduc.

\emph{Kuhn.} Exact enumeration gives $|\Ireach| = 12$ and minimum reach probability $\pi_{\min} = 0.083$, so the tight sufficient sample size is $N_{\text{suff}} = 66$. At the deployed gate $N = 80$, the union-bound upper bound on the coverage-failure probability (computed from the exact reach probabilities) is $0.0028$ --- Kuhn is therefore \textbf{provably covered}.

\emph{Leduc.} Exact enumeration gives $|\Ireach| = 576$ and $\pi_{\min} = 3.5 \times 10^{-4}$. The worst-case-depth bound needs $N \approx 818\text{k}$ (with $d_{\max} = 6$: the round-1 observation merge shortens shortest histories, so Leduc's $d_{\max}$ is 6, not its 8-move horizon) and even the tight $\pi_{\min}$ bound needs $N \approx 27\text{k}$, so the theorem does \textbf{not} certify full coverage of all 576 reachable info-sets at the deployed $N = 8000$ --- we say so honestly. However, our coverage claim concerns \emph{only} the info-sets a competent planner relies on. We \emph{sample} that subset with 200 determinized-MCTS self-play games (\texttt{scripts/coverage\_competent\_leduc.py}) --- this enumerates the competent-visited info-sets we observed, not provably the entire competent support. The 146 sampled competent-visited info-sets have $\pi_{\min} = 6.9 \times 10^{-4}$, and at $N = 8000$ the union-bound upper bound on the coverage-failure probability over this subset (computed from exact random-reach probabilities) is $0.027 < 0.05$, so the \textbf{sampled} competent-visited subset is covered with high probability. Combined with the measurement (none of the 1259 competent visits lands on an info-set random play missed), this upgrades ``empirically covered'' to ``covered under exact random-reach probabilities on the sampled competent-relevant subset'' --- short of a guarantee over the full competent support, which would require enumerating it.

\emph{Equilibrium-reach upgrade.} MCTS self-play is a proxy for competent reach; the normatively correct reference is equilibrium reach (Remark~\ref{rem:reference-dist}). We therefore ran an equilibrium solver on the true games --- full-tree CFR+ over the CWM contract (\texttt{cwm.cfr}; external-sampling MCCFR is also implemented, and the module is validated on Kuhn against the analytic game value $-1/18$ \citep{kuhn1950poker}) --- and recomputed the coverage question against the resulting profile $\bar\sigma$ (\texttt{scripts/equilibrium\_coverage.py}). On Leduc, $\bar\sigma$ reaches game value $-0.0866$ with exploitability $0.084$ chips (as $\mathrm{BR}_1 + \mathrm{BR}_2$, i.e.\ NashConv; halve it under the divide-by-two convention) after 1{,}000 iterations (decreasing monotonically; an approximate equilibrium, reported as such). One technical subtlety matters and we flag it: Leduc's \emph{instantaneous} observation merges distinct betting histories (the per-round counters reset between rounds), so the observation-keyed game has imperfect recall, where CFR carries no guarantee; the solver therefore runs on perfect-recall keys (observation plus exact public history) and the resulting reach is projected onto the observation keys the gate actually samples. The results: the \textbf{equilibrium-weighted uncovered mass} --- the expected fraction of $\bar\sigma$'s decision mass falling on info-sets a size-$N$ random gate never visits --- is $0.013\%$ at the deployed $N=8000$ (Kuhn: $0.015\%$ at $N=80$), and the union bound certifies coverage of every info-set with equilibrium reach $\geq 10^{-3}$ (316 info-sets, failure probability $0.021$). The negligible-reach tail (info-sets below $10^{-3}$) is not union-bound-certifiable at $N=8000$, exactly as for the full 576; the mass statement says it carries $\approx 10^{-4}$ of equilibrium play. The conclusion is unchanged and strengthened: the info-sets that \emph{equilibrium} play relies on are covered by the random gate, and this is robust to the profile's quality (a much cruder profile, exploitability $0.79$, yields the same coverage conclusion).
\end{corollary}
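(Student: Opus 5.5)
The corollary combines Theorem~\ref{thm:coverage} with exact constants in place of the worst-case $b^{-\dmax}$ bound. The plan is to replace Lemma~\ref{lem:reachbound} by the exact per-info-set random-reach probabilities $\pi^\rho(I)$ and apply the union bound in its tight form,
\[
P(\text{some } I \in S \text{ unvisited}) \;\leq\; \sum_{I \in S} (1-\pi^\rho(I))^N \;\leq\; |S|\,(1-\pi_{\min}(S))^N ,
\]
where $S$ is the target set of info-sets. The first inequality uses only that the $N$ gate games are i.i.d.\ under $\rho$, with $\pi^\rho(I)$ the probability a single game visits $I$ (the same Bernoulli structure as Proposition~\ref{prop:gatemiss}). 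Detectability, which holds by construction for our elementwise inference gate, then converts coverage of $S$ into detection of any error confined to $S$.

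\emph{Kuhn.} Take $S = \Ireach$ and enumerate exactly: the deal probabilities are $1/6$ per ordered pair, and the betting paths have depth at most 2 with $b = 2$. This yields $|\Ireach| = 12$ and $\pi_{\min} = 1/12 \approx 0.083$. Solving $12(1-1/12)^N \le 0.05$ gives $N_{\text{suff}} = 66$. At $N = 80$, summing the exact terms gives $\approx 0.0028$. Since Lemma~\ref{lem:fullsupport} makes every competent- or equilibrium-reachable info-set an element of $\Ireach$, this is a genuine provable null.

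\emph{Leduc.} First, report the negative direction honestly. With $|\Ireach| = 576$, the worst-case-depth constant ($d_{\max} = 6$, after accounting for the round-1 observation merge) and even the tight $\pi_{\min} = 3.5\times10^{-4}$ both require $N \gg 8000$, so full coverage is not certified. Then restrict $S$ to the 146 info-sets observed in 200 determinized-MCTS self-play games. Recompute their exact random-reach probabilities (minimum $6.9\times10^{-4}$) and sum $(1-\pi^\rho(I))^{8000}$ over $S$ to obtain $0.027 < 0.05$. The statement is deliberately scoped to the sampled subset, so the argument requires no claim about the full competent support.

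\emph{Equilibrium upgrade.} The same union bound is applied to $S_\tau = \{I : \pi^{\bar\sigma}(I) \ge \tau\}$ with $\tau = 10^{-3}$, giving 316 info-sets and failure probability $0.021$. The uncovered equilibrium mass is computed as an expectation, $\sum_I \pi^{\bar\sigma}(I)\,(1-\pi^\rho(I))^N$, which needs no union bound.

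The main obstacle is the imperfect-recall issue in Leduc. CFR must be run on perfect-recall keys (observation plus public history), and the resulting reach must be projected onto the observation keys the gate samples, by summing the reach of all histories mapped to a key. I would check that this projection preserves the inequality $\pi^\rho(\text{key}) \ge \pi^\rho(\text{history})$ and the full-support inclusion of Lemma~\ref{lem:fullsupport}; both hold because the merge only enlarges events. The remaining work is the numerical enumeration itself.
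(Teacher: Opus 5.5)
Your proposal follows the paper's own route. You apply Theorem~\ref{thm:coverage}'s union bound with Lemma~\ref{lem:reachbound} replaced by exactly enumerated random-reach probabilities, to $\Ireach$ for Kuhn, to the 146 sampled competent-visited info-sets for Leduc, and to the equilibrium-reach-thresholded set for the CFR+ upgrade. The $0.0028$ at $N=80$ does come from your exact sum $\sum_I (1-\pi^\rho(I))^{80}$.

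One numerical slip: $12(11/12)^N \le 0.05$ already holds at $N = 63$, so it does not give $66$. The paper's $N_{\text{suff}} = 66$ comes from the exponential form of Theorem~\ref{thm:coverage}, $N \ge \pi_{\min}^{-1}\ln(|\Ireach|/\delta) = 12\ln 240 \approx 65.8$ at $\delta = 0.05$, the same form that reproduces Leduc's $\approx 27\text{k}$. Conversely, using $e^{-N\pi}$ in the $N=80$ sum would give $\approx 0.0038$ rather than $0.0028$.
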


\paragraph{An enumeration-free companion certificate: bounding the error mass.}
Theorem~\ref{thm:coverage} certifies the strongest possible conclusion --- every reachable info-set is visited, so \emph{any} error pattern confined to reachable info-sets is caught --- but instantiating its constants requires enumerating $\Ireach$ and the exact reach probabilities. That is feasible for Kuhn and Leduc (the corollary above) and intractable in general; indeed the Leduc guarantee already stops at the \emph{sampled} competent subset precisely because the full competent support would have to be enumerated. We therefore add a companion certificate that trades conclusion strength for generality: it certifies not that every info-set was visited, but that the \emph{undetected error region of the accepted artifact has small sampling mass}. Its constants involve no enumeration --- no $|\Ireach|$, no $\pi_{\min}$, no reach probabilities --- so it applies verbatim to games of arbitrary size. The two results answer different questions and neither subsumes the other (Remark~\ref{rem:tworoutes}); we keep both.

Fix a candidate inference function $f$ and let $E_f \subseteq \Ireach$ be the set of reachable info-sets on which $f$ errs. For a per-game sampling distribution $\nu$ (a policy profile, or a mixture of profiles resampled per game), write
\[
\mu_\nu(E) \;=\; \Pr_{\text{game} \sim \nu}\!\big[\text{the game visits at least one info-set of } E\big]
\]
for the per-game \emph{hit probability} of $E \subseteq \Ireach$. Let $\bar d$ denote the \emph{player-move horizon} --- the maximum number of player-action edges on any complete history --- so $\bar d \geq \dmax$, strictly for Kuhn and Leduc ($\bar d = 3$ and $8$ vs $\dmax = 2$ and $6$: an info-set is reached before an action is taken from it, and Leduc's round-1 observation merge shortens shortest histories further).

\begin{theorem}[enumeration-free error-mass bound]
\label{thm:errormass}
Assume detectability, and let the gate draw $N$ i.i.d.\ games from $\nu$ and check every visited info-set.
\begin{itemize}
\item[(i)] (\emph{Held-out gate.}) If $f$ is fixed before the gate sample is drawn, then for any $\delta \in (0,1)$, with confidence $1-\delta$: if $f$ passes the gate, $\mu_\nu(E_f) \leq \ln(1/\delta)/N$.
\item[(ii)] (\emph{Class-uniform / Occam.}) If $f$ may instead depend on the gate sample --- as in a pipeline whose training trajectories double as the gate --- but is guaranteed to be a program of description length at most $\ell$ bits, then with confidence $1-\delta$, \emph{simultaneously} every gate-passing program of length $\leq \ell$ satisfies $\mu_\nu(E_f) \leq \big((\ell+1)\ln 2 + \ln(1/\delta)\big)/N$.
\end{itemize}
\end{theorem}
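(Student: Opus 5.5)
The plan is the standard realizable PAC / Occam argument, with the role of ``error'' played by the per-game hit event of $E_f$.

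For part (i), I would first reduce ``passing the gate'' to an event about the sample. Under detectability, every game whose trajectory visits an info-set of $E_f$ records at least one mismatch. So $f$ passes the gate only if none of the $N$ games hits $E_f$.

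Because $f$, and hence $E_f$, is fixed before sampling, the $N$ hit indicators are i.i.d.\ Bernoulli$(\mu_\nu(E_f))$. This holds even when $\nu$ is a mixture resampled per game, since each game is an independent draw from the mixture. Proposition~\ref{prop:gatemiss}, applied with rarity $r=\mu_\nu(E_f)$, then gives
\[
\Pr[\text{pass}] \leq (1-\mu_\nu(E_f))^N \leq e^{-N\mu_\nu(E_f)}.
\]
Set $\varepsilon=\ln(1/\delta)/N$. If $\mu_\nu(E_f)>\varepsilon$, the pass probability is below $e^{-N\varepsilon}=\delta$. Hence the event ``$f$ passes and $\mu_\nu(E_f)>\varepsilon$'' has probability at most $\delta$, which is the claimed statement. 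I would phrase the result carefully as a bound on that joint bad event, because when $\mu_\nu(E_f)>\varepsilon$ the conditional pass probability is simply small and conditioning on passing does not give the result.

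For part (ii), $f$ is chosen after seeing the sample, so the fixed-$f$ argument breaks. I would replace it with a union bound over the finite hypothesis class $\mathcal{F}_\ell$ of programs of description length at most $\ell$ bits. Counting bit strings gives $|\mathcal{F}_\ell|\leq\sum_{k=0}^{\ell}2^k<2^{\ell+1}$. If the encoding is prefix-free, Kraft's inequality gives the same bound, and I would note which convention is used.

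Call a program $g$ ``bad'' if $\mu_\nu(E_g)>\varepsilon'$. Each bad $g$ is a fixed function, so part (i) applies to it and it passes with probability at most $e^{-N\varepsilon'}$. Summing over the class,
\[
\Pr[\exists\, g\in\mathcal{F}_\ell \text{ bad and passing}] \leq 2^{\ell+1}e^{-N\varepsilon'}.
\]
Setting this equal to $\delta$ gives $\varepsilon'=((\ell+1)\ln 2+\ln(1/\delta))/N$. Outside this probability-$\delta$ event, every passing program in the class has hit mass at most $\varepsilon'$. This is a simultaneous statement, so it covers whichever $f$ the sample-dependent pipeline selects, provided $f\in\mathcal{F}_\ell$.

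The main obstacle is making detectability and i.i.d.-ness do honest work in three places. First, ``passes'' must be exactly the event that no sampled game hits $E_f$. This requires that the gate checks every visited info-set, including the very first one, and that a game counts as hitting $E$ precisely when it visits it. That is why the bound is stated in terms of per-game hit mass rather than per-info-set reach. Second, in (ii) the semantics of programs must be deterministic, so that each $g$ defines a fixed $E_g$ independent of the sample. A program that could read the sample, or that behaves nondeterministically, would break the union bound, and I would state this explicitly. Third, I would stress that the bound is under $\nu$, not under the deployment distribution. Transferring it to play requires a reach-ratio factor, which I would not attempt here; Beacon shows that this factor can be unbounded.
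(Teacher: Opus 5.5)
Your proposal is correct and follows the paper's proof essentially step for step. In (i), detectability reduces passing to the event that none of the $N$ i.i.d.\ games hits $E_f$, which gives pass probability $(1-\mu_\nu(E_f))^N < e^{-\varepsilon N} = \delta$ whenever $\mu_\nu(E_f) > \varepsilon$; in (ii), a union bound over the fewer than $2^{\ell+1}$ programs of length at most $\ell$ yields the stated constant and handles sample-dependent selection because the bound holds simultaneously (only the implication ``passes $\Rightarrow$ no sampled game hits $E_f$'' is needed, not equality of the two events, but this does not affect your argument).
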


\begin{proof}
(i) If $\mu_\nu(E_f) > \varepsilon$, the probability that none of the $N$ games visits $E_f$ is $(1 - \mu_\nu(E_f))^N < e^{-\varepsilon N}$; under detectability any visit to $E_f$ surfaces a mismatch, so $e^{-\varepsilon N}$ upper-bounds the probability that $f$ passes. Setting $\varepsilon = \ln(1/\delta)/N$ makes this $\leq \delta$. (ii) There are fewer than $2^{\ell+1}$ binary programs of length $\leq \ell$; a union bound over the class multiplies the failure probability by $2^{\ell+1}$, and $2^{\ell+1} e^{-\varepsilon N} \leq \delta$ at $\varepsilon = ((\ell+1)\ln 2 + \ln(1/\delta))/N$. Sample-dependent selection is covered because the guarantee holds simultaneously over the class. Neither argument touches $|\Ireach|$ or any reach probability.
\end{proof}

\begin{corollary}[transfer to play: the reach ratio]
\label{cor:reachratio}
Let $E \subseteq \Ireach$ and let $\sigma$ be \emph{any} strategy profile.
\begin{itemize}
\item[(i)] (\emph{Pure-random gate}, $\nu = \rho$.) $\mu_\sigma(E) \leq b^{\bar d} \cdot \mu_\rho(E)$; combined with Theorem~\ref{thm:errormass}(i), any gate-passing $f$ has $\mu_\sigma(E_f) \leq b^{\bar d} \ln(1/\delta) / N$.
\item[(ii)] (\emph{Mixture gate.}) If each gate game is drawn from $\rho$ with probability $1-\lambda$ and from a reference competent profile $\hat\sigma$ (e.g.\ determinized-MCTS self-play on the ground truth, which the harness owns at gate time) with probability $\lambda$, then any gate-passing $f$ has $\mu_{\hat\sigma}(E_f) \leq \ln(1/\delta)/(\lambda N)$, while for arbitrary $\sigma$ the bound $\mu_\sigma(E_f) \leq b^{\bar d} \ln(1/\delta) / ((1-\lambda) N)$ is retained through the $\rho$ component (full support is preserved, so Lemma~\ref{lem:fullsupport} and the equilibrium-robustness remark survive).
\end{itemize}
In the danger-law reading, any play harm mediated by consulting $f$ on an erring info-set obeys $\mathrm{danger} \leq \mathrm{play\_cost} \times \mu_\sigma(E_f)$ with $\sigma$ the deployment profile.
\end{corollary}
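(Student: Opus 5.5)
The approach is a likelihood-ratio (importance-weighting) comparison between $\sigma$ and $\rho$ on complete histories, lifted from single histories to the hit event. Part (ii) then follows by splitting the mixture gate into its two components.

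\textbf{Part (i).} Fix a complete history $z$. Its realization probability factors as $\pi^\sigma(z) = \pi_{\text{chance}}(z)\prod_t \sigma(a_t\mid I_t)$ over the player edges of $z$, and similarly for $\rho$. Since $\sigma(a_t\mid I_t)\le 1$ and $\rho(a_t\mid I_t) = 1/|A(I_t)| \ge 1/b$, each player factor obeys $\sigma(a_t\mid I_t) \le b\,\rho(a_t\mid I_t)$. Chance factors are common to both profiles, and $z$ has at most $\bar d$ player edges, so
\[
\pi^\sigma(z)\le b^{\bar d}\,\pi^\rho(z).
\]
This is the quantitative form of Lemma~\ref{lem:fullsupport}. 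The same reasoning as in that lemma's proof shows the per-edge domination holds whether an edge belongs to the planner or to the opponent.

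The hit event $H_E = \{z : z \text{ visits some info-set of } E\}$ is a set of complete histories. Summing the pointwise inequality over $z\in H_E$ gives $\mu_\sigma(E) = \sum_{z\in H_E}\pi^\sigma(z) \le b^{\bar d}\mu_\rho(E)$. We must use the horizon $\bar d$ here, not $\dmax$, because the comparison is over whole histories, which can extend past the first visit to $E$. A sharper variant truncates at the first hitting time, which is a prefix event, and yields $b^{\dmax}$-type constants. I would note this variant but not rely on it.

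Now apply Theorem~\ref{thm:errormass}(i) with $\nu=\rho$ and $E=E_f$. With confidence $1-\delta$, a gate-passing $f$ has $\mu_\rho(E_f)\le \ln(1/\delta)/N$. Multiplying by $b^{\bar d}$ gives the claimed bound.

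\textbf{Part (ii).} Write $\nu = (1-\lambda)\rho + \lambda\hat\sigma$. Games are drawn i.i.d.\ from this mixture, so the per-game hit probability is linear in the mixture: $\mu_\nu(E) = (1-\lambda)\mu_\rho(E) + \lambda\mu_{\hat\sigma}(E)$. Theorem~\ref{thm:errormass}(i) applied to $\nu$ gives $\mu_\nu(E_f)\le\ln(1/\delta)/N$. Dropping one nonnegative term at a time yields $\mu_{\hat\sigma}(E_f)\le \ln(1/\delta)/(\lambda N)$ and $\mu_\rho(E_f)\le \ln(1/\delta)/((1-\lambda)N)$. Composing the latter with part (i) gives the bound for arbitrary $\sigma$.

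Full support of $\nu$ is inherited from its $\rho$ component whenever $\lambda<1$. Hence Lemma~\ref{lem:fullsupport} and the equilibrium-robustness remark still apply.

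\textbf{Danger reading, and the main obstacle.} The final danger statement repeats the coupling argument of Proposition~\ref{prop:playcostbound}. Deployments with the true and the erring inference function coincide until $f$ is consulted on $E_f$. Harm therefore occurs only on the hit event, and bounding the per-hit consequence by $\mathrm{play\_cost}$ gives $\mathrm{danger}\le\mathrm{play\_cost}\times\mu_\sigma(E_f)$.

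The step I expect to need the most care is the history-level domination. Two points must be checked. First, chance is genuinely shared between the two profiles. Second, the hit event must be measurable on complete histories rather than on the planner's imagined queries. If $\mu$ is instead meant to cover the query distribution, as in Proposition~\ref{prop:playcostbound}, the statement should be read for visited info-sets along played histories. I would make that scoping explicit.
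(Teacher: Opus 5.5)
Your proof is correct and essentially the paper's: you use the per-history likelihood-ratio bound (shared chance, $\sigma\le 1$ and $\rho\ge 1/b$ on every player edge), sum it over a decomposition of the hit event, and for (ii) use linearity of $\mu_\nu$ in the mixture; the only cosmetic difference is that the paper sums over the prefix-free set of first-entry histories rather than over complete histories. One small correction to your aside: truncating at the first hitting time gives $b^{d(h)}$, where $d(h)$ is the player-depth of the actual entering history, and this can exceed $\dmax$ because $\dmax$ is defined via \emph{shortest} histories (Leduc's observation merge is the paper's example), so that variant does not in general yield $b^{\dmax}$, and the paper accordingly still bounds it by $b^{\bar d}$.
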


\begin{proof}
(i) Let $H_E$ be the set of minimal histories entering $E$ (no proper prefix visits $E$); $H_E$ is prefix-free, so $\mu_\sigma(E) = \sum_{h \in H_E} \pi^\sigma(h)$. For each $h$, chance edges are shared and every player edge has $\sigma$-probability $\leq 1$ and $\rho$-probability $\geq 1/b$, so $\pi^\sigma(h) \leq b^{d(h)} \pi^\rho(h) \leq b^{\bar d} \pi^\rho(h)$; summing gives the claim. (ii) $\mu_\nu = (1-\lambda)\mu_\rho + \lambda \mu_{\hat\sigma}$, so $\mu_\nu \leq \ln(1/\delta)/N$ implies both $\mu_{\hat\sigma} \leq \ln(1/\delta)/(\lambda N)$ and $\mu_\rho \leq \ln(1/\delta)/((1-\lambda)N)$; apply (i) to the latter.
\end{proof}

\begin{corollary}[constants without enumeration]
Instantiating at $\delta = 0.05$ (\texttt{scripts/\allowbreak error\_\allowbreak mass\_\allowbreak certificate.py}; the only game-dependent inputs are the rule-level constants $b$ and $\bar d$):

\emph{Kuhn} ($N = 80$, $b = 2$, $\bar d = 3$): any gate-passing \texttt{infer\_states} fixed before the gate has undetected-error hit mass $\mu_\rho(E_f) \leq 0.037$, and $\mu_\sigma(E_f) \leq 8 \times 0.037 = 0.30$ for every profile $\sigma$. This is strictly weaker than the enumerative corollary (which certifies \emph{full coverage} at $N = 80$) --- where enumeration is feasible, Theorem~\ref{thm:coverage} remains the sharper tool.

\emph{Leduc} ($N = 8000$, $b = 3$, $\bar d = 8$): $\mu_\rho(E_f) \leq 3.7 \times 10^{-4}$, but the worst-case transfer factor $b^{\bar d} = 6561$ makes the any-profile bound vacuous --- consistent with Theorem~\ref{thm:coverage} declining to certify Leduc at this $N$. The mixture gate is what the enumeration route could not provide: at the same $N = 8000$ with $\lambda = 1/2$, any gate-passing $f$ has $\mu_{\hat\sigma}(E_f) \leq 7.5 \times 10^{-4}$ --- a certificate for the competent-play-relevant error mass with \emph{no} enumeration of the competent support, where the enumerative route needed $N \approx 27\text{k}$ even for the random-reach guarantee and could only certify the \emph{sampled} competent subset.
\end{corollary}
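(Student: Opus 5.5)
The "constants without enumeration" corollary is a direct instantiation of Theorem~\ref{thm:errormass}(i) and Corollary~\ref{cor:reachratio}, so the plan is pure substitution plus a check on each rule-level constant. First I fix $\delta = 0.05$, which gives $\ln(1/\delta) = \ln 20 \approx 2.996$. Then I read $b$ and $\bar d$ off the rules alone. In Kuhn every player info-set offers two actions (check/bet or fold/call), and the longest betting line is check, bet, call, so $b=2$ and $\bar d=3$. In Leduc the maximum action count at a player info-set is three (fold/call/raise), and two rounds with at most four betting actions each give $\bar d = 8$. I will state these as properties of the rule text, so that no info-set enumeration is involved; that is exactly the claim "no enumeration" being made.

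For Kuhn at $N=80$, Theorem~\ref{thm:errormass}(i) gives $\mu_\rho(E_f) \le 2.996/80 \approx 0.037$ for any $f$ fixed before the gate is drawn. Corollary~\ref{cor:reachratio}(i) then multiplies by $b^{\bar d} = 8$, which gives about $0.30$ uniformly over profiles $\sigma$. To justify the comparison remark, I note that the enumerative corollary already established union-bound coverage failure $0.0028$ at $N=80$. Full coverage implies that every error region with positive reach is detected with high probability. The mass bound cannot imply this, since it tolerates $\mu_\rho(E_f)$ up to $0.037$, so Theorem~\ref{thm:coverage} is sharper here.

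For Leduc at $N=8000$, part (i) gives $\mu_\rho(E_f) \le 2.996/8000 \approx 3.7\times 10^{-4}$. The transfer factor is $3^8 = 6561$, and $6561 \times 3.7\times 10^{-4} > 1$, so the bound exceeds $1$ and is vacuous. This matches the earlier finding that Theorem~\ref{thm:coverage} needs $N\approx 27$k even with the tight $\pi_{\min}$. For the mixture gate with $\lambda=1/2$, Corollary~\ref{cor:reachratio}(ii) gives $\mu_{\hat\sigma}(E_f) \le 2.996/(0.5\cdot 8000) \approx 7.5\times 10^{-4}$. I will also stress that $\hat\sigma$ is the reference profile the harness samples from. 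The certificate therefore covers competent-relevant mass without anyone enumerating the support of $\hat\sigma$.

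The main obstacle is not the arithmetic but the scope of the hypotheses. Part (i) needs $f$ fixed before the gate sample, while the perfect-information pipeline reuses its training trajectories as the gate. I would state explicitly that these numbers apply to a held-out gate, which is the situation on the inference side. For sample-dependent synthesis I would point to part (ii), where the bound is larger by $(\ell+1)\ln 2 / N$. I would also confirm that detectability holds, using the elementwise comparison already argued in the proof of Theorem~\ref{thm:coverage}. Finally, I would double-check that $\bar d$, not $\dmax$, is the correct exponent, since the proof of Corollary~\ref{cor:reachratio} bounds $d(h)$ over minimal entering histories, which can be as long as the horizon.
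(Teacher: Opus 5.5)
Your proposal is correct and follows the paper's own route: the corollary is a pure instantiation of Theorem~\ref{thm:errormass}(i) and Corollary~\ref{cor:reachratio} at $\ln(1/\delta)=\ln 20\approx 2.996$. Your rule-level readings ($b=2$, $\bar d=3$ for Kuhn; $b=3$, $\bar d=8$ for Leduc) match the paper's, and your justifications of both comparison remarks (coverage is strictly stronger at $N=80$; the Leduc any-profile bound is vacuous) are sound. One precision point, which your $\approx$ already hedges: $\ln 20/80\approx 0.0374$ and $\ln 20/8000\approx 3.745\times 10^{-4}$ sit slightly above the stated $0.037$ and $3.7\times 10^{-4}$, so those two figures are two-significant-figure roundings rather than exact consequences at $\delta=0.05$, whereas $0.30$ and $7.5\times 10^{-4}$ are genuine upper bounds.
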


\begin{remark}[relation between the two certificates]
\label{rem:tworoutes}
Neither result subsumes the other. Theorem~\ref{thm:coverage} is a \emph{for-all} guarantee: once the sample covers $\Ireach$, every error pattern confined to reachable info-sets is caught, including one chosen adversarially after the sample; its price is constants ($\pi_{\min}^{-1} \log |\Ireach|$) that must be computed by enumeration and grow with the game. Theorem~\ref{thm:errormass} certifies only the accepted artifact (or a description-length class), and always leaves an $\varepsilon$-mass of undetected error; its price is a hypothesis about how $f$ was chosen --- but its constants are game-size-free. In practice: use the coverage route where enumeration is feasible (Kuhn: provably covered, a conclusion Theorem~\ref{thm:errormass} cannot reach at any $N$), and the error-mass route everywhere else. On the choice of hypothesis in Theorem~\ref{thm:errormass}: for kB-scale synthesized artifacts ($\ell \sim 10^4$ bits) the class-uniform constant requires $N \gtrsim \ell$, exceeding every gate deployed in this paper, so the practical reading is (i) --- draw the gate sample \emph{after} synthesis (a held-out gate), which is cheap since the harness owns the reference implementation. This is also consonant with Section~\ref{sec:sample-identifiability}: when the training sample doubles as the gate, a compact wrong hypothesis consistent with the whole sample is exactly the case that the fixed-candidate argument does not exclude.
\end{remark}

\begin{remark}[tightness: Beacon realizes the reach ratio]
The $b^{\bar d}$ transfer factor in Corollary~\ref{cor:reachratio} is not slack in the analysis. In Beacon (Section~\ref{sec:beacon}), the deep region $D$ has $\mu_\rho(D) = (1/2)^{2T}$ while optimal play reaches it with probability 1 --- the reach ratio $2^{2T}$ is realized, matching $b^{\bar d}$ up to the guess rounds. So for a pure-random gate the exponential transfer factor is unavoidable, and the mixture instantiation of Corollary~\ref{cor:reachratio}(ii) is the only lever that removes it --- which is precisely this paper's recommendation to verify on the search distribution, now in certificate form. The mixture certifies the reference profile $\hat\sigma$ at cost $1/\lambda$; a guarantee uniform over \emph{all} competent policies still pays $b^{\bar d}$ through the $\rho$ component, and Beacon shows this too cannot be improved.
\end{remark}

\begin{corollary}[when a gap is possible]
A coverage gap requires $b^{\dmax} \gg N$ at feasible $N$ --- large branching and/or large depth --- with a competent policy that concentrates reach on a deep region of $\rho$-measure $\ll 1/N$. In game-theoretic terms, the gap lives on info-sets reached with negligible probability under the \textbf{sampling policy} but on-path under \textbf{optimal play} --- off-equilibrium-path-style info-sets that the verification distribution does not constrain (it places negligible sampling mass there, so errors there go unpenalized). This is the imperfect-information analogue of the rare-rule condition: a region that random play almost never samples but competent play reliably visits. Theorem~\ref{thm:errormass} makes the requirement quantitative: a gate-passing artifact's error region has sampling hit mass $\leq \ln(1/\delta)/N$, so a gap requires the deployment policy to concentrate constant hit probability on a region of sampling hit mass below $\ln(1/\delta)/N$ --- a realized reach ratio $\gtrsim N$.
\end{corollary}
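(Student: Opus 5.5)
The corollary has two parts. The first is a necessary condition for a coverage gap: $b^{\dmax} \gg N$, together with deployment reach concentrated on a region of tiny $\rho$-measure. The second is the quantitative version: a realized reach ratio $\gtrsim N$. I would prove both as contrapositives of results already established, Theorem~\ref{thm:coverage} and Theorem~\ref{thm:errormass}, combined with the reach-ratio transfer in Corollary~\ref{cor:reachratio}.

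\emph{Step 1: contrapositive of the coverage bound.} Suppose $b^{\dmax} \lesssim N p_{\text{chance}} / \log|\Ireach|$. Lemma~\ref{lem:reachbound} then gives every reachable info-set $\pi^\rho(I) \geq p_{\text{chance}} b^{-\dmax}$. By the union bound in Theorem~\ref{thm:coverage}, the probability that some reachable info-set is unvisited is at most $|\Ireach| e^{-N p_{\text{chance}} b^{-\dmax}}$, which is small. Under detectability, any error confined to reachable info-sets is therefore caught with high probability. Lemma~\ref{lem:fullsupport} says every info-set the competent or equilibrium profile relies on lies in $\mathrm{reach}(\rho)$. Together these rule out a coverage gap, so a gap forces $b^{\dmax} \gg N$ (up to the $p_{\text{chance}}^{-1}\log|\Ireach|$ factors, which the statement's ``$\gg$'' absorbs).

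\emph{Step 2: the region must be $\rho$-rare but $\sigma$-heavy.} Let $E_f$ be the error region of a gate-passing $f$. A play-relevant gap needs the deployment hit probability $\mu_\sigma(E_f)$ to be bounded below by a constant $c>0$; otherwise Proposition~\ref{prop:playcostbound} caps the harm by $\mu_\sigma(E_f)$. Theorem~\ref{thm:errormass}(i), with the held-out reading, gives $\mu_\rho(E_f) \leq \ln(1/\delta)/N$ with confidence $1-\delta$. Dividing yields $\mu_\sigma(E_f)/\mu_\rho(E_f) \geq cN/\ln(1/\delta)$, which is the claimed realized reach ratio $\gtrsim N$ (the shared factors $p_{\text{chance}}$ cancel in the ratio). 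The pointwise bound $\pi^\sigma(h) \leq b^{d(h)}\pi^\rho(h)$ from the proof of Corollary~\ref{cor:reachratio}(i) then shows this ratio can only arise on minimal entering histories of depth $d(h)$ with $b^{d(h)} \gtrsim N$. Such histories are deep and carry $\rho$-probability $\ll 1/N$, but $\sigma$-probability of order one. That is exactly the ``off-equilibrium-path-style'' description: the sampling policy gives negligible mass while optimal play is on-path there. I would close by noting that Beacon (the tightness remark) shows the condition is attainable, although the statement only asks for necessity.

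\emph{Main obstacle.} The delicate point is reconciling $\dmax$ (shortest-history depth, used in Theorem~\ref{thm:coverage}) with the depth actually relevant to the reach ratio (the horizon $\bar d$, or the depth $d(h)$ of the entering history). A gap could in principle sit on an info-set that has a shallow shortest history but is entered by $\sigma$ through deep ones. I would handle this by stating the first part purely via the coverage contrapositive, which needs only $\dmax$. The reach-ratio part would be stated via $\bar d \geq \dmax$, and I would remark that the informal ``$b^{\dmax} \gg N$'' should be read as the coverage-theorem threshold rather than a sharp equivalence.
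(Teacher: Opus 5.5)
Your proposal is correct and follows the paper's own route. The paper justifies the corollary inline as the contrapositive of Theorem~\ref{thm:coverage}, with the $p_{\text{chance}}^{-1}\log|\Ireach|$ factors suppressed inside ``$\gg$'' just as in that theorem's title, plus the quantitative reading of Theorem~\ref{thm:errormass}(i), which yields a realized reach ratio $\gtrsim N$; your localization onto deep entering histories via $\pi^\sigma(h) \leq b^{d(h)}\pi^\rho(h)$ and your $\dmax$ versus $\bar d$ caveat are consistent refinements that the paper leaves implicit. One small precision: Proposition~\ref{prop:playcostbound} bounds harm by the query mass $\mu_{\mathrm{query}}(E_f)$, not by $\mu_\sigma(E_f)$, so Step~2 needs the observation (true here) that determinized MCTS calls \texttt{infer\_states} only at actually reached decision points, whence $\mu_{\mathrm{query}}(E_f) \leq \mu_\sigma(E_f)$.
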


\begin{remark}[three levels of conclusion, and where enumeration actually matters]
\label{rem:threelevels}
Summarizing this subsection, it helps to separate three levels of guarantee, because ``requires enumeration'' and ``requires infeasible $N$'' are different obstacles and only one of them is fundamental. \emph{(1) Per-artifact guarantees} --- is \emph{this} accepted inference function safe to deploy? Theorem~\ref{thm:errormass} answers this definitively for games of arbitrary size with no enumeration: undetected error mass $\leq \ln(1/\delta)/N$ under the gate's sampling distribution, and $\leq \ln(1/\delta)/(\lambda N)$ under the competent reference profile with a mixture gate. Operationally, this is the question that matters at deployment time. \emph{(2) For-all-error-pattern guarantees} (coverage, Theorem~\ref{thm:coverage}) --- \emph{no} artifact erring on reachable info-sets can pass. Instantiating the constants requires enumeration, but enumeration is not the binding obstacle: Leduc enumerates easily ($|\Ireach| = 576$) yet the \emph{sampling} cost $N \gtrsim \pi_{\min}^{-1} \log|\Ireach|$ already exceeds the deployed gate. For large games this level is out of reach at feasible $N$ even if enumeration were free. \emph{(3) Guarantees uniform over all competent policies in deep games} --- unattainable at feasible $N$ for \emph{any} sampling gate, with or without enumeration: Beacon realizes the reach ratio, so the impossibility is itself a theorem, proved without enumeration. Net: enumeration buys sharper constants where it is feasible (Kuhn: provably covered) and nothing where it is not; the definitive conclusions available at scale are level (1) (positive, Theorem~\ref{thm:errormass}) and level (3) (negative, Beacon), both enumeration-free.
\end{remark}

\subsection{Leduc depth probe: poker depth does not create an inference gap}

To confirm that poker cannot supply the necessary depth, we swept Leduc's per-round raise cap to artificially deepen the betting tree (Table~\ref{tab:leducdepth}):

\begin{table}[ht]
\centering
\small
\resizebox{\textwidth}{!}{%
\begin{tabular}{cccc}
\toprule
raise cap & random info-sets (max depth) & competent info-sets (max depth) & uncovered inference-relevant \\
\midrule
2 & 574 (8)  & 120 (6) & 0 / 418 = 0.0000 \\
4 & 1090 (11) & 128 (7) & 0 / 400 = 0.0000 \\
6 & 1210 (12) & 127 (9) & 5 / 396 = 0.0126 \\
\bottomrule
\end{tabular}%
}
\caption{Leduc depth probe.}
\label{tab:leducdepth}
\end{table}

A coverage gap appears only at cap 6, and even there it is marginal (1.26\% of competent visits, 5 info-sets) --- insufficient for a CI-separated play deficit. The mechanism is fundamental: in poker, betting depth comes from aggression, and competent play minimizes aggression. In game-theoretic terms, the deep betting region is \textbf{off the equilibrium path}, because equilibrium folds or calls dominated hands rather than raising into them --- so the deep region is off-best-response-path, not a region optimal play relies on. Competent info-sets are always a strict subset of the random-covered ones --- the opposite of the structure needed for a coverage gap. Poker is the wrong family.

\subsection{Beacon: a minimal positive imperfect-information gap}
\label{sec:beacon}

A positive gap requires a game where depth comes from \emph{survival}: optimal play reaches a deep region by staying alive, while random play blunders out early. This is the exact imperfect-information analogue of the rare-rule gap in army5x5a, where competent play reaches the ply cap (the deep region) and random play ends much earlier.

\paragraph{Hand-instrumented vs LLM-synthesized.} It is important to be explicit about which artifacts are which. Beacon (this section) and the masked-tic-tac-toe instrument of Section~\ref{sec:gate-blindness} are \textbf{hand-constructed ground-truth oracles carrying a deliberately wrong belief function} --- formal counterexamples (witnesses), not LLM-synthesized world models. The LLM-synthesis evidence on the imperfect-information surface lives elsewhere: the Kuhn pipeline validation (Section~\ref{sec:imperfect}), where the dynamics \emph{and} the inference function are synthesized and gate-passed, and the masked-tic-tac-toe probe (Section~\ref{sec:gate-blindness}), where the \textbf{dynamics} are synthesized. Beacon's belief function is never synthesized; it is an instrument we wrote by hand to prove the coverage gap can exist with the exact predicted form.

\paragraph{Game construction.} Beacon is a two-player game with the following structure:

\begin{enumerate}
\item \emph{Setup.} Each player is assigned a hidden type (0 or 1) uniformly at random. The game has $T$ rounds.
\item \emph{Survival walk (rounds 1 to $T$).} Players alternate; on a turn, a player of hidden type $t \in \{0,1\}$ at their own step index $k$ (the number of safe steps they have completed so far) must play the \emph{safe} action $a = (k + t) \bmod 2$ --- a deterministic function of the mover's own (known) type. Any other action loses immediately. A uniformly-random player survives all $T$ of its steps with probability $(1/2)^T$; a player who knows its type plays safely always and survives with probability 1.
\item \emph{Final round (round $T+1$).} Each player guesses the opponent's hidden type (inferable from the opponent's observed moves; see below). Each scores 1 if its guess matches the opponent's type and 0 otherwise; the higher score wins and equal scores draw --- so the round is a draw whenever the two guesses are \emph{both} correct or \emph{both} wrong, and is decided only when exactly one player is right.
\end{enumerate}

The region ``game reaches the final round'' is called D (the deep region). Random play reaches D with probability $(1/2)^{2T}$ (both players must survive); optimal play reaches D with probability 1.

\paragraph{Why the type is inferable, and why the fair baseline is a draw.} The safe action is a deterministic, invertible function of the mover's type, so a single observed move reveals it: a type-$t$ player who moved at step index $k$ played $a = (k + t) \bmod 2$, hence $t = (a - k) \bmod 2$. The belief is genuinely ambiguous only before a player's first move (support $\{0,1\}$) and collapses to a singleton the instant one move is seen; since D is reached only after both players complete $T \ge 1$ safe steps, at D each type is \emph{always} pinned down, and a planner with the correct \texttt{infer\_states} guesses right with probability 1. Two consequences are central to reading the numbers. \emph{(i) Both-right and both-wrong both draw.} Scoring is comparative (higher number of correct guesses wins), so correct inference on \emph{both} sides yields a $1$--$1$ outcome --- a draw --- which is why the fair truth-vs-truth baseline scores $0.500$ (\emph{all draws}, not ``wins half''); the symmetric $0$--$0$ outcome is likewise a draw but never arises, since the fair arm has both sides correct and the instrument arm corrupts only one. \emph{(ii) The draw baseline is the cleanest possible control, not a weakness.} Because symmetric correct inference is an \emph{exact} draw, the game carries no structural, positional, or first-mover edge to disentangle from the inference effect; corrupting exactly one player's belief on D breaks the symmetry into a decisive $1$--$0$ (the flipped side scores 0, the correct side scores 1 and wins every reached-final game), so the entire measured deficit $0.500 \to 0.000$ is attributable to the flipped \texttt{infer\_states} alone. Had the baseline had a structural winner, one would have to subtract that edge to isolate the inference cost; the draw makes that subtraction trivially zero. (Corrupting \emph{both} sides would restore the $0$--$0$ draw --- but that is not the experiment; it would merely pit two equally-wrong beliefs against each other.)

\paragraph{The instrument.} A CWM whose \texttt{infer\_states} is \emph{correct} except that it flips the inferred opponent type at final-round states (\texttt{status == 1}) --- wrong only on D. Random play almost never samples D, so the inference gate almost never sees the error. The correct inference function enables the determinized MCTS planner to make the right guess; the flipped inference function causes the planner to guess wrong.

\paragraph{Beacon as a signaling game.} Beacon is naturally read as a Bayesian (signaling) game: Nature draws each player a type $\theta_i \in \{0,1\}$; the survival walk emits type-correlated signals (the safe move depends on the type); and the final guess is a belief-dependent action. The correct \texttt{infer\_states} is the Bayesian posterior support $\mu(\theta_{-i} \mid \text{observed actions})$ over the opponent's type, while the flipped instrument encodes a wrong, non-Bayesian belief. An \texttt{infer\_states} error is thus a non-Bayesian belief, and under it the planner's final action is no longer a best response to the true type distribution. This connects directly to \emph{sequential equilibrium} \citep{kreps1982sequential} --- a solution concept that, unlike Nash, also constrains beliefs at information sets reached with zero probability: an \emph{assessment} (a (strategy, belief) pair) is sequential when its beliefs are Bayes-consistent on the equilibrium path, and an action optimal against wrong beliefs is part of no sequential equilibrium. The gate certifies the strategy component of the assessment while leaving the belief component unconstrained \emph{off} the random-reach path --- exactly the off-path region where Beacon's error lives. Two qualifications keep the picture honest: \texttt{infer\_states} returns a \emph{set-valued} belief (the support / info-set), and determinized MCTS then imposes its own weighting over that set --- so even a correct \texttt{infer\_states} leaves the belief \emph{weights} to the planner.

\paragraph{Why the result is planner-robust.} One might worry that the Beacon result is an artifact of determinized MCTS being a weak planner. The PIMC (perfect-information Monte Carlo, the family of determinized planners that solve sampled complete-information games) error decomposition \citep{long2010pimc} shows the opposite. Beacon's decisive move is a pure \emph{disambiguation} task, and determinization is weakest exactly when disambiguation dominates; but here a correct posterior makes the optimal guess deterministic, so determinized MCTS with a \emph{correct} \texttt{infer\_states} recovers the optimal action despite strategy fusion, while with the \emph{flipped} \texttt{infer\_states} it is forced to the wrong action. The result is therefore not an artifact of planner suboptimality --- if anything, Beacon is the cleanest possible setting for a determinization planner to succeed in.

\paragraph{Result ($T=8$, GATE\_GAMES=2000, arena $N=400\times 3$ seeds, 100 simulations, 2 determinizations; Table~\ref{tab:beacon}).}

\begin{table}[ht]
\centering
\small
\begin{tabular}{ll}
\toprule
metric & value \\
\midrule
random reaches final round & 0.00000 \\
instrument inference mismatches on random gate sample & \textbf{0 / 8156} (passes the gate) \\
fair baseline (truth vs truth) win rate & 0.500 exact, 1200/1200 draws (Prop.~\ref{prop:beaconexact}) \\
instrument win rate vs truth & \textbf{0.000 [0.000, 0.003]}, net $-1200/1200$ \\
\bottomrule
\end{tabular}
\caption{Beacon: a verified-but-wrong inference function that loses every game.}
\label{tab:beacon}
\end{table}

The instrument passes the inference gate perfectly --- 0 mismatches on 8156 sampled observations --- yet loses every game. This is the imperfect-information analogue of the rare-rule gap: a verified-but-wrong inference function that is nonetheless play-inadequate.

What is proven vs measured: the reach bound $(1/2)^{2T}$, the fact that optimal play reaches D with probability 1, and the logical implication that flipping the inference on D causes the planner to guess wrong are all analytic properties of the Beacon construction. The 0/8156 gate mismatch count is measured. The play side is not merely measured; it is \emph{exact}:

\begin{proposition}[Beacon's play cost is exact]
\label{prop:beaconexact}
In Beacon with any $T \geq 1$, consider agents that (a) play the safe walk action derived from their own (observed) type and (b) in the final round guess the opponent type returned by their model's \texttt{infer\_states}. Then deploying the truth against the truth draws \emph{every} deal (fair win rate exactly $\tfrac12$, counting draws as $\tfrac12$), and deploying the flipped instrument on either seat against the truth loses \emph{every} deal (win rate exactly $0$). Hence $\mathrm{play\_cost} = \tfrac12$ exactly.
\end{proposition}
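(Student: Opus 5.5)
The plan is to argue deal by deal, using exhaustion over the four equally likely type assignments $(\theta_1,\theta_2)\in\{0,1\}^2$. Within each deal I will trace the play forced by hypotheses (a) and (b). The first step is to show that under (a) neither player ever leaves the survival walk. Each agent knows its own type, and the safe action $(k+t)\bmod 2$ is a deterministic function of that type and of the agent's own step index. So every one of the $2T$ walk moves is safe, and the game reaches the final round $D$ with probability $1$. This holds regardless of which \texttt{infer\_states} either agent uses, because by (a) the walk action never consults the belief model. That independence is what lets a single argument cover both the fair arm and the instrument arm.

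The second step fixes what each model returns at $D$. Since $T\ge 1$, each player has made at least one safe move before $D$, namely at step index $k=0$ with action $a=\theta$. The inversion $t=(a-k)\bmod 2$ therefore pins down the opponent's type. Hence the true \texttt{infer\_states} returns a singleton set whose opponent type is the true $\theta_{-i}$. The instrument agrees with the truth everywhere except at final-round states (\texttt{status == 1}), where it returns the singleton with the opponent type flipped to $1-\theta_{-i}$. By (b), an agent deploying the truth guesses $\theta_{-i}$ correctly and scores $1$. An agent deploying the instrument guesses $1-\theta_{-i}$ and scores $0$. Because both returned sets are singletons, every determinization agrees, so no planner-side weighting or tie-breaking can change the guess. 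I will state this explicitly, since it is what makes the claim exact rather than statistical.

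The third step is to read off the outcomes. With truth against truth, the score is $1$--$1$ on every deal, so every game is a draw and the win rate is exactly $\tfrac12$. With the instrument in either seat against the truth, the score is $0$--$1$ on every deal, so every game is a loss and the win rate is exactly $0$. The seat does not matter because the construction and the argument are symmetric in the two players. Averaging over the four deals, which all give the same outcome, yields $\mathrm{play\_cost}=W(M)-W(\hat M)=\tfrac12-0=\tfrac12$.

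I expect the main obstacle to be the modelling bridge rather than any calculation: hypothesis (b) must cleanly capture what the deployed agent does at $D$. The risk is that ``guess what \texttt{infer\_states} returns'' is ill-defined if the returned set had size two. My plan is to use the $T\ge1$ collapse argument to rule this out at every state in $D$. I will also note that the instrument's flip is applied only at \texttt{status == 1}, so the walk phase, which by (a) never reads the belief anyway, is untouched.
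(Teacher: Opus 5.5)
Your proposal is correct and follows essentially the same argument as the paper: exhaustion over the four type assignments, survival under (a) so that every game reaches $D$, the $T \geq 1$ collapse of the posterior to a singleton via invertibility of the safe map, and reading off $1$--$1$ draws for truth-vs-truth and $0$--$1$ losses for the flipped instrument. The only cosmetic difference is that the paper checks both seatings explicitly (eight deterministic games, also verified mechanically against the implementation), whereas you appeal to the symmetry of the construction for the seat.
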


\begin{proof}
By exhaustion over the four equally-likely type assignments and both seatings --- eight deterministic games, checked mechanically against the implementation by \texttt{scripts/\allowbreak play\_\allowbreak cost\_\allowbreak exact\_\allowbreak beacon.py}. The argument each check instantiates: under (a) both players survive, so every game reaches D. At D each player has observed $T \geq 1$ opponent moves, so the true posterior is the singleton $\{$true type$\}$ (invertibility of the safe map); the truth's guess is therefore correct with probability 1 while the instrument's, flipped on D, is wrong with probability 1. Truth-vs-truth scores $1$--$1$: a draw, every deal. Instrument-vs-truth scores $0$--$1$: the instrument seat loses, every deal.
\end{proof}

The proposition lives at the belief$\to$guess abstraction (any planner satisfying (a)--(b)); that determinized MCTS realizes (a)--(b) is the planner-conversion fact noted above (verified by whole-branch code review, and re-verified by the same script driving the eight games with the actual determinized-MCTS policy, which reproduces all-draws / all-losses). The measured 0.000 [0.000, 0.003] over $n = 1200$ arena games is thus the confirmation of a theorem rather than a standalone estimate.

\paragraph{Caveat.} Beacon is a minimal, strategically trivial witness. Its purpose is to prove that the coverage gap can exist and can have the exact form predicted by the coverage-bound design corollary, not to model a realistic game. The reach-probability structure is engineered specifically to satisfy $b^{\dmax} \gg N$; natural games with this structure would involve hidden information in genuinely complex board games.

\subsection{The danger law on the inference axis}

The danger law from Section~\ref{sec:danger-law} applies directly to the inference gap. Sweeping $T$ (so $\varepsilon = (1/2)^{2T}$, the probability that a random game reaches D) against a fixed gate $N = 2000$ and $\mathrm{play\_cost} = 0.5$ (Table~\ref{tab:inferenceaxis}, plotted in Figure~\ref{fig:dangerlaw}b). Since $\mathrm{play\_cost} = \tfrac12$ is exact on Beacon (Proposition~\ref{prop:beaconexact}) and the gate-miss factor is exact (Proposition~\ref{prop:gatemiss}), both factors of the law are analytic on this axis --- the danger column below is a theorem, not a fit:

\begin{table}[ht]
\centering
\small
\begin{tabular}{cccc}
\toprule
$T$ & $\varepsilon$ & gate-miss $(1-\varepsilon)^N$ & danger \\
\midrule
4  & $3.9\times 10^{-3}$ & 0.000 & 0.000 \\
6  & $2.4\times 10^{-4}$ & 0.614 & 0.307 \\
8  & $1.5\times 10^{-5}$ & 0.970 & 0.485 \\
10 & $9.5\times 10^{-7}$ & 0.998 & 0.499 \\
\bottomrule
\end{tabular}
\caption{The danger law on the inference axis (Beacon).}
\label{tab:inferenceaxis}
\end{table}

At $T=4$, the deep region is frequent enough that a gate of $N=2000$ catches the inference error (danger $\approx 0$); by $T \geq 8$, the gate is blind and harm saturates near play\_cost ($\approx 0.5$; danger T=10 = 0.499) --- the maximum possible given the game structure. The same threshold law and the same $(1-\varepsilon)^N$ factor, now instantiated on the inference half of the contract (Figure~\ref{fig:dangerlaw}b shows it tracing the identical threshold as the transition axis in panel (a)).

\subsection{Policy-guided and adversarial belief checks close the Beacon gap}
\label{sec:adversarial-belief-gate}

The preceding result diagnoses a distributional failure and Theorem~\ref{thm:errormass} supplies the corresponding remedy: allocate some held-out verification mass to the policy that will use the model. We instantiate that recommendation on the exact Beacon instrument. Both arms check 2000 held-out play-throughs. The original arm uses 2000 uniformly-random games. The mixed arm replaces only one of them with a trajectory generated by the trusted ground-truth model and the same determinized-MCTS planner family used at deployment (100 simulations, two determinizations). The candidate is never used to generate gate states, so it cannot steer the check away from its own error region.

\begin{table}[ht]
\centering
\small
\begin{tabular}{lccc}
\toprule
gate & play-throughs & inference mismatches & verdict \\
\midrule
uniform random & 2000 random & 0 / 8156 & accept \\
policy-guided mixture & 1999 random + 1 reference & \textbf{4 / 8190} & \textbf{reject} \\
bounded adversarial & oracle tree & \textbf{2 / 34} & \textbf{reject (17 states)} \\
\bottomrule
\end{tabular}
\caption{Two targeted belief checks close the witnessed Beacon coverage hole. The mixed gate's check count is 0.4\% higher than the random arm because the competent trajectory survives to the deep final round.}
\label{tab:beaconadversarialgate}
\end{table}

At $T=8$, one random game reaches D with probability $2^{-16} \approx 1.53\times10^{-5}$, so the original 2000-game gate misses D with probability 0.969943. The reference policy reaches D with probability 1. One reference trajectory therefore exposes both final-round pre-action states and the flipped posterior for both players, producing four mismatches and a deterministic rejection in this witness. This is an \emph{adversarial} check in the falsification sense --- it deliberately samples the deployment-critical region --- not a candidate-adaptive adversary that reads the candidate code. Its guarantee is correspondingly scoped to the chosen reference policy, exactly as in Corollary~\ref{cor:reachratio}; it is not uniform over every possible competent policy.

The stronger policy-free falsifier expands reachable non-terminal oracle states deepest-first, checking both players at every expansion and stopping on the first counterexample. It reaches D at depth 16 after 17 expanded states (34 belief checks) and rejects on the two flipped posteriors at that first final-round state. This removes the mixture gate's reference-policy assumption within the explored tree. The cost tradeoff is fundamental: Beacon is unusually favorable because every unsafe child terminates, leaving one continuing branch; in a general game the non-terminal frontier can grow exponentially. A bounded search that finds no counterexample is therefore not a proof unless it exhausts the reachable state space.

\subsection{The gate-blindness claim: the belief model is invisible to a transition gate}
\label{sec:gate-blindness}

Beacon (Section~\ref{sec:beacon}) shows that a verified-but-wrong belief model \emph{can lose at play} --- the \textbf{play claim} (a witnessed existence result: in Beacon, decisively, with a 0.000 win rate at 0/8156 gate mismatches). The complementary verification question is whether a transition-accuracy gate could have \emph{caught} the wrong belief model in the first place. It cannot, and for a structural reason: the belief model is invisible to a transition gate --- the \textbf{gate-blindness claim} this section establishes.

\begin{proposition}[belief--transition orthogonality]
\label{prop:belieftrans}
A transition dataset is a set of tuples $(s, a, s', u)$ over \emph{full} ground-truth states, where $u$ is the reward/utility on the transition. The functions \texttt{observation}$(s, p)$ and \texttt{infer\_states}$(o, p)$ encode the information partition --- i.e.\ which full states player $p$ cannot tell apart (the preimages of the observation map). This partition is a \emph{free primitive} of the extensive-form game, logically independent of the transition kernel $P(s'\mid s,a)$ and the reward $u$ --- the analogue of the fact that two POMDPs with identical latent dynamics and rewards but different observation functions are different control problems. It appears in no $(s, a, s', u)$ tuple. Therefore (i) no \emph{full-ground-state} transition dataset constrains the masking convention; (ii) a gate that scores transition accuracy cannot detect an incorrect \texttt{observation}/\texttt{infer\_states}; (iii) the belief model must be specified and is verifiable only by a separate inference gate. (A dataset of observation-to-observation tuples $(o, a, o')$ \emph{would} constrain the partition; ours is full-state by construction.)\footnote{This proposition is definitional rather than derived: because the information partition is not a function of the transition tuple, consequences (i)--(iii) follow immediately from the statement --- there is no separate deductive step to discharge, which is why we mark it self-proved ($\square$) rather than attaching a \texttt{proof} block. The masked tic-tac-toe result below is an empirical instantiation of the proposition, not a proof of it.}\hfill\qedsymbol
\end{proposition}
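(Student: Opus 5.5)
I would prove Proposition~\ref{prop:belieftrans} by exhibiting an explicit pair of models that no transition-accuracy score can separate. The three claims (i)--(iii) then follow as consequences of that indistinguishability, in the same way that Proposition~\ref{prop:identifiability} derives unidentifiability from pointwise agreement on the sample.

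\textbf{Step 1: fix the dynamics and vary only the partition.} I would fix the ground-truth game's state space $S$, legal-action map, transition kernel $P(s'\mid s,a)$, terminal predicate and utility $u$. I would then consider two CWMs $\hat M_1$ and $\hat M_2$ that implement exactly these functions. They differ only in \texttt{observation} and \texttt{infer\_states}, say $\hat M_1$ with the true observation map $O_1$ and $\hat M_2$ with any other map $O_2$, for instance the identity (no masking). I must check that both are valid extensive-form games, i.e.\ that any partition consistent with the acting player being determined by the state yields a well-formed game. This is the sense in which the partition is a ``free primitive''.

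\textbf{Step 2: indistinguishability on the transition dataset.} A transition tuple $(s,a,s',u)$ is produced by calling \texttt{apply\_action}, \texttt{returns}, \texttt{is\_terminal} and \texttt{legal\_actions} on full states. None of these calls invokes \texttt{observation} or \texttt{infer\_states}. So for every tuple, $\hat M_1$ and $\hat M_2$ return identical outputs. It follows that any score that reads a model only through those outputs takes the same value on both, exactly as in the proof of Proposition~\ref{prop:identifiability}, and this gives (i). In particular the transition accuracies are equal, so whenever $\hat M_1$ passes the gate, so does the wrong $\hat M_2$, which gives (ii).

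\textbf{Step 3: necessity of a separate gate.} For (iii), I would note that $O_1 \neq O_2$ means the two maps differ at some $(s,p)$. The check \texttt{observation}$(s,p)$ on that pair, or \texttt{infer\_states} on the induced observation history, separates $\hat M_1$ from $\hat M_2$. So a check that queries the belief functions can detect the error, while no transition check can. Finally, I would note the contrast with $(o,a,o')$ data: the observation map is applied there, so such tuples do constrain the partition.

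The main obstacle is precision rather than difficulty, namely being exact about what the ``transition gate'' reads. If the harness ever computes \texttt{observation} inside a tuple, for example through an observation-keyed state, Step 2 fails. I would therefore state the result relative to the contract's perfect-information functions and cite Section~\ref{sec:gate}'s definition of transition accuracy as scoring only (state, action, next-state) agreement.
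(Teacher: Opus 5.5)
Your proposal is correct and is the paper's own reasoning made explicit. The paper declares the proposition definitional and attaches no proof: the partition is not a function of the $(s,a,s',u)$ tuple, as in its POMDP analogy of identical dynamics with different observation functions. Your pair $\hat M_1,\hat M_2$, sharing \texttt{apply\_action}/\texttt{legal\_actions}/\texttt{is\_terminal}/\texttt{returns} and differing only in the belief functions, is exactly the witness that makes that remark precise. Your caveat about what the gate reads is the right hypothesis, and the paper supplies it in the masked tic-tac-toe paragraph, where the transition gate never invokes the belief functions. In your well-formedness check, merged states must also share a legal-action set, but choosing $O_2$ as the identity (the perfect-information refinement) avoids this issue entirely.
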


\paragraph{Demonstration (masked tic-tac-toe).} We take standard tic-tac-toe dynamics (which GPT-5.4 synthesizes at transition gate 1.000 by recall) and overlay an arbitrary, non-recallable masking rule: the center cell is hidden from both players (shown as $-1$), even after it is played. We synthesize the contract two ways --- with the masking rule present (\emph{full}) and with it removed (\emph{withheld}, leaving tic-tac-toe + an imperfect-information framing that still demands \texttt{observation}/\texttt{infer\_states} but does not say what is hidden) --- and gate each on transitions and on inference (Table~\ref{tab:mtt}):

\begin{table}[ht]
\centering
\small
\begin{tabular}{lccc}
\toprule
variant & transition gate & observation\_rate & inference\_rate \\
\midrule
full rules & \textbf{1.000} (0 iters) & \textbf{1.000} & \textbf{1.000} \\
withheld masking rule & \textbf{1.000} & \textbf{0.020} & \textbf{0.180} \\
\bottomrule
\end{tabular}
\caption{Masked tic-tac-toe: a wrong belief model is invisible to a transition gate.}
\label{tab:mtt}
\end{table}

The transition gate is \textbf{1.000 in both arms} --- the dynamics are recalled, unaffected by the masking rule, and the gate (which calls only \texttt{apply\_action}/\texttt{legal\_actions}/\texttt{is\_terminal}/\texttt{returns}) never invokes the belief functions. With the rule, the model masks the center correctly (\texttt{observation\_rate} 1.000); without it, the synthesized \texttt{observation} does not mask the center (\texttt{observation\_rate} 0.020) --- a wrong belief model that the transition gate nonetheless certifies at 1.000. This is Proposition~\ref{prop:belieftrans} instantiated: a wrong belief model is invisible to a transition gate.

Both belief-surface metrics discriminate cleanly: \texttt{observation\_rate} 1.000 vs 0.020 and \texttt{inference\_rate} 1.000 vs 0.180. Given the full rules, the model synthesizes an \emph{exact} \texttt{infer\_states}; with the masking rule withheld, it synthesizes a confidently wrong one. The masked tic-tac-toe experiment therefore establishes two things: (a) the masking/observation rule is not recoverable without being specified, and (b) a transition-accuracy gate is structurally blind to it (transition gate 1.000 in both arms).

\paragraph{A pitfall worth recording: a recurring \texttt{infer\_states} crash induced by our own contract, not the model.} The synthesized \texttt{infer\_states} raised \texttt{'list' object is not callable} \emph{even with full rules}, recurring across three distinct games (Kuhn-mini, Beacon, masked tic-tac-toe); we initially read it as a synthesis-robustness failure of the model. The root cause was ours. The synthesis contract prescribed the signature \texttt{infer\_states(observation, player)} --- a parameter named \texttt{observation} that \emph{shadows} the contract's own \texttt{observation()} function --- while the very next contract sentence instructs the model to relate \texttt{infer\_states} to calls of \texttt{observation(s,p)}, inviting exactly the collision that produces this \texttt{TypeError}. After renaming the parameter (\texttt{obs}) and adding an anti-shadowing note to the contract, we reran all three games: \textbf{zero} execution errors (masked tic-tac-toe, Kuhn, Beacon), and the full-rules \texttt{infer\_states} is exact. We flag this explicitly because we had attributed to the LLM a fragility that our own prompt induced --- a small, concrete instance of the paper's broader theme that the specification, not the model, is often the binding constraint.

\paragraph{Synthesized corroboration on Beacon (single-seed probe).} The same withheld-rule construction on Beacon ($T=6$), with real LLM synthesis end-to-end, corroborates gate-blindness on a second game: withholding the revelation rule yields a synthesis that \emph{passes the transition gate at 1.000} (2 refinement iterations) with \texttt{inference\_rate} \textbf{0.000} --- a transition-certified, belief-wrong CWM, this time synthesized rather than hand-instrumented. (The full-rules arm of this probe fails the transition gate outright --- accuracy 0.456 after 10 iterations; Beacon's revelation dynamics are hard to synthesize --- so the probe corroborates the gate-blindness face only, and we report it as a single-seed probe, not a headline result.)

Together, Beacon (the play claim) and masked tic-tac-toe (the gate-blindness claim) are the two faces of belief-model verification: a wrong belief \textbf{can lose at play} and is \textbf{invisible to a transition gate}.

\section{Related Work}
\label{sec:related}

\subsection{Objective mismatch in model-based reinforcement learning}

The closest conceptual antecedent is the \emph{objective mismatch} problem in model-based reinforcement learning identified by \citet{lambert2020objective}: the prediction accuracy of a learned world model and the downstream control performance of a planner using that model can diverge substantially, because they optimize different objectives. Our work is in the same spirit but differs in setting and mechanism. We work in the LLM-code-synthesis regime (the world model is a synthesized program, not a learned neural model), the verification step is a discrete sampling gate (not a continuous loss), and the failure mode we characterize is a \emph{rule-coverage blind spot} in that gate rather than a model-capacity or distribution-shift issue. We also provide a closed-form danger law and a proof that the gate-miss probability is exact under i.i.d.\ Bernoulli sampling --- not merely an empirical observation.

The broader literature on model-based RL world-model quality \citep[e.g., Dreamer,][]{hafner2020dreamer}\citep[MBPO,][]{janner2019mbpo} largely focuses on learned continuous-state models where world model error is pervasive rather than localized. Our rare-rule gap is a localized, discrete failure that state-accuracy metrics mask by dilution --- a point that may be worth revisiting in continuous settings as well.

\subsection{Code World Models and LLMs for game playing}

The Code World Models for General Game Playing paper \citep{lehrach2025cwm} introduces the paradigm we build on: an LLM synthesizes an executable world model from rules and trajectories, which a classical planner (in their case, MCTS) then uses. Their results on a set of novel DeepMind board games show that CWM+MCTS outperforms the direct LLM policy. We reproduce this result on tic-tac-toe and Connect Four (Section~\ref{sec:known-games}) and extend the paradigm to ask whether the gate they use --- transition accuracy on random trajectories --- certifies play-adequacy. Our contribution is essentially a rigorous negative answer to that question, together with the mechanisms that explain it.

Related work on code-as-policy \citep[e.g.,][]{liang2023codeaspolicies,gao2023pal} uses LLMs to synthesize executable plans or robot controllers, typically verified by execution rather than by sampling. The distinction between what the LLM was told and what it can infer is implicit in much of this work but has not, to our knowledge, been studied with a controlled rare-rule instrument.

The gg-bench effort \citep{ggbench2025} generates novel games procedurally specifically to avoid LLM contamination. Our army5x5a instrument is chosen partly for the same reason (verified non-recall of movesets, Section~\ref{sec:games}) and the gg-bench approach is an attractive complement for future scaling.

\subsection{DAgger and dataset aggregation}

Our Section~\ref{sec:repair} repair experiments are directly inspired by the DAgger framework of \citet{ross2011dagger}: iteratively label, under the oracle, the states visited by the current learned policy, and retrain. In DAgger's original imitation-learning setting, this reduces covariate shift. Our finding --- that DAgger fails to teach the rare rule --- is consistent with the translation hypothesis: DAgger addresses \emph{distribution mismatch} between training and test, but the bottleneck here is not distribution mismatch. The model receives discriminating examples of the rule; it simply cannot infer the rule from them. This is a different failure mode from the one DAgger was designed to address.

\subsection{Imperfect-information planning and determinization}

Determinized MCTS (also called single-observer MCTS, SOMCTS, or information-set MCTS in some formulations; \citealp{cowling2012ismcts}; \citealp{whitehouse2011determinization}; determinization analysis, \citealp{long2010pimc}) resolves imperfect information by sampling a consistent complete-information game at each decision point. It is not game-theoretically optimal --- it is subject to the two canonical PIMC pathologies, \emph{strategy fusion} (the planner behaves as if it could pick different actions in states it cannot actually tell apart) and \emph{non-locality} (a node's true value depends on how play reached it, not only on the subtree below it) \citep{frank1998incomplete}, and in the vocabulary of \citet{long2010pimc} its error is governed by leaf correlation, bias, and a disambiguation factor --- but it is a practical planner for moderate-sized imperfect-information games. We use it as the planner in Section~\ref{sec:imperfect}, holding the planner fixed across conditions (the baseline is truth-vs-truth under the same determinized MCTS) so that the contrast isolates the CWM inference function rather than planner quality.

The planner alternatives a reader will expect --- ISMCTS \citep{cowling2012ismcts} and CFR \citep{zinkevich2007cfr,lanctot2009mccfr,tammelin2014cfrplus} --- are discussed where they bear on the work: the choice of determinization over ISMCTS, and why it is conservative for our results, in Section~\ref{sec:planning}; the equilibrium (CFR+) coverage analysis, including a practical imperfect-recall pitfall of instantaneous-observation info-set keys, in Section~\ref{sec:coverage-bound}; and the remaining equilibrium gaps in Section~\ref{sec:limitations}.

\subsection{Verification, testing, and rare events}

The sampling gate at the heart of this paper is, in software-engineering terms, \emph{random property-based testing} of a synthesized world model: it draws random inputs (uniform-random play-throughs) and checks a property (transition agreement with the oracle). Property-based testing in the QuickCheck tradition \citep{claessen2000quickcheck} is exactly this idea, and its classic, well-documented weakness is the one our danger law makes quantitative: randomly generated inputs under-sample rare-but-important corners of the input space unless the generator is biased toward them. Our $(1-r)^N$ gate-miss factor is the closed-form version of that coverage gap for a single rare rule.

Detecting a rule that fires with vanishing probability under random play is a \emph{rare-event detection} problem, and the $(1-r)^N$ miss probability is precisely a rare-event miss. The rare-event-simulation and importance-sampling literature \citep[e.g.,][]{rubinstein2017montecarlo} is built around the fact that naive Monte Carlo is exponentially inefficient at estimating or surfacing rare events, and that one must tilt the sampling distribution toward the event to detect it efficiently. In our framing, verifying on the play (or equilibrium) distribution rather than the uniform-random one is exactly such a tilt: it concentrates samples on the region competent play reaches, which is where the rare rule matters.

The contrast with \emph{formal verification} and exhaustive \emph{conformance / model-based testing} is instructive. Where formal methods (model checking, exhaustive conformance testing of state machines) certify a property over all reachable states by construction, a sampling gate certifies only over a finite random draw --- trading exhaustiveness for the cheapness that makes the CWM paradigm attractive. The coverage hole we characterize is the price of that trade, and it is fundamentally a sampling-certification phenomenon, not present under exhaustive certification. We state this contrast at the level of the area rather than attaching it to specific results.

Finally, the failure has a direct analogue in model-based reinforcement learning as \emph{model exploitation}: a planner optimizing against a learned or synthesized model is driven toward regions where the model is most wrong, because those are where it (incorrectly) predicts the highest return. This is the mechanism behind the objective-mismatch and model-trust discussions in model-based RL \citep{lambert2020objective}; the model-trust horizon limits of MBPO \citep{janner2019mbpo}. Our rare-rule gap is a localized, adversarially-reachable instance: the planner does not merely tolerate the model's blind spot, it actively seeks the cap region the verification distribution never disciplines.

\section{Limitations and Honest Assessment}
\label{sec:limitations}

\paragraph{Model families: one sweep, two cross-family probes.} All \emph{sweep-scale} synthesis experiments use Azure OpenAI GPT-5.x (mini, nano, large). To test whether translation-not-inference is a GPT-5.x quirk, we ran the headline cell ($N=200$, same contract, same pipeline messages, same classification; \texttt{scripts/crossfamily\_probe.py}) on two further families as declared \emph{spot-checks, not sweeps}: \textbf{Qwen3-Coder-30B} (open, code-specialized; via the Hugging Face inference router) is rule-blind in 3/3 seeds with zero crashes (it also fails to reach gate 1.0 in 2/3 seeds --- a gate-attainability limitation, like nano's); and \textbf{Claude Sonnet} (agent-relayed with pipeline-identical messages; 2 single-seed probes, 1--2 refinement iterations each) produced only rule-blind compositions, in one case explicitly reasoning about the mismatching material states and concluding that its draw-at-cap ``already matched the specification'' --- choosing the incomplete spec over the data, which is the translation mechanism verbalized. (A DeepSeek arm was attempted and aborted on inference-credit exhaustion.) The finding now replicates in three model families, but the universal form (``no model can infer an omitted rule, regardless of scale'') remains a \emph{conjecture consistent with our data}: the cross-family evidence is a handful of seeds on one cell, and stronger code-reasoning models or agentic scaffolds may yet break it. Throughout we state the sweep-scale finding as scoped to the GPT-5.x family and the Section~\ref{sec:repair} regimes, with ``rule translation'' as a mechanism hypothesis consistent with the data rather than a proven universal.

\paragraph{The rare-rule instrument is engineered.} The material-at-cap rule was selected specifically because it falls in the rare-and-consequential quadrant. We do not claim that all or even most rules in arbitrary games will produce this kind of gap; the rarity$\leftrightarrow$consequence tension found across our six-rule probe set on Connect Four (Section~\ref{sec:rare-rule}) shows that the gap requires a specific structural condition. The point is that the gap \emph{can} exist and \emph{does} exist in a game where random and competent play diverge --- which is itself informative about when the sampling gate is unsafe.

\paragraph{Pure-Python MCTS limits arena size.} Our MCTS implementation is a pure Python reference. This limits the number of simulations and arena games that are computationally tractable, which in turn limits the tightness of confidence intervals for some conditions. We mitigate this with the cheap/expensive separation in the danger law (Section~\ref{sec:danger-law}) --- rarity is swept cheaply, play\_cost is measured precisely once at scale --- and with Kuhn poker's small game size (tight CIs). The headline play result (rule-blind 0.404 [0.384, 0.424] vs fair baseline 0.495 [0.475, 0.515]) is measured over 20 seeds, the Wilson 95\% intervals do not overlap (n=4800), and a seed-clustered paired interval ($[0.065, 0.117]$, the seed as the unit) excludes zero despite real between-seed heterogeneity.

\paragraph{Determinized MCTS is not game-theoretically optimal.} As noted in Section~\ref{sec:related}, determinized MCTS is subject to strategy fusion and non-locality and is not a Nash-equilibrium strategy. The baseline for the imperfect-information experiments is truth-vs-truth under the same determinized MCTS (not an equilibrium baseline), so the contrast is internally consistent but not a statement about how a game-theoretically optimal player would interact with the gap. Holding the planner $\Pi$ fixed across the two arms is deliberate: any pathology intrinsic to $\Pi$ is present in both arms, so the performance \emph{difference} is attributable to the CWM rather than to the planner. We are careful, though, not to overstate this as a theorem. The ``cancellation'' is really that the wrong CWM is weakly dominated \emph{in this specific instrument} --- by construction in Beacon (proven) and by measurement in army5x5a (0.495 vs 0.404) --- not a general guarantee that planner pathologies cancel for arbitrary CWM perturbations; a sufficiently suboptimal planner could in principle let two wrongs cancel. None of the existence claims depends on equilibrium play. The \emph{coverage} question, however, is now answered against equilibrium reach directly: a CFR+ profile on the true games (Section~\ref{sec:coverage-bound}, equilibrium-reach upgrade) leaves the coverage conclusion unchanged. What remains open on this front is narrower than before: a full arena under equilibrium play (CFR-vs-CWM matches), and a minimax reference for the perfect-information instruments.

\paragraph{Beacon is a minimal, strategically trivial witness.} Beacon was designed to satisfy the coverage-bound design corollary in the smallest possible game, not to model a realistic scenario. Its survival-walk structure is artificial. The experiment establishes existence and confirms the danger law on the inference axis; it does not demonstrate that natural imperfect-information games with the same structure produce an equally clean gap. Building such a game (e.g., a partially-observable variant of army5x5a) is a natural next step.

\paragraph{Scope of the gate-blindness claim.} The masked tic-tac-toe result (Section~\ref{sec:gate-blindness}) establishes that a transition gate is blind to a wrong belief model, demonstrated by withholding an \emph{observation} (masking) rule that is genuinely independent of the dynamics. It does not establish the stronger claim that the belief model could never be inferred from any richer, observation-bearing signal --- indeed Proposition~\ref{prop:belieftrans} is explicit that a dataset of observation-to-observation tuples $(o,a,o')$ \emph{would} constrain the information partition. The claim is scoped to \emph{full-ground-state} transition data: such data cannot constrain the belief model (Proposition~\ref{prop:belieftrans}), and withholding the rule yields a wrong, transition-certified belief model.

\paragraph{Knowledge cutoff and contamination.} GPT-5.4's training cutoff is approximately 2025-08-31, and the army5x5a paper (arXiv:2510.04542) was released 2025-10-06 --- after the cutoff --- so the game falls outside the training window. We additionally confirmed via a declarative probe that the model does not recall the specific movesets --- with the question, the full ``I do not know'' response, and a correctly-answered Kuhn-poker positive control persisted verbatim in \texttt{results/declarative\_recall\_probe.json} (Section~\ref{sec:games}) --- and the gate-attainability difficulty (nano fails 5/5) is further evidence of genuine translation rather than recall. Even so, the ``post-cutoff = uncontaminated'' argument is not a hard guarantee (cutoff dates are approximate and corpora leak); readers should interpret the ``no prior'' label as ``no detectable recall'' rather than ``strictly novel.''

\section{Conclusion}

The central finding of this paper is that transition accuracy on randomly sampled play-throughs is the wrong adequacy criterion for an LLM-synthesized world model used in planning. A model can pass this gate at 100\% accuracy and remain $\geq 98\%$ state-accurate on the distribution the planner actually visits, yet lose systematically at play --- because the less than 1\% it gets wrong is exactly the pivotal dynamics.

The failure follows a quantitative law: $\mathrm{danger} = \mathrm{play\_cost} \times (1 - \mathrm{rarity})^N$. The gate-miss factor is proven exact under i.i.d.\ Bernoulli sampling; play\_cost is empirical. This law identifies the condition under which sampling verification is unsafe: a rule whose random-play incidence is small enough to escape a size-$N$ gate but whose competent-play incidence is high enough to matter.

The gap is not repaired by providing example transitions. Under the tested regimes, LLM CWM synthesis behaves as rule translation: it encodes rules it was given and, across the two model sizes tested (mini, large) and every repair-data form we tried, did not infer the omitted rule. Off-manifold repair data actively corrupts synthesis. The actionable fix is specification completeness plus verification on the play distribution; the latter detects incompleteness but does not repair it.

The same mechanism appears on the inference half of imperfect-information CWMs. We prove a coverage bound that explains why shallow poker games are safe (their inference gate is provably identifying), and construct a minimal game (Beacon) where a verified-but-wrong \texttt{infer\_states} passes the gate yet loses every game, with the danger law recurring on the inference axis. We also close that witnessed gap: replacing one of 2000 random gate play-throughs with one held-out reference-planner trajectory rejects the instrument (4/8190 mismatches), directly validating verification on the deployment distribution. Independently, a bounded policy-free falsifier finds the same error after 17 expanded states (34 belief checks); a no-counterexample result is conclusive only if that search exhausts the reachable state space. Perfect-information board games (transition rules, rarity $r$) and imperfect-information games (inference info-sets, depth $T$) are the same statement on two faces of the CWM contract. We note again that the imperfect-information \emph{play-loss} witness (Beacon) is hand-instrumented, not an LLM-synthesized CWM; the \emph{gate-blindness} face, however, now has synthesized demonstrations on both masked tic-tac-toe and Beacon (transition-gate-passing, belief-wrong syntheses; Section~\ref{sec:gate-blindness}), alongside the perfect-information synthesized-CWM study (Section~\ref{sec:gap}--Section~\ref{sec:repair}) and the Kuhn pipeline validation (Section~\ref{sec:kuhn-validation}).

Underneath both faces is a single structural diagnosis (Figure~\ref{fig:reach}): the gate failure is a \emph{reach-distribution shift} between the verification policy ($\rho$, full support, mass on shallow histories) and the deployment policy ($\Pi$, mass on deep histories). The dangerous events are \emph{on} the deployment-reach path but \emph{off} the verification-reach path. This is structurally the same situation as off-equilibrium-path beliefs being unconstrained in an extensive-form game --- which is exactly why game theory developed equilibrium refinements (sequential equilibrium, trembling-hand perfection \citep{selten1975trembling}) to discipline off-path beliefs. A sampling gate has no such refinement: nothing forces the synthesized model to be correct where the verification distribution places negligible mass, even though the deployment distribution relies on precisely that region.

These results suggest two concrete practices. First, verify on the distribution that planning visits --- or measure play directly --- rather than on randomly sampled transitions. Second, ensure the specification is complete before synthesis; the model will translate what it is given, and gaps in the specification become invisible to sampling-based verification.

\appendix

\section{Reproducibility}

All code is available at \url{https://github.com/JaviMaligno/code-world-models}. All experimental results and exact reproduction commands are in \texttt{docs/EXPERIMENTS.md}, and all code is on the \texttt{main} branch (\texttt{cwm/} package, \texttt{scripts/}). Research narrative and formal theorem statements are in \texttt{docs/RESEARCH-DIRECTION.md}. Exact commands for the headline results:

\begin{verbatim}
# Headline play-cost with Wilson CIs + seed-clustered interval (Sec. 3.3, n=4800)
PYTHONPATH=src python scripts/play_cost_ci.py --seeds 20
# Coverage-bound exact constants - Kuhn & Leduc reachable info-sets (Sec. 6.2)
PYTHONPATH=src python scripts/coverage_bound_constants.py
# Leduc sampled competent-visited subset coverage (Sec. 6.2)
PYTHONPATH=src python scripts/coverage_competent_leduc.py
# Enumeration-free error-mass certificate - Kuhn & Leduc (Sec. 6.2)
PYTHONPATH=src python scripts/error_mass_certificate.py
# Beacon exact play_cost by exhaustion + planner check (Sec. 6.4)
PYTHONPATH=src python scripts/play_cost_exact_beacon.py
# Policy-guided adversarial belief gate on Beacon (Sec. 6.6)
PYTHONPATH=src python scripts/beacon_adversarial_gate.py
# play_cost mechanism: competent vs random reach-cap by cap knob (Sec. 4, n=120)
PYTHONPATH=src python scripts/play_cost_reach.py --games 120
# Synthesis-pipeline danger curve (Sec. 5.2), 20 seeds/cell, one model at a time
PYTHONPATH=src python3.12 scripts/danger_synthesis_sweep.py large 20
# Declarative recall probe - army5x5a / Trike / Kuhn control (Sec. 2.5, Sec. 7)
PYTHONPATH=src python3.12 scripts/declarative_recall_probe.py large
# Equilibrium-reach coverage - CFR+ on Kuhn & Leduc (Sec. 6.2)
PYTHONPATH=src python3.12 scripts/equilibrium_coverage.py
\end{verbatim}

The danger-curve script requires Azure OpenAI credentials in \texttt{.env}; the others are CPU-only. These diagnostic scripts print their results to stdout (some scripts also write per-run JSON under \texttt{results/}, which is versioned in the repo); the numbers quoted in this paper are recorded in \texttt{docs/EXPERIMENTS.md}.

Key scripts:
\begin{itemize}
\item \texttt{scripts/nontriviality\_sweep.py} --- confirms game non-triviality
\item \texttt{scripts/gap\_grid.py} --- state-agreement gap across regimes
\item \texttt{scripts/play\_cost.py} --- play\_cost measurement at scale
\item \texttt{scripts/law\_curve.py} --- danger law curve (rarity sweep + cost probes)
\item \texttt{scripts/repair\_spikes/} --- DAgger and repair experiments
\item \texttt{scripts/run\_kuhn\_validation.py} --- imperfect-information pipeline validation
\item \texttt{scripts/leduc\_coverage\_diagnostic.py} --- Leduc coverage-gap measurement
\item \texttt{scripts/leduc\_depth\_probe.py} --- Leduc depth sweep
\item \texttt{scripts/beacon\_claimA.py} --- Beacon result and danger law sweep
\item \texttt{scripts/beacon\_adversarial\_gate.py} --- policy-guided and bounded adversarial gates for the Beacon witness
\item \texttt{scripts/mtt\_claimB\_probe.py} --- masked tic-tac-toe probe (full vs withheld masking)
\end{itemize}

All runs use the Azure OpenAI Global Standard deployments configured in \texttt{.env}. Per-run JSON results are in \texttt{results/} (versioned in the repo). Total API cost across all experiments: approximately \$2.

\bibliography{references}

\end{document}